\newcommand{\Nystrom}[1]{{Nystr\"om}}
\providecommand{\scal}[2]{\left\langle{#1},{#2}\right\rangle}
\providecommand{\nor}[1]{\lVert{#1}\rVert}
\providecommand{\tr}{\operatorname{Tr}}
\DeclareMathOperator*{\esssup}{ess\,sup}
\newcommand{\R}{\mathbb R}
\newcommand{\CC}{\R}
\newcommand{\N}{\mathbb N}
\newcommand{\hh}{\mathcal H}
\newcommand{\g}{\mathcal G}
\newcommand{\kk}{\mathcal K}
\newcommand{\la}{\lambda}
\newcommand{\lspanc}[2]{\overline{\operatorname{span}\{#1~|~#2\}}}
\newcommand{\argmin}[1]{\mathop{\operatorname{argmin}}_{#1}}
\newcommand{\expect}[1]{{\mathbb E}[#1]}
\newcommand{\X}{{X}}
\newcommand{\EE}{{\mathcal E}}
\newcommand{\rhox}{{\rho_{\X}}}
\newcommand{\Ltwo}{{L^2(\X,\rhox)}}
\renewcommand{\L}{{L}}
\newcommand{\K}{K}
\newcommand{\gK}{{\bf \K}}
\newcommand{\yn}{\widehat{y}}
\newcommand{\tK}{K_M}
\newcommand{\tL}{L_M}
\newcommand{\tC}{C_M}
\newcommand{\tCn}{\widehat{C}_M}
\newcommand{\tCl}{C_{M,\la}}
\newcommand{\tCnl}{\widehat{C}_{M,\la}}
\newcommand{\tS}{S_M}
\newcommand{\tSn}{\widehat{S}_M}
\newcommand{\tLl}{L_{M,\la}}
\newcommand{\tkappa}{\kappa}
\newcommand{\Ll}{L_\la}
\newcommand{\fh}{f_\hh}
\newcommand{\frho}{f_\rho}
\newcommand{\bzeta}{{\boldsymbol{z}}}
\newcommand{\bomega}{{\boldsymbol{\omega}}}
\newcommand{\Pzeta}{\mathbb{P}_Z}
\newcommand{\Pomega}{\mathbb{P}_\Omega}
\newcommand{\PP}{\mathbb{P}}
\newcommand{\eqals}[1]{\begin{align*}#1\end{align*}}
\newcommand{\eqal}[1]{\begin{align}#1\end{align}}
\newcommand{\bpr}{\begin{proof}}
\newcommand{\epr}{\end{proof}}
\newcommand{\be}{\begin{equation}}
\newcommand{\ee}{\end{equation}}
\newtheorem{definition}{Definition}
\newcommand{\bd}{\begin{definition}}
\newcommand{\ed}{\end{definition}}
\newcommand{\bi}{\begin{itemize}}
\newcommand{\ei}{\end{itemize}}
\newtheorem{ass}{Assumption}
\newcommand{\ba}{\begin{ass}}
\newcommand{\ea}{\end{ass}}
\newtheorem{remark}{Remark}
\newcommand{\br}{\begin{remark}}
\newcommand{\er}{\end{remark}}
\newtheorem{example}{Example}
\newcommand{\bex}{\begin{example}}
\newcommand{\eex}{\end{example}}
\newtheorem{proposition}{Proposition}
\newcommand{\bp}{\begin{proposition}}
\newcommand{\ep}{\end{proposition}}
\newtheorem{lemma}{Lemma}
\newcommand{\blm}{\begin{lemma}}
\newcommand{\elm}{\end{lemma}}
\newtheorem{theorem}{Theorem}
\newcommand{\bt}{\begin{theorem}}
\newcommand{\et}{\end{theorem}}
\newtheorem{corollary}{Corollary}
\newcommand{\bcor}{\begin{corollary}}
\newcommand{\ecor}{\end{corollary}}
\newcommand{\loz}[1]{{#1}}
\title{Generalization Properties of  Learning with Random Features}
\author{
	Alessandro Rudi  \thanks{This work was done when A.R. was working at Laboratory of Computational and Statistical Learning (Istituto Italiano di Tecnologia).} \\
	{\small INRIA - Sierra Project-team,} \\
	{\small École Normale Supérieure, Paris,} \\
	{\small 75012 Paris, France} \\
	\texttt{\small alessandro.rudi@inria.fr}\\
	\and
	Lorenzo Rosasco \\
	{\small University of Genova,} \\
	{\small Istituto Italiano di Tecnologia,} \\
	{\small Massachusetts Institute of Technology.}\\
	\texttt{\small lrosasco@mit.edu}\\
}
\begin{document}
	
	\maketitle

	\begin{abstract}
		We study the generalization properties of ridge regression with random features in the statistical learning  framework. We show for the first time that $O(1/\sqrt{n})$ learning bounds can be achieved with only  $O(\sqrt{n}\log n)$  random features rather than $O({n})$  as suggested by previous results. Further,  we prove  faster learning rates and show that they might require more random features, unless they are sampled according to  a possibly problem dependent distribution. Our results shed light on the statistical computational trade-offs in large scale kernelized learning, showing the potential  effectiveness of random features in reducing the computational complexity while keeping optimal generalization properties.
	\end{abstract}
	\section{Introduction}
	
	Supervised learning is a basic  machine learning problem where the goal is estimating a function from random noisy samples \cite{vapnik1998statistical,cucker02onthe}.
	The function to be learned is fixed, but unknown, and flexible non-parametric models
	are needed for good results. A general class of models  is based on  functions of the form,
	\eqal{\label{eq:non-par-model}
		{f}(x) \; = \; \sum_{i=1}^M \, \alpha_i \, q(x, \omega_i),
	}
	where $q$ is a non-linear function, $\omega_1, \dots, \omega_M\in \R^d$ are often called centers,   $\alpha_1, \dots, \alpha_M \in \R$ are  coefficients, and  
	$M = M_{n}$ could/should {\em grow} with the number of data points $n$. Algorithmically, the problem  reduces to   computing from  data the parameters $\omega_1, \dots, \omega_M$, $\alpha_1, \dots, \alpha_M$ and $M$. Among others,   one-hidden layer networks \cite{bishop2006}, or RBF networks \cite{Poggio/Girosi/90},  are  examples of classical approaches considering these models. Here, parameters are computed by considering a non-convex optimization problem, typically hard to solve and analyze \cite{pinkus1999approximation}. Kernel methods are another notable example of an approach \cite{schlkopf2002learning} using functions of the form~\eqref{eq:non-par-model}. In this case,  $q$ is assumed  to be a positive definite function \cite{aronszajn1950theory} and it is shown that choosing the centers to be the input points, hence $M=n$, suffices for optimal statistical results \cite{kimeldorf1970correspondence,scholkopf2001generalized,caponnetto2007optimal}.  As a by product,  kernel methods require  only finding   the coefficients $(\alpha_i)_i$, typically  by convex optimization. While theoretically sound and remarkably effective in small and medium size problems,  memory requirements make kernel methods unfeasible for  large scale problems.
	
	%
	Most popular approaches to tackle these limitations are  randomized and include sampling the centers at random, either in a data-dependent or in a data-independent way.  Notable examples include \Nystrom{} \cite{conf/icml/SmolaS00,conf/nips/WilliamsS00} and random features  \cite{conf/nips/RahimiR07} approaches.
	Given random centers, computations still reduce to convex optimization with potential big memory gains, 
	provided that the centers are fewer than the data-points. 
	In practice, the  choice of the number of centers  is   based on heuristics or memory  constraints, and the question arises of characterizing theoretically which choices  provide optimal learning bounds. Answering this question  allows to understand the statistical and computational trade-offs in using these randomized approximations. 
	For  \Nystrom{} methods, partial results in this direction were derived for example in \cite{conf/colt/Bach13}  and improved in  \cite{alaoui2014fast}, but only for a simplified setting where the input points are fixed.
	Results in the statistical learning setting were given  in \cite{rudi2015less} for ridge regression,
	showing in particular that   $O(\sqrt{n} \log n)$ random centers uniformly sampled from $n$ training points suffices to yield $O(1/\sqrt{n})$ learning bounds, {\em the same as full kernel ridge regression}.
	
	A question motivating our study is  whether similar results hold for random features approaches. 
	While several papers consider the properties of random features for approximating the kernel function, see  \cite{sriperumbudur2015} and references therein, fewer results consider their generalization properties.
	
	Several papers considered the properties of random features for approximating the kernel function, see \cite{sriperumbudur2015} and references therein, an interesting line of research with connections to sketching \cite{halko2011finding} and non-linear (one-bit) compressed sensing \cite{plan2014dimension}. However, only a  few results consider the generalization properties of learning with random features.
	
	An exception is one of the  original random features papers, which provides learning  bounds for a general class of  loss functions \cite{rahimi2009weighted}.  These results show that $O({n})$ random features are needed  for $O(1/\sqrt{n})$ learning bounds and choosing  less random features leads to worse bounds. 
	In other words, these results suggest that that computational gains come at the expense of learning accuracy. Later  results, see e.g. \cite{journals/jmlr/CortesMT10,conf/nips/YangLMJZ12,bach2015},  
	essentially  confirm these considerations, albeit the analysis in \cite{bach2015}  suggests that fewer random features could suffice if sampled in a problem dependent way.
	
	In this paper, we focus on the least squares loss, considering random features within a ridge regression approach.
	Our main  result  shows, under standard assumptions, that  the estimator obtained with a number of random features proportional to $O(\sqrt{n}\log n)$  achieves  $O(1/\sqrt{n})$ learning error, that is  the {\em same} prediction  accuracy of  the {\em exact} kernel ridge regression estimator. In other words, there are problems for which random features can allow to drastically reduce computational costs {\em without} any loss of prediction accuracy. To the best of our knowledge this is the first result showing that such an effect is possible. Our study improves on previous results by taking advantage of analytic and probabilistic results  developed to provide sharp analyses of kernel ridge regression. We further present  a second set of more refined results deriving fast convergence rates.  We   show that indeed fast rates are possible, but, depending on the problem at hand, a larger  number of features might be needed. 
	We then discuss how the requirement on the number of random features can be weakened at the expense of typically more complex sampling schemes. Indeed, in this latter case either some knowledge of the data-generating distribution or some 
	potentially data-driven sampling scheme is needed. For this  latter case, we borrow and extend ideas from \cite{bach2015,rudi2015less}  and inspired from the theory of statical leverage scores \cite{journals/jmlr/DrineasMMW12}. Theoretical findings are complemented by numerical simulation validating the bounds.
	
	The rest of the paper is organized as follows. In Section \ref{sect:background}, we review relevant results on learning with kernels, least squares and learning with random features. In Section \ref{sect:main-res}, we present and discuss our main results, while proofs are deferred to the appendix.  Finally, numerical experiments are presented in Section \ref{sec:exp}.

	\section{Learning with random features and ridge regression}\label{sect:background}
	We begin recalling basics ideas in  kernel methods and their approximation via random features.
	%
	\paragraph{Kernel ridge regression} 
	Consider the supervised  problem of learning a function given  a training set of $n$ examples $(x_i, y_i)_{i=1}^n$, where $x_i \in \X$, $\X = \R^D$ and $y_i \in \R$. Kernel methods are  nonparametric approaches defined by  a 
	{\em kernel } $K: \X \times \X \to \R$, that is a symmetric and positive definite (PD) function\footnote{A kernel $K$ is PD  if for all $x_1, \dots, x_N$ the $N$ by $N$ matrix with entries $K(x_i,x_j)$ is positive semidefinite.}.
	A particular instance is  kernel ridge regression  given by
	\eqal{\label{eq:base-KRR}
		\widehat{f}_\la(x) = \sum_{i=1}^n \alpha_i K(x_i, x), \quad \alpha = (\gK + \la n I)^{-1} y.
	}
	Here $\la > 0$, $y = (y_1, \dots, y_n)$, $\alpha \in \R^n$, and  $\gK$ is the $n$ by $n$ matrix  with  entries $\gK_{ij} = K(x_i, x_j)$. The above method is standard and can be derived from an empirical risk minimization perspective \cite{schlkopf2002learning}, and is related to Gaussian processes \cite{bishop2006}. While  KRR has optimal statistical properties-- see later--  its applicability to large scale datasets is limited since it requires $O(n^2)$ in space, to store $\gK$, and  roughly $O(n^3)$ in time,  to solve the linear system in~\eqref{eq:base-KRR}. Similar requirements are shared by other kernel methods \cite{schlkopf2002learning}.\\
	To explain the basic ideas behind using random features with ridge regression, it is useful to
	recall the computations needed to solve KRR when the kernel is linear $K(x,x')=x^\top x'$.
	In this case, Eq.~\eqref{eq:base-KRR} reduces to standard ridge regression and can be equivalenty computed considering,
	\eqal{\label{eq:base-KRRLin}
		\widehat{f}_\la(x) = x^\top\widehat{w}_\la \quad\quad \widehat{w}_\la = (\widehat{X}^\top\widehat{X} + \la n I)^{-1}\widehat{X}^\top y.
	}
	where $\widehat{X}$ is the  $n$ by $D$ data matrix. In this case, the complexity becomes  $O(nD)$ in space,  and  $O(nD^2+D^3)$ in time.
	Beyond  the linear case, the above reasoning  extends to  inner product kernels
	\eqal{\label{eq:rf-kerM}
		K(x,x') = \phi_M(x)^\top \phi_M(x')
	}
	where $\phi_M:\X\to \R^M$ is a finite dimensional (feature) map. In this case,  KRR can be computed considering~\eqref{eq:base-KRRLin} with  the data matrix 
	$\widehat{X}$ replaced by  the   $n$ by $M$  matrix $\tSn^\top = (\phi(x_1), \dots, \phi(x_n))$. The complexity is then  $O(nM)$ in space,  and  $O(nM^2+M^3)$ in time, hence much better than  $O(n^2)$ and  $O(n^3)$, as soon as $M\ll n$.
	Considering only kernels of the form~\eqref{eq:rf-kerM} can be restrictive. Indeed, classic examples of kernels, e.g.  the Gaussian kernel $e^{-\nor{x-x'}^2}$,  do not satisfy~\eqref{eq:rf-kerM} with finite $M$.  It is then natural to ask if the above reasoning can still be useful to reduce the computational burden for more complex kernels such as the Gaussian kernel.  Random features, that we recall next, show that this is indeed the case.
	\paragraph{Random features with ridge regression}
	The basic idea of random features \cite{conf/nips/RahimiR07} is to relax  Eq.~\eqref{eq:rf-kerM} assuming it holds only approximately,
	\eqal{\label{eq:rf-apprkerM}
		K(x,x') \approx \phi_M(x)^\top \phi_M(x'). 
	}
	Clearly,  if one such approximation exists the approach described in the previous section can still be used. A first question is then for which kernels an approximation of the form~\eqref{eq:rf-apprkerM} can be derived. A simple manipulation of the  Gaussian kernel provides one basic example.
	\bex[Random Fourier features \cite{conf/nips/RahimiR07}]\label{ex:rff}
	If we write the Gaussian kernel as  $K(x,x') =  G(x-x')$, with $G(z) = e^{-\frac{1}{2\sigma^2}\nor{z}^2}$, for a $\sigma > 0$, 
	then since the inverse Fourier transform of $G$ is a Gaussian, and using a basic symmetry argument,
	it is easy to show that 
	$$
	G(x-x') ~~~=~~~   \frac{1}{2\pi Z} \int\int_0^{2\pi} ~~\sqrt{2}\cos(w^\top x+ b)~\sqrt{2}\cos(w^\top x'+ b)~~ e^{-\frac{\sigma^2}{2}\nor{w}^2}dw~~db    
	$$
	where $Z$ is a normalizing factor.
	Then, the Gaussian kernel has an approximation of the form~\eqref{eq:rf-apprkerM}  with
	$
	\phi_M(x)= M^{-1/2}~(\sqrt{2}\cos(w_1^\top x+ b_1), \dots,   \sqrt{2}\cos(w_M^\top x+ b_M)),
	$
	and  $w_1, \dots, w_M$ and $b_1, \dots, b_M$ sampled independently from $ \frac{1}{Z}e^{-\sigma^2\nor{w}^2/2}$ and uniformly in $[0,2\pi]$, respectively. 
	\eex 
	The above example can be abstracted to a   general strategy. 
	Assume  the kernel $K$ to have an integral representation,
	\eqal{\label{eq:def-RF-integral}
		K(x,x') = \int_{\Omega} \psi(x,\omega) \psi(x', \omega)d \pi(\omega), \quad \forall x,x' \in \X,
	}
	where $(\Omega,\pi)$ is probability space and $\psi: \X \times \Omega \to \R$. The random features approach provides an approximation of the form~\eqref{eq:rf-apprkerM} where 
	$
	\phi_M(x)=M^{-1/2} ~(\psi(x,\omega_1),\dots, \psi(x,\omega_M)),
	$
	and with $\omega_1, \dots, \omega_M$   sampled independently with respect to $\pi$.
	%
	Key to the success of random features is that kernels, to which the above idea apply, abound-- see Appendix~\ref{sect:rf-examples} for a survey with some details.\\
	\begin{remark}[Random features, sketching and one-bit compressed sensing]
		We note that specific examples of random features can be seen as form of sketching \cite{halko2011finding}. This latter term typically refers to reducing data dimensionality by random projection, e.g. considering 
		$$
		\psi(x,\omega)=x^\top \omega,
		$$
		where $\omega \sim N(0, I)$ (or suitable bounded measures).  From a random feature perspective, we are defining an approximation of the linear kernel since
		$$
		\expect{\psi(x,\omega)\psi(x',\omega)}  = \expect{x^\top \omega \omega^\top x'} =  x^\top \expect{\omega \omega^\top} x' = x^\top x'.
		$$
		More general non-linear sketching can also be considered. For example in one-bit compressed sensing \cite{plan2014dimension} the following random features are relevant, 
		$$
		\psi(x,\omega)=\textrm{sign}(x^\top \omega)
		$$
		with $w\sim N(0, I)$ and $\textrm{sign}(a)=1$ if $a>0$ and $-1$ otherwise. 
		Deriving the corresponding kernel is more involved and we refer to \cite{cho2009} (see Section~\ref{sect:rf-examples} in the appendixes). 
	\end{remark}
	
	Back to supervised learning, combining random features with ridge regression  leads to,
	\eqal{\label{eq:algo-rf}
		\widehat{f}_{\la, M}(x) := \phi_M(x)^\top \widehat{w}_{\la, M}, \quad \textrm{with} \quad \widehat{w}_{\la, M} := (\widehat{S}_M^\top \widehat{S}_M + \la I)^{-1} \widehat{S}_M^\top \yn,
	}
	for $\la >0$,   $\widehat{S}_M^\top := n^{-1/2}~(\phi_M(x_1),\dots, \phi_M(x_n))$ and $\yn := n^{-1/2}~(y_1, \dots, y_n)$.
	
	Then,  random features can be used to reduce the  computational costs  of
	full kernel ridge regression as soon as  $M\ll n$ (see Sec.~\ref{sect:background}). However, since  random features rely on an approximation~\eqref{eq:rf-apprkerM}, the question is whether there is   a loss of prediction accuracy. This is the question we analyze in the rest of the paper.

	\section{Main Results}\label{sect:main-res}
	
	In this section, we present our main results  characterizing the  generalization properties of random features with ridge regression.
	We begin considering a basic setting and then discuss fast learning rates and the possible benefits of problem dependent sampling schemes. 
	%
	%
	%
	
	\subsection{$O(\sqrt{n} \log n)$ Random features lead to $O(1/\sqrt{n})$ learning error}
	
	We consider a standard statistical learning setting. The data $(x_i, y_i)_{i=1}^n$ are sampled identically and independently with respect to  a probability  $\rho$ on $\X \times \R$\loz{, with $\X$ a separable space (e.g. $\X = \R^D$, $D \in \N$)}. The goal is to   minimize the expected risk 
	$$ {\cal E}(f) = \int (f(x) - y)^2 d\rho(x,y),$$
	since this implies that $f$ will generalize/predict well new data. 
	Since we  consider estimators of the form~\eqref{eq:base-KRR},~\eqref{eq:algo-rf}  we are potentially  restricting the space of possible solutions. 
	\loz{Indeed, estimators of this form can be naturally related to the so called reproducing kernel Hilbert space}  (RKHS) corresponding to the PD kernel $K$.  Recall that,  the latter is the function space $\hh$ defined as  as the completion of the linear span of  $\{K(x,\cdot)~:~ x\in X\}$ with respect to the inner product $\scal{K(x,\cdot)}{K(x',\cdot)}:=K(x,x')$ \cite{aronszajn1950theory}. 
	In this view, the best possible solution  is  $f_\hh$ solving 
	\eqal{\label{eq:exists-fh}
		\min_{f\in \hh} {\cal E}(f).
	}
	We will assume throughout that  $f_\hh$ exists. We add one technical remark useful in the following. 
	\br
	Existence of $f_\hh$ is not ensured, since we  consider a potentially  infinite dimensional RKHS $\hh$, possibly universal \cite{steinwart2008support}.
	The situation is different if $\hh$ is replaced by 
	$
	\hh_R=\{f\in \hh~:~ \nor{f}\le R\},
	$ 
	with $R$ fixed a priori.   In this case a minimizer of  risk $\cal E$ always exists, but  $R$ needs to be fixed a priori and $\hh_R$ can't be universal. Clearly, assuming $f_\hh$ to exist, implies it belongs to a ball of radius $R_{\rho,\hh}$. However, our results do not require prior knowledge of $R_{\rho,\hh}$ and hold uniformly over all finite radii.
	\er
	
	The following is our first result on the learning properties of random features with   ridge regression.

	%

	%
	\bt\label{thm:rf-simple-universal}
	Assume that  $K$ is a kernel with an integral representation~\eqref{eq:def-RF-integral}. Assume $\psi$ continuous, such that $|\psi(x,\omega)| \leq \kappa$ almost surely, with $\kappa \in [1, \infty)$ and $|y|\le b$ almost surely, with $b > 0$. Let $\delta \in (0,1]$. If $n \geq n_0$ and $\la_n =  n^{-1/2}$, then a number of random features $M_n$ equal to
	$$M_n = c_0 ~ \sqrt{n} ~ \log \frac{108\kappa^2 \sqrt{n}}{\delta},$$ 
	is enough to guarantee, with probability at least $1 - \delta$, that
	$${\cal E}(\widehat{f}_{\la_n,M_n}) - {\cal E}(f_\hh) \leq \frac{c_1 \log^2 \frac{18}{\delta}}{\sqrt{n}}.$$
	In particular the constants $c_0, c_1$  do not depend on $n, \la, \delta$, and  $n_0$ does not depends on $n, \la, f_\hh, \rho$.
	\et 
	The above result is presented with some simplifications (e.g. the assumption of bounded output) for sake of presentation, while it is proved and presented in full generality in the Appendix. In particular, the values of all the constants are given explicitly. Here, we make a  few comments.
	The learning bound is the same achieved by the {\em exact} kernel ridge regression estimator~\eqref{eq:base-KRR} choosing $\la=n^{-1/2}$, see e.g. \cite{caponnetto2007optimal}. 
	The theorem derives a bound in a  worst case situation, where no assumption is made besides existence of $f_\hh$, and 
	is  optimal in a minmax sense \cite{caponnetto2007optimal}. 
	This means that, in this setting,  as soon as the number of features is order $\sqrt{n}\log n$,  the corresponding ridge regression estimator has  optimal generalization properties. This is remarkable considering  the corresponding gain from a computational perspective:  from  roughly $O(n^3)$ and $O(n^2)$ in time and space  for  kernel ridge regression to $O(n^2)$ and $O(n\sqrt{n})$ for ridge regression with random features (see Section~\ref{sect:background}).
	Consider that taking $\delta\propto1/n^2$ changes only the constants and allows to derive  bounds in expectation and  almost sure convergence (see Cor.~\ref{cor:simple-rates-expectation} in the appendix, for the result in expectation).
	\\
	%
	The above result shows that there is  a whole set of problems where computational gains
	are achieved without having to trade-off statistical accuracy. In the next sections we consider what happens under more benign assumptions, which are standard, but also somewhat more technical. 
	We first compare with previous works since the above setting is the one more closely related.
	\paragraph{Comparison with \cite{rahimi2009weighted}.} This is one of the original random features paper and considers the question of generalization properties. In particular they study the estimator
	$$\widehat{f}_{R}(x) = \phi_M(x)^\top \widehat{\beta}_{R,\infty}, \quad \widehat{\beta}_{R,\infty} = \argmin{\nor{\beta}_\infty \leq R} \frac{1}{n} \sum_{i=1}^n \ell(\phi_M(x_i)^\top \beta, y_i),$$
	for a fixed $R$,  a Lipshitz loss function $\ell$, and where $\nor{w}_\infty = \max\{|\beta_1|,\cdots,|\beta_M|\}$. 
	The largest space considered in \cite{rahimi2009weighted} is 
	\be\label{eq:hpi8R}
	\g_{R} = \left\{ \int \psi(\cdot, \omega) \beta(\omega) d\pi(\omega) ~\middle|~ |\beta(\omega)| < R \;\; \textrm{a.e.}\right\},
	\ee
	rather than a RKHS, where $R$ is fixed a priori. The best possible solution is $f^*_{\g_{R} }$ solving 
	$\min_{f \in \g_{R}} {\cal E}(f),$
	and the main result in \cite{rahimi2009weighted} provides the  bound
	\eqal{\label{eq:rahimi}
		{\cal E}(\widehat{f}_{R}) - {\cal E}(f^*_{\g_R}) \lesssim \frac{R}{\sqrt{n}} + \frac{R}{\sqrt{M}},
	} 
	This is the first and still one the main results  providing a statistical analysis for an estimator based on random features for a wide class of loss functions. There are a few elements of comparison with the result in this paper, but the main one is that to get $O(1/\sqrt{n})$ learning bounds,   the above result requires  $O(n)$ random features,  while a smaller number leads to worse bounds. This shows the main novelty of our analysis. Indeed we prove that, considering the square loss, fewer random features are sufficient, hence allowing computational gains without loss of accuracy. We add a few more tehcnical comments explaining : 1) how the setting we consider covers a wider range of problems, and 2) why the bounds we obtain are sharper.  First, note that the functional setting in our paper is more general in the following sense.
	It is easy to see that considering the RKHS $\hh$ is equivalent to consider
	$
	\hh_{2} = \left\{ \int \psi(\cdot, \omega) \beta(\omega) d\pi(\omega) ~\middle|~ \int |\beta(\omega)|^2d\pi(\omega) <\infty \right\}
	$
	and the following inclusions hold $\g_R \subset \g_\infty \subset \hh_2$.
	%
	Clearly, assuming  a minimizer of the expected risk to exists in $\hh_2$ {\em does not} imply it belongs to $\g_{\infty}$ or $\g_R$, while the converse is true. In this view, our results cover a wider range of problems.  Second, note that, this gap is not easy to bridge.  Indeed, even if we were to consider $\g_{\infty}$ in place of $\g_R$, the results in \cite{rahimi2009weighted} could be used to derive the bound
	\eqal{\label{eq-rahimi-adj}
		{\mathbb E} ~~ {\cal E}(\widehat{f}_{R}) - {\cal E}(f^*_{\g_\infty}) \lesssim \frac{R}{\sqrt{n}} + \frac{R}{\sqrt{M}} + A(R),
	}
	where   $A(R) := {\cal E}(f^*_{\g_R}) - {\cal E}(f^*_{\g_\infty})$ and $f^*_{\g_\infty}$ is a minimizer of the expected risk on $\g_\infty$.
	In this case  we would have to balance the various terms in~\eqref{eq-rahimi-adj}, which would lead to a worse bound.
	For example, we could consider  $R := \log n$,  obtaining a bound $n^{-1/2} \log n$ with an extra  logarithmic term, but 
	the result would hold only for $n$ larger than  a number of examples $n_0$ at least {\em exponential} with respect to the norm of $f_\infty$. Moreover, to derive results uniform with respect to $f_\infty$, we would have to keep into account the decay rate of $A(R)$ and this would get bounds slower than $n^{-1/2}$. 
	%
	%
	\paragraph{Comparison with other results.} Several other papers study the generalization properties of random features, see \cite{bach2015} and references therein. 
	For example, generalization bounds  are derived  in  \cite{journals/jmlr/CortesMT10}  from very general arguments.
	However,  the corresponding generalization bound requires a number of random features much larger than the number of training examples to give $O(1/\sqrt{n})$ bounds. The basic results in \cite{bach2015} are analogous to those in \cite{rahimi2009weighted} with the set $\g_R$ replaced by  $\hh_R$.  These results are closer, albeit more restrictive then ours (see Remark~\ref{eq:exists-fh}) and especially like the bounds in 
	\cite{rahimi2009weighted}  suggest $O(n)$ random features are needed for $O(1/\sqrt{n})$ learning bounds. 
	A novelty in \cite{bach2015}  is  the introduction of  more complex problem dependent sampling that can reduce the number  of random features. 
	%
	In Section~\ref{sect:non-uniform-sampling}, we show that using possibly-data dependent random features  can lead to rates  much  faster  than $n^{-1/2}$,  and using much less than $\sqrt{n}$ features.
	%
	%
	\br[Sketching and randomized numerical linear algebra (RandLA)] 
	Standard sketching techniques from RandLA \cite{halko2011finding} can be recovered, when $X$ is a bounded subset of  $\R^D$, by selecting $\psi(x,\omega) = x^\top \omega$ and $\omega$ sampled from suitable bounded distribution (e.g. $\omega = (\zeta_1, \dots, \zeta_d)$ independent Rademacher random variables). Note however that the final goal of the analysis in the randomized numerical linear algebra community is to minimize the empirical error instead of ${\cal E}$.
	\er
	\begin{figure}
		\centering
		\includegraphics[width=0.8\linewidth]{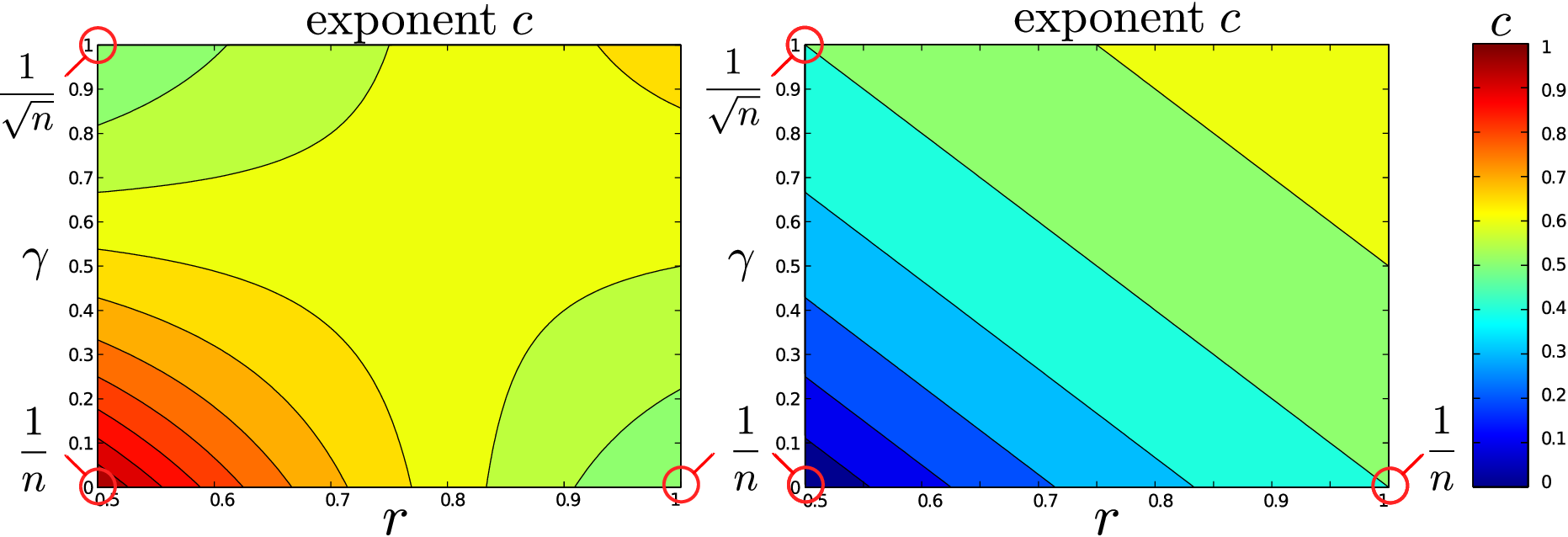}
		\caption{Random feat. $M = O(n^c)$ required for optimal generalization. Left: $\alpha = 1$. Right: $\alpha = \gamma$.}
		\label{fig:rf-fast-sanity}
	\end{figure}
	\subsection{Refined Results: Fast Learning Rates}
	Faster rates can be achieved under favorable conditions. Such conditions for kernel ridge regression are standard, but somewhat technical. Roughly speaking they characterize the ``size'' of the considered RKHS and the regularity of $f_\hh$.  The key quantity needed to make this precise is the integral operator defined by the kernel  
	$K$ and   the marginal distribution $\rhox$  of $\rho$ on $\X$, that is 
	$$ (Lg)(x) = \int_{\X} K(x, z) g(z) d\rhox(z), \quad \forall g \in \Ltwo,$$
	seen as a map from 
	$\Ltwo = \{f:\X \to \R ~|~ \nor{f}_\rho^2=\int |f(x)|^2d\rhox< \infty \}$ to itself.
	Under the assumptions of Thm.~\ref{thm:rf-simple-universal}, the integral operator is positive, self-adjoint and trace-class (hence compact) \cite{smale2007learning}. We next define the conditions that will lead to fast rates, and then comment on their interpretation. 
	\ba[Prior assumptions]\label{ass:prior}
	For  $\la > 0$, let  the effective dimension be defined as ${\cal N}(\la) := \tr \left((L+\la I)^{-1} L\right),$
	and assume,   there exists $Q > 0$ and $\gamma \in [0,1]$ such that, 
	\be\label{eq:degree} {\cal N}(\la) \leq Q^2 \la^{-\gamma}.\ee
	Moreover, assume there exists $  r\ge 1/2$ and $g \in \Ltwo$ such that
	\be\label{eq:sou} \fh(x) = (L^r g)(x) \quad \textrm{a.s.}\ee 
	\ea
	We provide some intuition on the meaning of the above assumptions, and defer the interested reader  to \cite{caponnetto2007optimal} for  more details.  The effective dimension can be seen as a ``measure of the size" of the RKHS $\hh$. Condition~\eqref{eq:degree} allows to  control the variance of the estimator and  is equivalent to conditions on covering numbers and related capacity measures \cite{steinwart2008support}. In particular, it holds if the eigenvalues $\sigma_i$'s of $L$  decay as $i^{-1/\gamma}$. Intuitively, a fast decay corresponds to a smaller RKHS, whereas  a slow decay corresponds to a larger RKHS. The case $\gamma=0$ is the more benign situation, whereas $\gamma=1$ is the worst case, corresponding to the basic setting. A classic example\loz{, when $X = \R^D$,} corresponds to considering kernels of smoothness $s$, in which case $\gamma=D/(2s)$  and condition~\eqref{eq:degree} is equivalent to assuming $\hh$ to be a  Sobolev space \cite{steinwart2008support}. Condition~\eqref{eq:sou} allows to control the bias of  the estimator and is common in approximation theory \cite{smale2003estimating}. It is a regularity condition that  can be  seen as form of weak sparsity of $f_\hh$.  Roughly speaking, it requires the expansion of $f_\hh$, on the the basis given by the the eigenfunctions $L$, to have coefficients that decay \loz{faster than} $\sigma_i^{r}$. A large value of $r$ means that the coefficients decay fast and hence many are close to zero. The case  $r=1/2$ is the worst case, and can be shown to be equivalent to assuming $f_\hh$ exists. This latter situation corresponds to  setting considered in the previous section. 
	We next show how these assumptions allow to derive  fast  rates.
	\bt\label{thm:rf-fast-universal}
	Let $\delta \in (0,1]$. Under Asm.~\ref{ass:prior} and the same assumptions of Thm.~\ref{thm:rf-simple-universal}, if $n \geq n_0$, and $\la_n = n^{-\frac{1}{2r+\gamma}}$, then a number of random features $M$ equal to
	$$M_n ~~=~~ c_0 ~n^{\frac{1 + \gamma(2r-1)}{2r + \gamma}}~\log\frac{108\kappa^2 n}{\delta},$$
	is enough to guarantee, with probability at least $1 - \delta$, that 
	$${\cal E}(\widehat{f}_{\la_n,M_n}) - {\cal E}(f_\hh) \leq c_1\log^2\frac{18}{\delta} ~ n^{-\frac{2r}{2r+\gamma}},$$
	for  $r\le 1$,   and  where $c_0, c_1$ do not depend on $n, \tau$, while $n_0$ does not depends on $n, f_\hh, \rho$.
	\et
	The above bound is the same as the one obtained by the full kernel ridge regression estimator and is optimal in a minimax sense \cite{caponnetto2007optimal}. For large $r$ and small $\gamma$ it approaches a $O(1/n)$ bound.  When $\gamma = 1$ and $r = 1/2$ the worst case bound of the previous section is recovered.
	Interestingly, the number of random features in different regimes  is typically smaller than $n$ but can be larger than  $O(\sqrt{n})$.   Figure.~\ref{fig:rf-fast-sanity} provides a pictorial representation of the number of random features needed  for optimal rates in different regimes. In particular $M \ll n$ random features are enough when $\gamma > 0$ and $r > 1/2$. For example for $r=1, \gamma=0$ (higher regularity/sparsity and a small RKHS) $O(\sqrt{n})$ are  sufficient to get a rate $O(1/n)$.  But, for example, if $r=1/2, \gamma=0$ (not too much regularity/sparsity but a small RKHS) $O(n)$ are needed for $O(1/n)$ error. The proof suggests that this effect can be a byproduct of sampling  features  in a data-independent way. Indeed, in the next section we show how 
	much fewer  features can be used considering problem dependent sampling schemes.
	
	\subsection{Refined Results: Beyond uniform sampling} \label{sect:non-uniform-sampling}
	
	We show next that  fast learning rates can be achieved with fewer random features  if they are somewhat {\em compatible} with the data distribution.
	This is made precise by the following condition.
	\ba[Compatibility condition]\label{ass:compatibility}
	Define the  {\em maximum random features dimension} as 
	\eqal{\label{eq:F-infty}
		{\cal F}_\infty(\la) ~= ~\sup_{\omega \in \Omega} ~\nor{(L+\la I)^{-1/2} \psi(\cdot, \omega)}^2_\rhox, \quad \la>0.
	}
	Assume there exists $\alpha \in [0, 1]$, and $F > 0$ such that
	${\cal F}_\infty(\la) \leq F \la^{-\alpha}, \quad  \forall \la > 0.
	$
	\ea
	The above assumption is abstract and we comment on it before showing how it affects the results. 
	The maximum random features dimension~\eqref{eq:F-infty} relates the random features to the data-generating distribution through the operator $L$. 
	It is always satisfied for $\alpha = 1$ ands $F = \kappa^2$. e.g.  considering any random feature satisfying~\eqref{eq:def-RF-integral}. The favorable situation corresponds to 
	random features such that  case $\alpha = \gamma$. The following  theoretical  construction borrowed from \cite{bach2015} gives an example.
	\bex[Problem dependent RF]\label{ex:lsrf}
	Assume $K$ is a kernel with an integral representation~\eqref{eq:def-RF-integral}. 
	For $s(\omega) = \nor{(L+\la I)^{-1/2}\psi(\cdot, \omega)}_\rhox^{-2}$  
	and $C_s := \int \frac{1}{s(\omega)} d\pi(\omega)$,
	consider the random features $\psi_s(x,\omega) = \psi(x,\omega) \sqrt{C_s s(\omega)},$ with  distribution $\pi_s(\omega) := \frac{\pi(\omega)}{C_s s(\omega)}$. We show in the Appendix that these random features  provide an integral representation of $K$  
	and  satisfy Asm.~\ref{ass:compatibility} with $\alpha = \gamma$.
	\eex
	We next show how random features satisfying Asm.~\ref{ass:compatibility} can lead to  better resuts.
	\bt\label{thm:rf-fast-compatibility}
	Let $\delta \in (0, 1]$. Under Asm.~\ref{ass:compatibility} and the same assumptions of Thm. \ref{thm:rf-simple-universal}, \ref{thm:rf-fast-universal}, if $n \geq n_0$,  and $\la_n = n^{-\frac{1}{2r+\gamma}}$, then a number of random features $M_n$ equal to
	$$M_n ~~=~~ c_0 ~n^{\frac{\alpha + (1 +\gamma - \alpha)(2r-1)}{2r + \gamma}}~ \log\frac{108\kappa^2n}{\delta},$$
	is enough to guarantee, with probability at least $1 - \delta$, that 
	$${\cal E}(\widehat{f}_{\la_n,M_n}) - {\cal E}(f_\hh) \leq c_1\log^2\frac{18}{\delta} ~ n^{-\frac{2r}{2r+\gamma}},$$
	where $c_0, c_1$ do not depend on $n, \tau$, while $n_0$ does not depends on $n, f_\hh, \rho$.
	\et
	The above learning bound is the same as Thm.~\ref{thm:rf-fast-universal}, but the number of random features is given by a more complex expression  depending on $\alpha$.  In particular, in the slow $O(1/\sqrt{n})$ rates scenario, that is $r=1/2$, $\gamma=1$, we see that $O( n^{\alpha/2})$ are needed, recovering $O(\sqrt{n})$, since $\gamma \leq \alpha \leq 1$. On the contrary, for a small RKHS, that is $\gamma=0$ and random features with $\alpha=\gamma$,  a constant (!) number of feature is sufficient. A similar trend is seen considering fast rates. 
	For $\gamma>0$ and $r>1/2$, if  $\alpha< 1$ then the number of random features is always smaller, and potentially much smaller,  then the number of random features sampled in a problem independent way, that is $\alpha=1$. For $\gamma=0$ and $r=1/2$, the number of number of features is $O(n^\alpha)$ and can be again just constant if $\alpha=\gamma$.  Figure~\ref{fig:rf-fast-sanity} depicts the  number of random features required if $\alpha=\gamma$. The above result shows the potentially dramatic effect of problem dependent random features. However the construction in Ex.~\ref{ex:lsrf}
	is theoretical. We comment on this in the next  remark.
	\br[Random features leverage scores]
	The  construction in Ex.~\ref{ex:lsrf} is  theoretical, however {\em empirical random features leverage scores}
	$\widehat s(\omega)=  \widehat{v}(\omega)^\top(\gK + \lambda n I)^{-1}\widehat{v}(\omega)$, with $\widehat{v}(\omega) \in \R^n$, $(\widehat{v}(\omega))_i = \psi(x_i,\omega)$, can be considered. Statistically, this requires considering an extra estimation step. It seems our proof can be extended to account for this, and we will pursue this in a future work. Computationally, it requires devising approximate numerical strategies, like standard leverage scores {\em \small \cite{journals/jmlr/DrineasMMW12}}.
	\er
	%
	\begin{figure}[t]
		\begin{center}
			\includegraphics[trim={14cm 0 0 0},clip,width=0.8\linewidth]{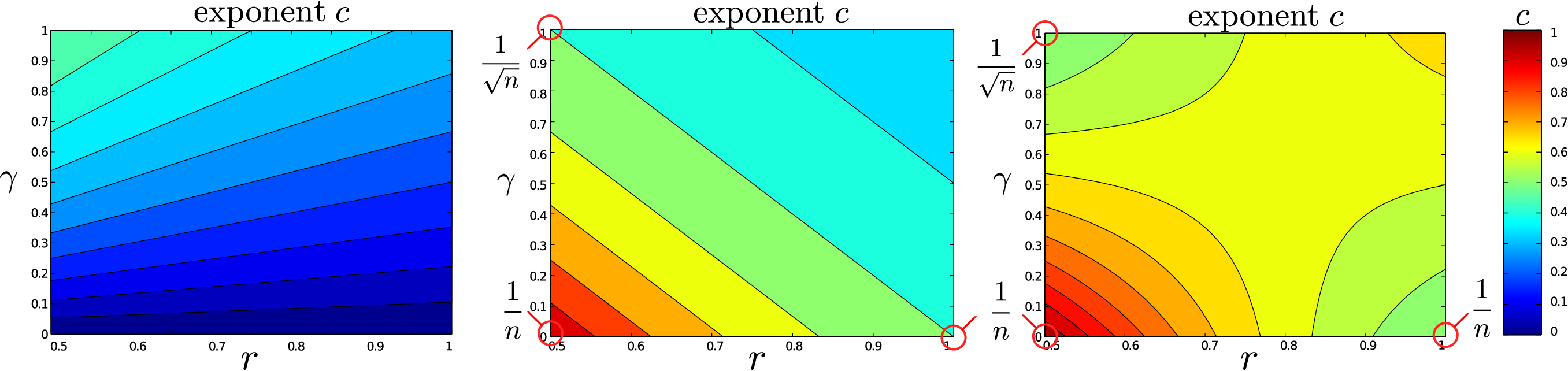}
		\end{center}
		\caption{Comparison between the number of features $M = O(n^c)$ required by \Nystrom{} (uniform sampling, left) \cite{rudi2015less} and Random Features ($\alpha = 1$, right), for optimal generalization. \label{fig:nystrom-comparison}}
	\end{figure}
	\paragraph{Comparison with \Nystrom{}.} This question was recently considered in \cite{conf/nips/YangLMJZ12} and our results offer new insights. In particular, recalling the results in \cite{rudi2015less}, we see that in  the slow rate setting there is essentially no difference between random features and \Nystrom{} approaches, neither from a  statistical nor from a computational point of view. In the case of fast rates,  \Nystrom{} methods with uniform sampling requires $O(n^{-\frac{1}{2r+\gamma}})$ random centers, which compared to Thm.~\ref{thm:rf-fast-universal},  suggests  \Nystrom{} methods can  be advantageous in this regime. 
	While problem dependent random features provide a further improvement, it should be compared with the number of centers needed for \Nystrom{} with leverage scores, which is $O(n^{-\frac{\gamma}{2r+\gamma}})$ and hence again better, see Thm.~\ref{thm:rf-fast-compatibility}. In summary, both random features and \Nystrom{} methods achieve optimal statistical guarantees while reducing computations. They are essentially the same in the worst case, while \Nystrom{} can be better for benign problems.
	\\
	Finally we add a few words about the main steps in the proof.
	\paragraph{Steps of the proof.} The proofs are quite technical and long and are collected in the appendices. They use a battery of tools developed to analyze KRR and related methods. The key challenges in the analysis include analyzing the bias of the estimator, the effect of noise in the outputs, the effect of random sampling in the data, the approximation due to random features and a notion of orthogonality between the function space corresponding to random features and the full RKHS. The last two points are the main elements on novelty in the proof. In particular, compared to other studies,  we identify and study the  quantity needed to assess the effect of the random feature approximation if the goal is prediction rather than the kernel approximation itself.

	
	\section{Numerical results}\label{sec:exp}
	While the learning bounds we present are optimal, there are no lower bounds on the  number of random features, hence we present numerical experiments validating our bounds. 
	%
	Consider a spline kernel of order $q$ (see \cite{Wahba/90} Eq.~2.1.7 when $q$ integer), defined as
	$$ \Lambda_q(x,x') = \sum_{k=-\infty}^\infty {e^{2\pi i k x} e^{-2\pi i k z}}{|k|^{-q}}, $$
	almost everywhere on $[0,1]$, with $q \in \R$, for which we have
	$$\int_0^1 \Lambda_{q}(x,z) \Lambda_{q'}(x',z) dz = \Lambda_{q+q'}(x,x'),$$
	for any $q,q' \in \R$. Let $\X = [0,1]$, and $\rhox$ be the uniform distribution. 
	For $\gamma \in (0,1)$ and $r \in [1/2,1]$ let,
	$K(x,x') = \Lambda_\frac{1}{\gamma}(x,x')$, $\psi(\omega,x) = \Lambda_{\frac{1}{2\gamma}}(\omega,x)$, $f_*(x) = \Lambda_{\frac{r}{\gamma} + \frac{1}{2} +\epsilon}(x,x_0)$
	with $\epsilon > 0, x_0 \in \X$. Let  $\rho(y|x)$ be a Gaussian density with variance $\sigma^2$ and mean $f^*(x)$.
	Then  Asm~\ref{ass:prior}, \ref{ass:compatibility} are  satisfied and  $\alpha = \gamma$. We compute the KRR estimator for $n \in \{10^3,\dots,10^4\}$ and select  $\la$ minimizing the excess risk computed analytically. 
	Then we compute the RF-KRR estimator and select the number of features $M$ needed to obtain an excess risk within $5\%$ of the one by KRR.
	In Figure~\ref{fig:sim-1},  the theoretical and estimated behavior of
	the excess risk, $\la$ and $M$ with respect to $n$ are reported together with their standard deviation over 100 repetitions. 
	The experiment shows that the predictions by Thm.~\ref{thm:rf-fast-compatibility} are  accurate, since the theoretical predictions estimations are within one standard deviation from the values measured in the simulation.
	\begin{figure}[t]
		\includegraphics[width=0.32\textwidth,height=0.23\textwidth]{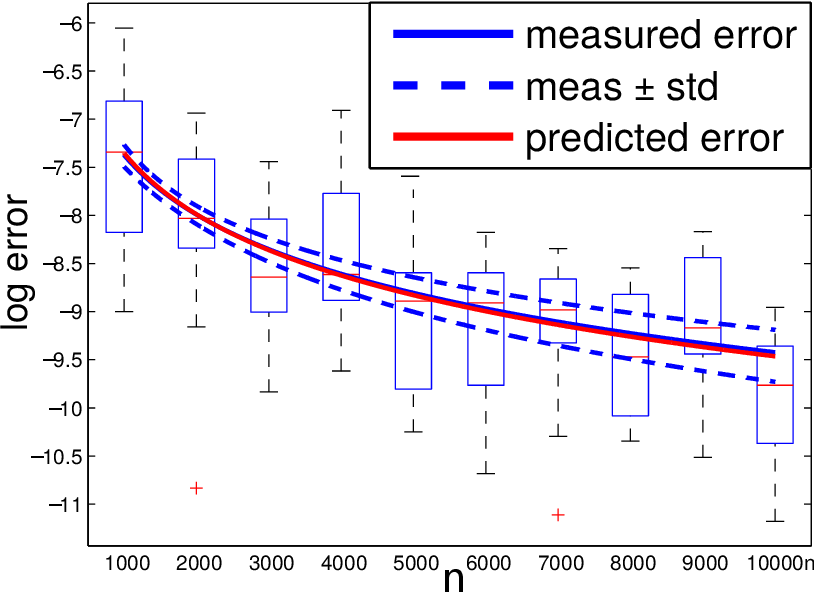}
		\hfill
		\includegraphics[width=0.32\textwidth,height=0.23\textwidth]{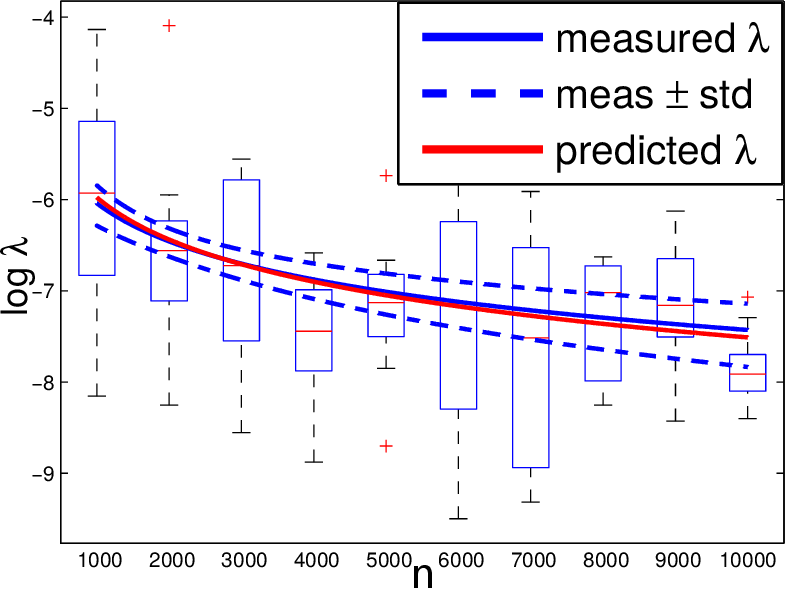}
		\hfill
		\includegraphics[width=0.32\textwidth,height=0.23\textwidth]{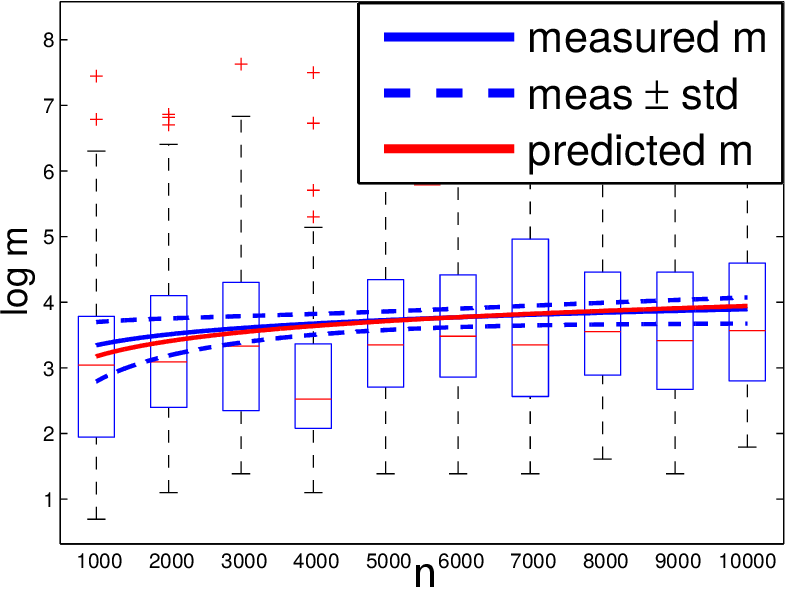}\\
		\includegraphics[width=0.32\textwidth,height=0.23\textwidth]{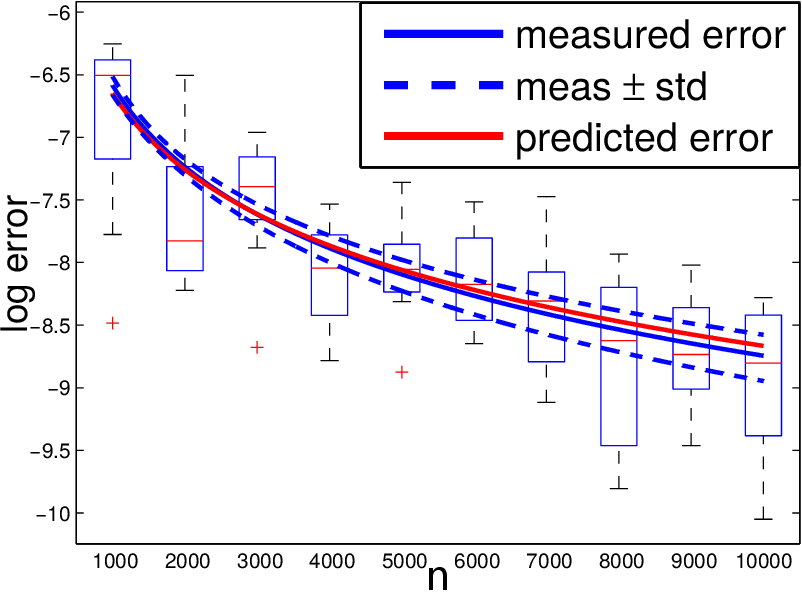}
		\hfill
		\includegraphics[width=0.32\textwidth,height=0.23\textwidth]{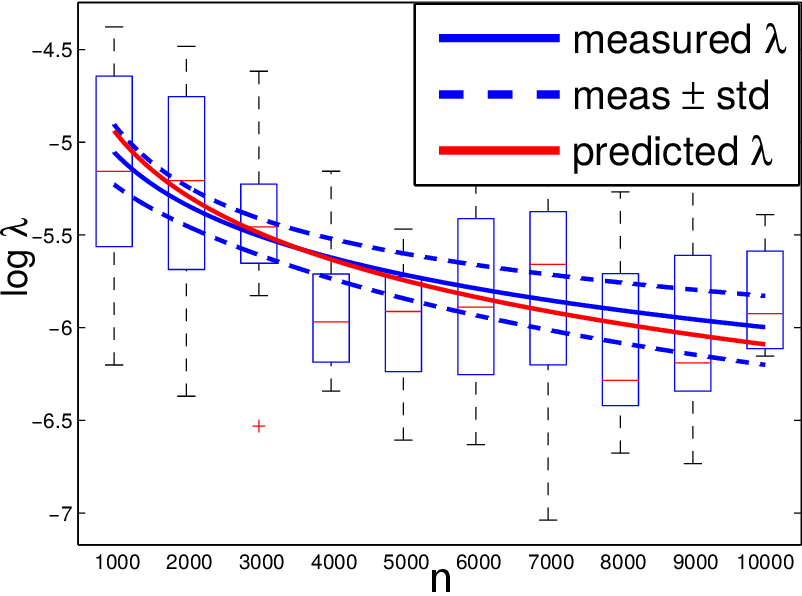}
		\hfill
		\includegraphics[width=0.32\textwidth,height=0.23\textwidth]{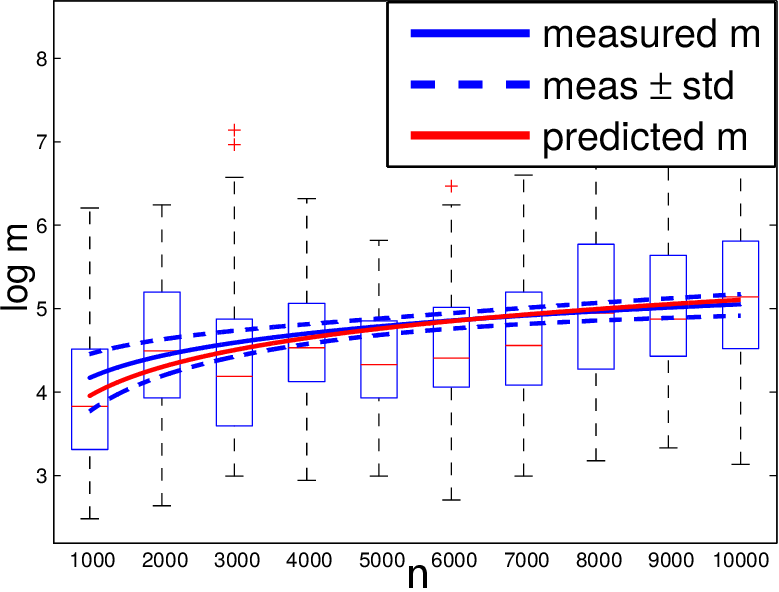}
		\caption{Comparison of theoretical and simulated rates for: excess risk ${\cal E}(\widehat{f}_{\la,M}) - \inf_{f \in \hh}{\cal E}(f)$, $\la$, $M$, w.r.t. $n$ (100 repetitions). Parameters $r = 11/16, \gamma = 1/8$ (top), and $r = 7/8, \gamma = 1/4$ (bottom).\label{fig:sim-1}}
		\vspace{-0.3cm}
	\end{figure}
	\section{Conclusion}
	In this paper, we provide a thorough analyses of the generalization properties of random features with ridge regression.
	We consider a statistical learning theory setting where data are noisy and sampled at random. Our main results show that there are
	large classes of learning problems where random features allow to reduce computations while preserving optimal  statistical accuracy of exact kernel ridge regression.
	This in contrast with previous state of the art results suggesting computational gains needs to be traded-off with statistical accuracy. 
	Our results open several venues for both theoretical and empirical work. As mentioned in the paper, it would be interesting to analyze 
	random features with empirical leverage scores. This is immediate if input points are fixed, but our approach should allow to also consider the statistical learning setting. Beyond KRR, it would be interesting to analyze random features together with other approaches, in particular accelerated and stochastic gradient methods, or distributed techniques. It should be possible to extend the results in the paper to consider these cases. A more substantial generalization would be to consider loss functions other than quadratic loss, since this require different techniques from empirical process theory.

		\paragraph{Acknowledgments}
		The authors gratefully acknowledge the contribution of Raffaello Camoriano  who was involved in the initial phase of this project. These preliminary result appeared in the 2016 NIPS workshop ``Adaptive and Scalable Nonparametric Methods in ML''. This work is funded by the Air Force project FA9550-17-1-0390 (European Office of Aerospace Research and Development) and by the FIRB project RBFR12M3AC (Italian Ministry of Education, University and Research).

	{\small
		\bibliographystyle{unsrt}
		\bibliography{biblio}
	}
	
	\newpage 
	
	\appendix

	{\Large \bf Generalization Properties of Learning with Random Features}
	
	{\Large \bf Supplementary Materials}
	
	$ $\\
	The supplementary materials are divided in the following four section
	
	{\em A. Proofs} - where the proofs for Section 3 are provided
	
	{\em B. Concentration Inequalities} - where probabilistic tools necessary for the proofs are recalled
	
	{\em C. Operator Inequalities} - where some analytic inequalities used in the proofs are recalled
	
	{\em D. Auxiliary Results} - where some technical lemmas necessary to the proof are derived
	
	{\em E. Examples of Random Features} - where examples of random features expansion are recalled

	\section{Proofs}\label{sect:proofs}
	In Sect.~\ref{sect:notation}, the notation is introduced and some standard identities are recalled. In Sect.~\ref{sect:analytic-results}, the excess risk is decomposed in five terms (Eq.~\eqref{eq:excess-risk-expansion}-\eqref{eq:excess-risk-expansion-term5}) that are further simplified in Lemma~\ref{lm:dec-term2},~\ref{lm:dec-term3},~\ref{lm:dec-term4},~\ref{lm:dec-term5}. The complete decomposition is presented in Thm.~\ref{thm:geo-dec}. In Sect.~\ref{sect:prob-estimates}, the terms in decomposition are bounded in probability, in particular Lemma~\ref{lm:conc-S} bounds the variance term, Lemma~\ref{lm:conc-C} the computational error term, while Lemma~\ref{lm:conc-beta} controls the constants. Finally the proofs of the main results are presented in Section~\ref{sect:proof-main-res} together with the more general results of Thm.~\ref{thm:main-bound}. 
	
	First we recall the assumptions needed to derive the results. They are already presented or implied in the main text, here we collect and number them.
	
	{\bf Assumption~\ref{ass:compatibility}} (Compatibility condition) {\em
		There exists $\alpha \in [0, 1]$ and $F > 0$ such that
		$$ {\cal F}_\infty(\la) \leq F \la^{-\alpha}, \quad  \forall \la > 0.$$
	}
	
	\ba[Random Features are bounded and continuous]\label{ass:kernel-bounded}
	The kernel $K$ has an integral representation as in Eq.~\ref{eq:def-RF-integral}, with $\psi$ continuous in both variables and bounded, that is, there exists $\kappa \geq 1$ such that $|\psi(x,\omega)| \leq \kappa$ for any $x, \in \X$ and $\omega \in \Omega$. The associated RKHS $\hh$ is separable.
	\ea
	Note that the assumption above is satisfied when the random feature is continuous and bounded and the space $X$ is separable (e.g. $\R^d$, $d \in \N$ or any Polish space). Indeed the continuity of $\psi$ implies the continuity of $K$, which, together with the separability of $X$ implies the separability of $\hh$.
	
	\ba[Noise on the $y$ is sub-exponential, and there exists $\fh$]\label{ass:noise}
	For any $x \in \X$
	$$\mathbb{E}[|y|^p~|~ x] \leq \frac{1}{2} p! \sigma^2 B^{p-2}, \quad \forall p \geq 2.$$  
	Moreover there exists $\fh \in \hh$ such that ${\cal E}(\fh) = \inf_{f \in \hh} {\cal E}(f)$.
	\ea
	Note that the above assumption on $y$ is satisfied when $y$ is bounded, sub-gaussian or sub-exponential. In particular, if $|y| \in [-\frac{b}{2}, \frac{b}{2}]$ almost surely, with $b \in (0, \infty)$ then the assumption above is satisfied with $\sigma = B = b$. 
	
	\ba[Effective dimension]\label{ass:intrinsic} Let $\la > 0$.
	There exists $Q > 0$ and $\gamma \in [0,1]$ such that, for any $\la > 0$
	$$ {\cal N}(\la) \leq Q^2 \la^{-\gamma}.$$
	\ea
	It is the first part of Asm.~\ref{ass:prior}, for the sake of clarity we need to split it in two, since many results depend either on the first or on the second part.
	
	\ba[Source condition]\label{ass:source}
	There exists $1/2 \leq r \leq 1$ and $g \in \Ltwo$ such that
	$$ \fh(x) = (L^r g)(x) \quad \textrm{a.s.}$$
	We denote with $R$ the quantity $1 \vee \nor{g}_\rhox$.
	\ea

	\subsection{Kernel and Random Features Operators}\label{sect:notation}
	In this section, we provide the notation, recall some useful facts and define some operators used in the rest of the appendix.
	In the rest of the paper we denote with $\nor{\cdot}$ the operatorial norm and with $\nor{\cdot}_{HS}$ the Hilbert-Schmidt norm. Let ${\cal L}$ be a Hilbert space, we denote with $\scal{\cdot}{\cdot}_{\cal L}$ the associated inner product, with $\nor{\cdot}_{\cal L}$ the norm and with $\tr(\cdot)$ the trace. Let $Q$ be a bounded self-adjoint linear operator on a separable Hilbert space ${\cal L}$, we denote with $\la_{\max}(Q)$ the biggest eigenvalue of $Q$, that is
	$
	\la_{\max}(Q) = \sup_{\nor{f}_{\cal L} \leq 1} \scal{f}{Q f}_{\cal L}.
	$
	Moreover, we denote with $Q_\la$ the operator $Q + \la I$, where $Q$ is a linear operator, $\la \in \R$ and $I$ the identity operator, so for example $\tCnl := \tCn + \la I$.
	Moreover we recall some basic properties of norms in Hilbert spaces. 
	\br\label{rem:dec-norms}
	Let $V_0, \dots, V_t$ with $t \in \N$ be Hilbert spaces. Let $q \in V_0$ and $A_i: V_i \to V_{i-1}$ bounded linear operators and $f \in V_t$. We recall that the identity $q = (A_1)\cdots(A_t)(f)$, implies $\nor{q}_{V_0} \leq \nor{A_1}\dots \nor{A_t} \nor{f}_{V_t}$.
	\er 
	
	Let $\X$ be a probability space and $\rho$ be a probability distribution on $\X\times\R$ satisfying Assumption~\ref{ass:kernel-bounded}. We denote $\rhox$ its marginal on $\X$ and $\rho(y|x)$ the conditional distribution on $\R$. Let $\Ltwo$ be the Lebesgue space of square $\rhox$-integrable functions, with the canonical inner product
	$$\scal{g}{h}_\rhox = \int_{\X} g(x)h(x) d\rhox(x),\quad \forall g,h \in \Ltwo,$$
	and the norm $\nor{g}_\rhox^2 = \scal{g}{g}_\rhox$, for all $g \in \Ltwo$.
	Let $(\Omega, \pi)$ be a probability space and $\psi:\Omega \times \X \to \CC$ be a continuous and bounded map as in Asm.~\ref{ass:kernel-bounded}. Moreover let the kernel $K$ be defined by Eq.~\eqref{eq:def-RF-integral}. We denote with $K_x$ the function $K(x,\cdot)$, for any $x \in \X$. Then the Reproducing Kernel Hilbert Space $\hh$ induced by $K$ is defined by 
	$$\hh = \lspanc{K_x}{x \in \X},\quad \textrm{completed with}\quad \scal{K_x}{K_{x'}}_\hh = K(x,x')~~\forall x,x' \in \X.$$

	We now define the operators needed in the rest of the proofs. Let $n \in \N$, and $(x_1,y_1),\dots,(x_n,y_n) \in X\times\R$ be sampled independently according to $\rho$. 
	
	\bd\label{def:all-ops}
	Let $P: \Ltwo \to \Ltwo$ be the projection operator with the same range of $L$.
	Let $\frho: \X \to \R$ be defined as 
	$$ \frho(x) = \int y d\rho(y|x) ~~ \textrm{a. e.}$$
	\ed
	We now recall a useful characterization of the excess risk, in term of the quantities defined above.
	\br[from \cite{cucker02onthe,vito2005learning}]\label{rem:excess-risk-to-Ltwo}
	When $\int y^2 d\rho$ is finite, then $\frho \in \Ltwo$ and $\frho$ is the minimizer of ${\cal E}$ over all the measurable functions. When $\int K(x,x) d\rhox$ is finite, the range of $P$ and of $L$ is the closure of $\hh$ in $\Ltwo$. When both conditions hold, for any $f \in \Ltwo$ the following hold
	$${\cal E}(f) - \inf_{g \in \hh} {\cal E}(g) = \nor{f - P \frho}_{\rhox}^2 + 2\scal{f}{(I-P)f_\rho}_\rho.$$
	The latter term is zero if $f\in \hh$, but we will also see that it is zero for all the functions defined by $M$ random features.
	Moreover if there exists $\fh \in \hh$ minimizing ${\cal E}$, then Asm.~\ref{ass:source} is equivalent to requiring the existence of $r \geq 1/2$, $g \in \Ltwo$ such that
	\eqal{\label{eq:Pfro-Lrg}
		P \frho = L^r g,
	}
	with $R := \nor{g}_\Ltwo$.
	\er 
	In the following we define analogous operators for the approximated kernel $K_M:= \phi_M(x)^\top \phi_M(x')$, with 
	$$\phi_M(x) := M^{-1/2}(\psi(x,\omega_1),\dots,\psi(x, \omega_M)),$$ for any $x,x' \in \X$,
	where $M \in \N$ and $\omega_1,\dots,\omega_M \in \Omega$ are sampled independently according to $\pi$. We denote with $\psi_{\omega}$ the function $\psi(\cdot, \omega)$ for any $\omega \in \Omega$. According to the following remark, we have that $\psi_{\omega_i} \in \Ltwo$ almost surely.
	\br\label{rm:psi-in-ltwo}
	Under Asm.~\ref{ass:kernel-bounded} and the fact that $\rho$ is a finite measure, $\psi_\omega \in \Ltwo$ almost surely.
	\er 
	Now we are ready for defining the following operators, depending on $\phi_M$ or $K_M$.
	\bd\label{def:all-ops-tilde}
	For all $g \in \Ltwo$, $\beta \in \CC^M$, $\alpha \in \CC^n$ and $i \in \{1,\dots,M\}$, we have
	\begin{itemize}
		\item $\tS: \CC^M \to \Ltwo, \quad (\tS \beta)(\cdot) = \phi_M(\cdot)^\top \beta$,
		\item $\tS^*: \Ltwo \to \CC^M, ~~ (\tS^*g)_i = \frac{1}{\sqrt{M}}\int_X \psi_{\omega_i}(x) g(x)d\rhox(x)$,
		\item $\tL: \Ltwo \to \Ltwo, \quad (\tL g)(\cdot) = \int_\X K_M(\cdot,z)g(z)d\rhox(z)$.
		\item $\tC: \CC^M \to \CC^M, \quad \tC = \int_X \phi_M(x) \phi_M(x)^\top d\rhox(x)$,
		\item $\tCn: \CC^M \to \CC^M, \quad \tCn = \frac{1}{n} \sum_{i=1}^n \phi_M(x_i)\phi_M(x_i)^\top$.
	\end{itemize}
	\ed
	Note that the operators above satisfy the properties in the following remark.
	\br[from \cite{caponnetto2007optimal}]\label{rem:S-C-L}
	Under Asm.~\ref{ass:kernel-bounded} the linear operators $L$ is trace class and the linear operators $\tL, \tC, \tS, \tCn, \tSn$ are finite dimensional. Moreover we have that $L = SS^*$, $\tL = \tS \tS^*$, $\tC = \tS^*\tS$ and $\tCn = \tSn^*\tSn$. Finally $L, \tL, \tC, \tCn$ are self-adjoint and positive operators, with spectrum is $[0, \kappa^2]$.
	\er  
	In the next remark we rewrite $\widehat{f}_{\la, M}$ in terms of the operators introduced above.
	\br\label{rem:rf-in-ltwo}
	Let $\widehat{f}_{\la, M}$ defined as in Eq.~\ref{eq:algo-rf}. Under Assumption~\ref{ass:kernel-bounded}, $\widehat{f}_{\la, M} \in \Ltwo$ almost surely, since $\psi_\omega$ is in $\Ltwo$ almost surely (Rem.~\ref{rm:psi-in-ltwo}) and $\widehat{f}_{\la, M}$ is a linear combination of $\psi_{\omega_1},\dots, \psi_{\omega_M}$. In particular, 
	$$ \widehat{f}_{\la, M} = \tS \tCnl^{-1} \tSn^* \yn.$$
	\er
	%
	%
	
	
	\subsection{Analytic Result}\label{sect:analytic-results}
	In this subsection we decompose analytically the excess risks in different terms, that will be bounded, via concentration inequalities, in the next section. 
	Under Asm.~\ref{ass:kernel-bounded}, since $\widehat{f}_{\la, M} \in \Ltwo$ almost surely, we have 
	\eqal{\label{eq:excess-risk-to-Ltwo}
		\EE(\widehat{f}_{\la,M}) - \inf_{f \in \hh}\EE(f) = \nor{\widehat{f}_{\la,M} - P\frho}^2_\rhox+ 2\scal{   \widehat{f}_{\la,M}}{(I-P)f_\rho}_\rho,
	}
	(for more details see Rem.~\ref{rem:excess-risk-to-Ltwo},~\ref{rem:rf-in-ltwo}).
	In our analysis we decompose the first term considering, 
	\eqal{\label{eq:excess-risk-expansion}
		\widehat{f}_{\la,M} - P \frho  &= \widehat{f}_{\la,M} -  \tS \tCnl^{-1}\tS^*\frho  \\
		& \quad + ~~ \label{eq:excess-risk-expansion-term2} \tS \tCnl^{-1}\tS^*(I-P)\frho \\
		& \quad + ~~ \label{eq:excess-risk-expansion-term3} \tS \tCnl^{-1}\tS^* P \frho - \tL\tLl^{-1} P \frho \\
		& \quad + ~~ \label{eq:excess-risk-expansion-term4} \tL\tLl^{-1} P \frho - \L \Ll^{-1} P \frho \\
		& \quad + ~~ \label{eq:excess-risk-expansion-term5} \L \Ll^{-1} P \frho - P \frho, 
	}
	and further show that almost surely
	$$
	\scal{   \widehat{f}_{\la,M}}{(I-P)f_\rho}_\rho=0.
	$$
	We comment on the role of the various terms in the decomposition.
	The first controls the variance of the outputs $y$, the second the interaction between the space of models spanned by $\psi_{\omega_1},\dots,\psi_{\omega_M}$ and $\hh$, the third the approximation of the inverse covariance operator $\tCnl^{-1}$, the fourth controls how close is the integral operator $L_M$ to $L$, while the last controls the approximation error of the models in $\hh$. The $\Ltwo$ norm of $\widehat{f}_{\la, M} - P\frho$ is bounded by the sum of the $\Ltwo$ norms of the terms, that are further bounded in Lemma.~\ref{lm:dec-term2},~\ref{lm:dec-term3},~\ref{lm:dec-term4},~\ref{lm:dec-term5}.
	The final analytical decomposition is given in Thm.~\ref{thm:geo-dec}. First we need a preliminary result.
	\blm\label{lm:L-characterization}
	Under Asm.~\ref{ass:kernel-bounded}, the operator $L$ is characterized by
	$$ L  = \int \psi_\omega \otimes \psi_\omega d\pi(\omega).$$
	\elm
	\bpr
	By Asm.~\ref{ass:kernel-bounded}, we have that $\psi_\omega \in \Ltwo$ almost surely and uniformly bounded. By using the kernel expansion of Eq.~\eqref{eq:def-RF-integral}, the linearity of the Bochner integral and of the dot product, we have that for any $f, g \in \Ltwo$ the following holds
	\eqals{
		\scal{f}{L g}_\rhox &= \int f(x) K(x,z) g(z) d\rhox(x)d\rhox(z) \\
		& = \int f(x) \psi(x,\omega) \psi(z,\omega) g(z) d\rhox(x)d\rhox(z) d\pi(\omega) \\
		& = \int \scal{f}{\psi_\omega}_\rhox \scal{g}{\psi_\omega}_\rhox ~d\pi(\omega) = \scal{f}{\int \psi_\omega  \scal{g}{\psi_\omega}_\rhox ~d\pi(\omega)}_\rhox \\
		& = \scal{f}{\left(\int \psi_\omega \otimes \psi_\omega d\pi(\omega)\right) g}_\rhox.
	}
	\epr
	Now we are ready to prove that the second term of the expansion in Eq.~\ref{eq:excess-risk-expansion} is zero. We obtain this result by proving that $\nor{(I-P)\psi_\omega} = 0$ almost everywhere.
	\blm\label{lm:dec-term2}
	Under Asm.~\ref{ass:kernel-bounded}, the following holds for any $\la > 0, M, n \in \N$,
	$$\nor{\tS \tCnl^{-1}\tS^*(I-P)\frho}_{\rhox} = 0~~\textrm{a. s.},$$
	and moreover
	$$
	\scal{   \widehat{f}_{\la,M}}{(I-P)f_\rho}_\rho=0~~\textrm{a. s.},
	$$
	where the latter result holds more generally for any $f\in \text{range}(\tS).$
	\elm
	\bpr 
	Note that, since $P$ is the projection operator on the range of $L$ and $L$ is trace class, then $(I - P)L = 0$, this implies that $\tr((I-P)L(I-P)) = 0$.
	By the characterization of $L$ given in Lemma~\ref{lm:L-characterization}, the linearity of the bounded operator $I-P$ and of the trace, we have that
	\eqals{
		0 & = \tr \left((I-P)L(I-P)\right)  = \tr\left((I-P)\left(\int \psi_\omega \otimes \psi_\omega d\pi(\omega) \right)(I-P)\right)\\
		& = \int \tr\left((I-P) (\psi_\omega \otimes \psi_\omega) (I-P) \right) d\pi(\omega) \\
		& = \int \nor{(I-P) \psi_\omega}_\rhox^2 d\pi(\omega),
	}
	where the last step is due to the fact that  $\tr(A(v \otimes v) A) = \tr(Av ~ \otimes Av) = \nor{Av}_\rhox^2$ for any bounded self adjoint operator $A$ and any function $v \in \Ltwo$. The equation above implies that $(I-P) \psi_\omega = 0$ almost surely on the support of $\pi$.
	Now we study $\tS^*(I-P)$, for any $\beta \in \R^M$ and any $f \in \Ltwo$ we have
	\eqals{
		\scal{\beta}{\tS^*(I-P)f}_{\R^M} &= \frac{1}{\sqrt{M}} \sum_{i=1}^M \beta_i \scal{(I-P)\psi_{\omega_i}}{f}_\rhox = 0 ~~\textrm{a. s.},
	}
	where the last step is due to the fact that $\scal{0}{v} = 0$ for any $v$ and  $(I-P)\psi_{\omega_i} = 0$ almost surely, since $\omega_i$ are distributed according to $\pi$ and $(I-P)\psi_\omega = 0$ almost surely on the support of $\pi$. Now
	$$\nor{\tS \tCnl^{-1} \tS^*(I-P) \frho}_\rhox \leq \nor{\tS \tCnl^{-1}} \nor{\tS^*(I-P)}\nor{\frho}_\rhox = 0 ~~\textrm{a. s.}$$
	Similarly for  $f\in \text{range}(\tS)$ it exists $\beta \in \R^M$ such that $f= \tS \beta$, so that  
	$$\scal{f}{(I-P)f_\rho}_\rho= \scal{\tS\beta}{S(I-P)f_\rho}_\rho=\scal{\beta}{\tS^*(I-P)f_\rho}_{\R^M}\le \nor{\beta}_{\R^M} \nor{\tS^*(I-P)} \nor{f_\rho}_\rho
	$$\epr
	\blm\label{lm:dec-term3}
	Under Asm.~\ref{ass:kernel-bounded}, and Eq.~\eqref{eq:Pfro-Lrg} the following holds for any $\la > 0, M,n \in \N$
	$$ \nor{\tS \tCnl^{-1}\tS^* P \frho - \tL\tLl^{-1} P \frho} \leq R \kappa^{2r-1} \nor{\tLl^{-1/2}\L^{1/2}}\nor{\tS \tCnl^{-1}\tCl^{-1/2}} \nor{\tCl^{-1/2} (\tC - \tCn)}.$$
	\elm
	\bpr
	First of all we recall that $Z^*f(ZZ^*) = f(Z^*Z)Z^*$ for any continuous spectral function and any compact operator $Z$. By the characterization of $\tL$ in Rem.~\ref{rem:S-C-L} under Asm.~\ref{ass:kernel-bounded}, we have
	$$\tL\tLl^{-1} = \tS\tS^*(\tS\tS^* + \la I)^{-1} = \tS (\tS^* \tS + \la I)^{-1}\tS^* = \tS \tCl^{-1}\tS^*,$$
	since $(\cdot + \la I)^{-1}$ is a continuos spectral function on $[0, \infty)$, which contains the spectrum of $L$ that is in $[0,\kappa^2]$. Equivalently, the equation above could be proven algebraically via the Woodbury identity. Now we have 
	$$ (\tS \tCnl^{-1}\tS^* - \tL\tLl^{-1}) P \frho = \tS(\tCnl^{-1} - \tCl^{-1})\tS^* P \frho = \tS\tCnl^{-1}(\tC - \tCn)\tCl^{-1}\tS^* P \frho,$$
	where the last step is due to the identity $A^{-1} - B^{-1} = A^{-1}(B-A)B^{-1}$ valid for any bounded invertible linear operator $A, B$. In particular by multiplying and dividing by $\tCl^{1/2}$ we have the following decomposition
	\eqals{
		\tS\tCnl^{-1}(\tC - \tCn)\tCl^{-1}\tS^* P \frho ~~=~~ (\tS\tCnl^{-1}\tCl^{1/2}) ~~ (\tCl^{-1/2}(\tC - \tCn)) ~~ (\tCl^{-1}\tS^* P \frho).}
	The result is given by bounding the norm of the lhs of the identity above, by the product of the norms of the parentheses on the rhs (see Rem.~\ref{rem:dec-norms}). 
	Note that by applying Eq.~\eqref{eq:Pfro-Lrg}, we have that there exists $g \in \Ltwo$, such that $P \frho = L^r g$ and by dividing and multiplying for $\tLl^{1/2}$, we have $$\tCl^{-1}\tS^* P \frho = (\tCl^{-1}\tS^* \tLl^{1/2}) ~ (\tLl^{-1/2}\L^{1/2}) ~ L^{r-1/2}~g.$$ 
	Now note that, by Asm.~\ref{ass:source}, we have $r \geq 1/2$, $\nor{g}_\rhox \leq R$ and $\nor{L^{r-1/2}} \leq \kappa^{2r-1}$ since $L$ is compact with the spectrum in $[0,\kappa^2]$ and $2r-1 \geq 0$. 
	By the fact that $( \cdot + \la I)^{-2}$ is a continuous spectral function on $[0,\infty)$ containing the spectrum of $\tC$, we have that $\tS \tCl^{-2} \tS^* = \tLl^{-2}\tL$ and so for any $\la > 0$ 
	$$\nor{\tCl^{-1}\tS^*\tL^{1/2}}^2 =  \nor{\tL^{1/2} \tS \tCl^{-2} \tS^* \tL^{1/2}} = \nor{\tLl^{-2}\tL^{2}} \leq 1.$$
	\epr
	\blm\label{lm:dec-term4}
	Under Asm.~\ref{ass:kernel-bounded}, and Eq.~\eqref{eq:Pfro-Lrg} the following holds for any $\la > 0, M \in \N$
	$$ \nor{(\L \Ll^{-1} - \tL\tLl^{-1}) P \frho} \leq  R\sqrt{\la} \nor{\tLl^{-1/2}\Ll^{1/2}} \nor{\Ll^{-1/2}(L - \tL)}^{2r-1}\nor{\Ll^{-1/2}(L - \tL)\Ll^{-1/2}}^{2-2r}$$
	\elm
	\bpr 
	By the algebraic identities $A(A + \la I) = I - \la (A + \la I)^{-1}$ valid for any bounded positive operator and $A^{-1} - B^{-1} = A^{-1}(B - A)B^{-1}$ valid for any invertible bounded operators, we have
	$$(\L \Ll^{-1} - \tL\tLl^{-1})  P \frho  = \la(\tLl^{-1} - \Ll^{-1})  P \frho = \la\tLl^{-1}(L  - \tL)\Ll^{-1}  P \frho.$$ 
	By applying Eq.~\eqref{eq:Pfro-Lrg}, we have that there exists $g \in \Ltwo$, such that $P \frho = L^r g$, so by multiplying and dividing by $\tL^{1/2}$, we preform the following decomposition
	\eqals{
		(\L \Ll^{-1} - \tL\tLl^{-1})P\frho ~=~ \sqrt{\la} ~ (\sqrt{\la}\tLl^{-1/2}) ~ (\tLl^{-1/2}\Ll^{1/2})~(\Ll^{-1/2}(L - \tL)\Ll^{-(1-r)})~(\Ll^{-r}\L^r)~g.
	}
	The result is given by bounding the norm of the lhs of the identity above, by the product of the norms of the parentheses on the rhs (see Rem.~\ref{rem:dec-norms}).
	Note that $\nor{\sqrt{\la}\tLl^{-1/2}} \leq 1$ and $\nor{\Ll^{-r}\L^r} \leq 1$ for any $\la >0$ and $\nor{g}_\rhox \leq R$.
	Now we apply Proposition~\ref{prop:interp} on $\nor{\Ll^{-1/2}(L - \tL)\Ll^{-(1-r)}}$, indeed note that $0 \leq 1 - r \leq 1/2$, so by setting $\sigma = 2-2r$, $X = \Ll^{-1/2}(L - \tL)$, $A = \Ll^{-1/2}$ and applying the proposition, we have
	$$\nor{\Ll^{-1/2}(L - \tL)\Ll^{-\sigma/2}} \leq \nor{\Ll^{-1/2}(L - \tL)}^{2r-1}\nor{\Ll^{-1/2}(L - \tL)\Ll^{-1/2}}^{2-2r}.$$
	\epr
	\blm\label{lm:dec-term5}
	Under Asm.~\ref{ass:kernel-bounded}, and Eq.~\eqref{eq:Pfro-Lrg} the following holds for any $\la > 0$,
	$$\nor{\L \Ll^{-1} P \frho - P \frho} \leq R \la^r.$$
	\elm
	\bpr
	By the identity $A(A + \la I)^{-1} = I - \la (A + \la)^{-1}$ valid for $\la > 0$ and any bounded self-adjoint positive operator and by Eq.~\eqref{eq:Pfro-Lrg} for which there exists $g \in \Ltwo$ such that $P\frho = \L^r g$, we have
	\eqal{\label{eq:further-dec-term5}
		(I - \L \Ll^{-1}) P \frho = \la \Ll^{-1} P \frho = \la \Ll^{-1} L^r g ~~=~~ \la^r ~~ (\la^{1-r}\Ll^{-(1-r)}) ~~ (\Ll^{-r}\L^r) ~~ g.}
	The result is given by bounding the norm of the lhs of Eq.~\ref{eq:further-dec-term5}, by the product of the norms of the parentheses on the rhs (see Rem.~\ref{rem:dec-norms}).
	Note that $\nor{\la^{1-r}\Ll^{-(1-r)}} \leq 1$ and $\nor{\Ll^{-r}\L^r} \leq 1$, while $R := \nor{g}_\rhox$ according to Eq.~\eqref{eq:Pfro-Lrg}.
	\epr 
	\bt[Analytic Decomposition]\label{thm:geo-dec}
	Under Assumptions~\ref{ass:kernel-bounded} and Eq.~\eqref{eq:Pfro-Lrg} let $\widehat{f}_{\la, M}$ as in Eq.~\ref{eq:algo-rf}. For any $\la > 0$ and $M \in \N$, the following holds
	\eqal{
		|\EE(\widehat{f}_{\la, M}) - \inf_{f \in \hh}\EE(f)|^{1/2}  \;\;\leq\;\; \beta\;(\;\;\underbrace{{\cal S}(\la, M, n)}_\textrm{Sample Error}\;\; + \underbrace{{\cal C}(\la,M)}_\textrm{Computational Error} + \underbrace{R\la^{v}}_\textrm{Approximation Error})
	}
	where $v = \min(r,1)$, 
	\begin{enumerate}
		\item ${\cal S}(\la,M,n) := \nor{\tCl^{-1/2}(\tSn^*\widehat{y} - \tS^*\frho)}_\rhox + R\kappa^{2r-1}\nor{\tCl^{-1/2}(\tC - \tCn)}$,
		\item ${\cal C}(\la, M) := R\sqrt{\la}\nor{\Ll^{-1/2}(L - \tL)}^{2v-1}\nor{\Ll^{-1/2}(L - \tL)\Ll^{-1/2}}^{2-2v}$,
		\item $\beta := \max(1,(1 - \beta_1)^{-1}) \max(1,(1-\beta_2)^{-1/2})$, with $\beta_1 := \la_{\max}(\tCl^{-1/2}(\tC - \tCn)\tCl^{-1/2})$ and $\beta_2 := \la_{\max}(\Ll^{-1/2}(L - \tL)\Ll^{-1/2})$.
	\end{enumerate}
	\et
	\bpr
	Under Asm.~\ref{ass:kernel-bounded}, the excess risk is characterized by Eq.~\ref{eq:excess-risk-to-Ltwo} as we recalled at the beginning of this subsection. We decomposed the quantity $\widehat{f}_{\la, M} - P\frho$ according to the terms in Eq.~\ref{eq:excess-risk-expansion}-\ref{eq:excess-risk-expansion-term5}. The $\Ltwo$ norm of $\widehat{f}_{\la, M} - P\frho$ is bounded by the sum of the $\Ltwo$ norms of the terms, that are further bounded in Lemma.~\ref{lm:dec-term2},~\ref{lm:dec-term3},~\ref{lm:dec-term4},~\ref{lm:dec-term5}.
	In particular, for the first term, by writing $\widehat{f}_{\la,M}$ in terms of the linear operators in Def.~\ref{def:all-ops-tilde} (see Rem.~\ref{rem:rf-in-ltwo}) and by multipling and dividing by $\tCl^{1/2}$ we have
	$$\widehat{f}_{\la,M} - \tS\tCnl^{-1}\tS^*\frho = \tS\tCnl^{-1}(\tSn^*\yn - \tS^*\frho) ~~ = ~~ (\tS\tCnl^{-1}\tCl^{1/2}) ~~ (\tCl^{-1/2}(\tSn^*\yn - \tS^*\frho)),$$ 
	then we bound the norm of the term with the norm of the parenthesis in the decomposition above (see Rem.~\ref{rem:dec-norms}).
	By collecting the result above with the bounds in Lemma.~\ref{lm:dec-term2},~\ref{lm:dec-term3},~\ref{lm:dec-term4},~\ref{lm:dec-term5}, we have
	$$|\EE(\widehat{f}_{\la, M}) - \inf_{f \in \hh}\EE(f)|^{1/2} \leq 
	b_1 A  + b_1 b_2 B + b_3 {\cal C}(\la, M) + D,$$
	where $b_1 := \nor{\tS\tCnl^{-1}\tCl^{1/2}}$, $b_2 := \nor{\tLl^{-1/2}L^{1/2}}$, $b_3 :=\nor{\tLl^{-1/2}L_\la^{1/2}}$, \\ $A := \nor{\tCl^{-1/2}(\tSn^*\widehat{y} - \tS^*\frho)}_\rhox$, $B:= R\kappa^{2r-1}\nor{\tCl^{-1/2}(\tC - \tCn)}$ and $D := R\la^r$.
	Now note that $b_2 \leq b_3$ for any $\la > 0$ since, for any $X,T$, bounded linear operators, with $T$ positive, by multiplying and dividing for $T_\la$ the following holds
	\eqal{\label{eq:ABla-geq-AB}
		\nor{XT} \leq \nor{XT_\la}\nor{T_\la^{-1}T},
	}
	and $\nor{T_\la^{-1}T} \leq 1$, for any $\la > 0$. Since $A, B, {\cal C}(\la, M), D$ will contribute to the rates of the bound, while $b_1, b_3$ are responsible for the numerical constants, we are going to bound the excess risk in order to collect $b_1, b_3$ in a multiplicative term as follows 
	$$|\EE(\widehat{f}_{\la, M}) - \inf_{f \in \hh}\EE(f)|^{1/2} \leq \max(1,b_1)\max(1,b_3)(A + B + {\cal C}(\la, M) + D).$$
	Finally, we further simplify $b_1, b_3$, in particular we apply Prop.~\ref{prop:invAtoinvB} in the appendix, obtaining $b_3 \leq (1-\beta_2)^{-1/2}$. For $b_1$ note that,
	$$\nor{\tS\tCnl^{-1}\tCl^{1/2}} \leq \nor{\tS\tCnl^{-1/2}}\nor{\tCnl^{-1/2}\tCl^{1/2}} \leq \nor{\tCnl^{-1/2}\tCl^{1/2}}^2,$$
	since, $\nor{\tS\tCnl^{-1/2}} \leq \nor{\tCl^{1/2}\tCnl^{-1/2}}$ for the same reasoning in Eq.~\eqref{eq:ABla-geq-AB}. Then we apply Prop.~\ref{prop:invAtoinvB} in the appendix, obtaining $b_1 \leq (1-\beta_1)^{-1}$.
	\epr
	In the next subsection, we are going to find probabilistic estimates for terms in the analytic decomposition of the excess risk in Thm.~\ref{thm:geo-dec}.
	\subsection{Probabilistic Estimates}\label{sect:prob-estimates}
	In this section we provide bounds in probability for the quantities $\beta, {\cal S}, {\cal C}$ of Thm.~\ref{thm:geo-dec} and for the empirical effective dimension. The notation is introduced in Sect.~\ref{sect:notation}. 
	First, we fix the notation on the random variables used in the rest of the subsection.
	Recall that $\beta, {\cal S}, {\cal C}$ are expressed with respect to the random variables $\bzeta:=((x_1, y_1), \dots, (x_n, y_n))$, and $\bomega := (\omega_1, \dots, \omega_M)$. The associated sample space is $W := Z \times \Omega^M$ and $Z := (\X \times \R)^n$, with probability measure $\PP := \rho^{\otimes n} \otimes \pi^{\otimes M}$. 
	In particular let $Q \subseteq W$ be an event, we denote with $Q|\bomega$ the subset of $Z$ associated to the event $Q$ given $\bomega$, that is $Q | \bomega := \{ \bzeta ~|~ (\bzeta,\bomega) \in Q\}$. By denoting $\rho^{\otimes n}$ with $\Pzeta$ and $\pi^{\otimes M}$ by $\Pomega$, we recall that 
	\eqal{\label{eq:prob-cond}
		\PP(Q) = \int_{\Omega^M} \Pzeta(Q|\bomega) d\Pomega(\bomega).
	}
	Moreover, we recall the following basic facts about ${\cal F}_\infty(\la)$ and ${\cal N}(\la)$. We can characterize the upper and lower bounds for ${\cal F}_\infty(\la)$, in particular we have that ${\cal F}_\infty(\la) \leq \kappa^2 \la^{-1}$ when $\psi$ is uniformly bounded by $\kappa$ (see Asm.~\ref{ass:kernel-bounded}), moreover ${\cal F}_\infty(\la) \geq {\cal N}(\la)$ indeed  ${\cal N}(\la)$ is characterized by ${\cal N}(\la) = \mathbb{E}_\omega \nor{(L+\la I)^{-1/2} \psi_\omega}^2_\rhox$ (see Eq.~\ref{eq:Nla-characterization}), so
	$$ {\cal N}(\la) = \mathbb{E}_\omega \nor{(L+\la I)^{-1/2} \psi_\omega}^2_\rhox \leq \sup_{\omega \in \Omega} \nor{(L+\la I)^{-1/2} \psi_\omega}^2_\rhox = {\cal F}_\infty(\la).$$

	\subsubsection{Estimates for ${\cal S}(\la, M, n)$}
	The next lemma bounds the first term of ${\cal S}(\la, M, n)$ and use a similar technique to the one in \cite{caponnetto2007optimal}, while Lemma~\ref{lm:conc-S} bounds the whole ${\cal S}(\la, M, n)$.
	First we need to introduce ${\cal N}_M(\la)$ that is the effective dimension induced by the kernel $K_M$. For any $\la > 0$ define ${\cal N}_M(\la)$ as follows,
	$${\cal N}_M(\la): = \tr((\tL + \la I)^{-1}\tL).$$
	In Prop.~\ref{prop:emp-eff-dim} in the appendix, we bound ${\cal N}_M(\la)$ in terms of the effective dimension ${\cal N}(\la)$ that is the one associated to the kernel $K$. Prop.~\ref{prop:emp-eff-dim} refines the result of Prop.~1 of \cite{rudi2015less}, with simpler proof and slightly improved constants. 
	
	\blm\label{lm:sampling-error}
	Let $\delta \in (0,1]$, $n, M \in \N$ and $\la > 0$. Given $\omega_1, \dots, \omega_M \in \Omega$, under Assumptions~\ref{ass:kernel-bounded}, \ref{ass:noise}, the following holds with probability at least $1- \delta$
	\eqals{
		\nor{\tCl^{-1/2}(\tSn^*\yn  - \tS^* \frho)} \leq 2 \left(\frac{B \tkappa}{\sqrt{\la}n} + \sqrt{\frac{\sigma^2{\cal N}_M(\la)}{n}} \right)\log \frac{2}{\delta}
		.}
	\elm
	\bpr
	In this proof we bound the quantity under study, by using the Bernstein inequality for sum of zero-mean random vectors (see Prop.~\ref{prop:tail-vectors} in the appendix).  
	Since $\tSn^*\yn = n^{-1} \sum_{i=1}^n \phi_M(x_i) y_i$ (see Def.~\ref{def:all-ops-tilde}) we have
	$$\tCl^{-1/2}(\tSn^*\yn  - \tS^*\frho) = \frac{1}{n}\sum_{i=1}^n \zeta_i,$$
	where $\zeta_1,\dots,\zeta_n$ are defined as $\zeta_i = z_i - \mu$ with $z_i := \tCl^{-1/2} \phi_M(x_i) y_i$, and $\mu \in \R^M$ defined as $\mu := \tCl^{-1/2}\tS^*\frho$, for $1 \leq i \leq n$.
	Note that $\zeta_1,\dots, \zeta_n$ are independent and identically distributed random vectors given $\omega_1,\dots, \omega_M$, since $(x_1,y_1),\dots, (x_n, y_n)$ are assumed i.i.d. with respect to $\rho$. Moreover note that, by definition of  $\frho$,
	\eqals{
		\int \phi_M(x)y d\rho(x,y) = \int y d\rho(y|x)d\rhox(x) = \int \phi_M(x) \frho(x) d\rhox(x) =  \tS^*\frho.
	}
	So, by linearity of the expectation the $\zeta_i$,
	$$ \mathbb{E} z_i = \tCl^{-1/2} \int \phi_M(x)y d\rho(x,y) = \tCl^{-1/2} \tS^*\frho = \mu,$$
	which implies that $\zeta_i = z_i - \mu$ is a zero-mean random variable for $1 \leq i \leq n$. 
	Let $z$ be another random variable independent and identically distributed as the $z_i$'s. To apply the Bernstein inequality for random vectors, we need to bound their moments. First of all note that for any $p \geq 1$
	\eqals{
		\mathbb{E}\nor{\zeta_i}^p & ~=~ \mathbb{E} \nor{z_i - \mu}^p ~=~ \mathbb{E} \nor{z_i - \mathbb{E} z}^p \\
		& \leq ~ \mathbb{E}_{z_i}\mathbb{E}_{z} \nor{z_i - z}^p ~\leq~ 2^{p-1} \mathbb{E}_{z_i}\mathbb{E}_{z} (\nor{z_i}^p + \nor{z}^p) ~=~ 2^p \mathbb{E}_{z} \nor{z}^p.
	}
	In particular, by applying Asm.~\ref{ass:kernel-bounded} and Asm.~\ref{ass:noise}, we have 
	\eqals{
		\mathbb{E}_{z} \nor{z}^p & = \int_{\X \times \R} \nor{\tCl^{-1/2} \phi_M(x) y}^p d\rho(x,y) = \int_{\X} \nor{\tCl^{-1/2} \phi_M(x)}^p ~\int |y|^p d\rho(y|x) ~d\rhox(x) \\
		& \leq  \frac{1}{2} p! \sigma^2 B^{p-2} \int_{\X} \nor{\tCl^{-1/2} \phi_M(x)}^p d\rhox(x) \\
		& \leq \frac{1}{2} p! \sigma^2 B^{p-2} ~ \left(\sup_{x \in \X} \nor{\tCl^{-1/2} \phi_M(x)}^{p-2}\right) \int_{\X}\nor{\tCl^{-1/2} \phi_M(x)}^2 d\rhox \\
		&= \frac{1}{2} p! \sqrt{J(\la) \sigma^2}^2 \left(\frac{B \kappa}{\sqrt{\la}}\right)^{p-2}.
	}
	where $J(\la) = \int_X \nor{\tCl^{-1/2} \phi_M(x)}^2 d\rhox(x)$, while 
	$\nor{\tCl^{-1/2} \phi_M(x)} \leq \tkappa/\sqrt{\la}$ a.s. is given by
	\eqals{
		\|\tCl^{-1/2} & \phi_M(x)\|^2 \leq \frac{1}{\la}\sup_{x \in X} \nor{\phi_M(x)}^2 
		= \frac{1}{\la M} \sup_{x \in X} \sum_{i=1}^M |\psi_{\omega_i}(x)|^2 \\
		& \leq \frac{1}{\la M} \sum_{i=1}^M \sup_{x \in X}  |\psi_{\omega_i}(x)|^2 \leq \frac{1}{\la M} \sum_{i=1}^M \sup_{\omega \in \Omega, x \in X}  |\psi_{\omega}(x)|^2 \leq \left(\frac{\tkappa}{\sqrt{\la}}\right)^2,
	} 
	where the last step is due to Asm.~\ref{ass:kernel-bounded}. Finally, to concentrate the sum of random vectors, we apply Prop.~\ref{prop:tail-vectors}. To conclude the proof we need to prove that $J(\la) = {\cal N}_M(\la)$. Note that, by Rem.~\ref{rem:S-C-L}, we have that $\tL = \tS\tS^*$ and $\tC = \tS^*\tS$, so
	$${\cal N}_M(\la) = \tr \tL \tLl^{-1}  = \tr \tS^* \tLl^{-1}\tS =  \tr \tC \tCl^{-1},$$
	since $\tL = \tS\tS^*$ and $\tS^* \tLl^{-1}\tS = \tC \tCl^{-1}$. By the the ciclicity of the trace and the definition of $\tC$ in Def.~\ref{def:all-ops-tilde}, we have
	$$\tr \tC \tCl^{-1} = \int_X \tr (\phi_M(x) \phi_M(x)^\top \tCl^{-1})  d\rhox(x) = \int_X \nor{\tCl^{-1/2} \phi_M(x)}^2 d\rhox(x) = J(\la).$$
	\epr 
	\blm[Bounding ${\cal S}(\la,M,n)$]\label{lm:conc-S}
	Let $\delta \in (0,1/3]$, $n \in \N$ and let ${\cal S}(\la, M, n)$ be as in Thm.~\ref{thm:geo-dec}, point 1. Let $\bar{B} = B + 2 R\tkappa^{2r}$, $\bar{\sigma} = \sigma + \sqrt{R} \tkappa^r$. Under Asm.~\ref{ass:kernel-bounded},~\ref{ass:noise}
	the following holds with probability at least $1 - 3\delta$
	\eqal{\label{eq:event-S}
		{\cal S}(\la,m,n) \leq 4 \left(\frac{\bar{B} \tkappa}{\sqrt{\la}n} + \sqrt{\frac{\bar{\sigma}^2{\cal N}(\la)}{n}} \right)\log \frac{2}{\delta},
	}
	when $0 < \la < \nor{L}$ and $M \geq (4 + 18{\cal F}_\infty(\la))\log \frac{12\tkappa^2}{\la \delta}$.
	\elm
	\bpr
	Let $0 < \la < \nor{L}$ and $M \geq (4 + 18{\cal F}_\infty(\la))\log \frac{12\tkappa^2}{\la \delta}$ (the assumption on $\la, M$ are necessary for the application of Prop.~\ref{prop:emp-eff-dim}). Let $Q \subseteq W$ be the event satisfying Eq.~\ref{eq:event-S}. The goal is to prove that the probability associated to the event $Q$ is $\PP(Q) \geq 1 - 3\delta$.
	Since the quantity ${\cal S}(\la,M,n)$ is defined in Thm.~\ref{thm:geo-dec} as
	$${\cal S}(\la,M,n) = \nor{\tCl^{-1/2}(\tSn^*y - \tS^*\frho)} + R\kappa^{2r-1} \nor{\tCl^{-1/2}(\tC - \tCn)},$$
	we are first going to bound in probability the single terms on the rhs, given $\omega_1, \dots, \omega_M$, then we take the union event, and we use this to prove that $\PP(Q) \geq 1 - 3\delta$.  
	
	First of all we need to define four other events.
	Define the event $E^1_{\bomega} \subseteq Z$, as the event satisfying 
	\eqal{\label{eq:S-event-E1}
		\nor{\tCl^{-1/2}(\tSn^*\yn  - \tS^*\frho)} \leq 2 \left(\frac{B \tkappa}{\sqrt{\la}n} + \sqrt{\frac{\sigma^2{\cal N}_M(\la)}{n}} \right)\log \frac{2}{\delta}.
	}
	By Lemma~\ref{lm:sampling-error} we know that $E^1_{\bomega}$ holds with probability  $\Pzeta(E^1_{\bomega}) \geq 1 - \delta$ almost everywhere for $\bomega$.
	Then, define the event $E^2_{\bomega} \subset Z$, as the event satisfying
	\eqal{\label{eq:S-event-E2}
		\nor{\tCl^{-1/2}(\tC - \tCn)} \leq \frac{4\tkappa^2\log\frac{2}{\delta}}{\sqrt{\la}n} + \sqrt{\frac{4\tkappa^2 {\cal N}_M(\la)\log\frac{2}{\delta}}{n}}.
	}
	By applying Prop.~\ref{prop:Clahalf-C-control}, with $v_i = z_i = \phi_M(x_i)$ for $1 \leq i \leq n$, we have that $E^2_{\bomega}$ holds with probability $\Pzeta(E^2_{\bomega}) \geq 1- \delta$ almost everywhere for $\bomega$.
	Define $E_{\bomega} \subseteq Z$ as the event satisfying
	\eqal{\label{eq:S-event-E3}
		{\cal S}(\la, M, n) \leq 2\left(\frac{B\kappa + 2 R \kappa^{2r+1}}{\sqrt{\la}~ n} ~+~ (\sigma + R\kappa^{2r})\sqrt{\frac{{\cal N}_M(\la)}{n}} \right)\log\frac{2}{\delta}.
	} 
	Denote with $t$ the right hand side of the equation above and with  $s_1, t_1, s_2, t_2$  respectively the lhs and the rhs of Eq.~\eqref{eq:S-event-E1},~\eqref{eq:S-event-E2}. 
	We have that $s_1 \leq t_1$ and $s_2 \leq t_2$ implies ${\cal S}(\la, M, n) \leq t$, indeed ${\cal S}(\la, M, n) = s_1 + R\kappa^{2r-1}s_2$ and $ t_1 + R\kappa^{2r-1}t_2 \leq t$, since $\log(2/\delta) > 1$. In set terms $(E^1_{\bomega} \cap E^2_{\bomega}) ~ \subseteq ~ E_{\bomega}$, that implies $\Pzeta(E_{\bomega}) \geq \Pzeta(E^1_{\bomega} \cap E^2_{\bomega})$, in particular
	\eqals{
		\Pzeta(E_{\bomega}) &\geq \Pzeta(E^1_{\bomega} \cap E^2_{\bomega}) \geq \Pzeta(E^1_{\bomega}) + \Pzeta(E^2_{\bomega}) - 1 \geq 1 - 2\delta,
	}
	where we used the fact that for any probability measure $P$ and two events $A, B$, we have $P(A \cap B) = P(A) + P(B) - P(A \cup B) \geq P(A) + P(B) - 1$.
	The last event that we need to define is $A \subseteq \Omega^M$ satisfying ${\cal N}_M(\la) \leq 1.5 {\cal N}(\la)$. By Prop.~\ref{prop:emp-eff-dim}, we know that $A$ holds with probability $\Pomega(A) \geq 1 - \delta$. 
	
	Now we characterize the probability of $Q|\bomega$  when $\bomega \in A$. Denote with $t_E$ the rhs of Eq.\ref{eq:S-event-E3} defining $E_\bomega$ and $t_Q$ the rhs of Eq.~\ref{eq:event-S} defining $Q$ (and so $Q|\bomega$).
	When $\bomega \in A$, we have that ${\cal N}_M(\la) \leq 1.5 {\cal N}(\la)$ and so $t_E \leq t_Q$. Then, when $\bomega \in A$, we have that ${\cal S}(\la, M, n) \leq t_E$ implies ${\cal S}(\la, M, n) \leq t_Q$, that is $E_\bomega ~\subseteq~ Q|\bomega$, implying that $\Pzeta(E_\bomega) \leq \Pzeta(Q|\bomega)$. By using the expansion of $\PP(Q)$ in Eq.~\ref{eq:prob-cond}  we have
	\eqals{
		\PP(Q) &= \int_{A} \Pzeta(Q|\bomega) d\Pomega(\bomega) + \int_{\Omega^M \setminus A} \Pzeta(Q|\bomega) d\Pomega(\bomega) \geq  \int_{A} \Pzeta(Q|\bomega) d\Pomega(\bomega) \\
		& \geq \int_{A} \Pzeta(E_\omega) d\Pomega(\bomega) \geq (1-2\delta) \int_{A} d\Pomega(\bomega) \geq (1-2\delta)(1-\delta) \geq 1 - 3\delta.
	}
	\epr
	\subsubsection{Estimates for ${\cal C}(\la,M)$}
	
	\blm[Bounding ${\cal C}(\la, m)$]\label{lm:conc-C}
	Let ${\cal C}(\la, M)$ as in Thm.~\ref{thm:geo-dec}, point 2. Let $\delta \in (0, 1/2]$ and $\la > 0$. Under Asm.~\ref{ass:kernel-bounded}, following holds with probability at least $1 - 2\delta$
	\eqals{
		{\cal C}(\la, m) \leq 4R\tkappa^{2r-1}\left(\frac{\sqrt{\la{\cal F}_\infty(\la)} \log\frac{2}{\delta}}{M^{r}} + \sqrt{\frac{\la {\cal N}(\la)^{2r-1}{\cal F}_\infty(\la)^{2-2r}\log\frac{2}{\delta}}{M}}\right)t^{1-r},
	}
	when $M \geq (4 + 18 {\cal F}_\infty(\la))\log\frac{8 \tkappa^2}{\la \delta}$ and $t := \log \frac{11\tkappa^2}{\la}$.
	\elm
	\bpr
	We now study ${\cal C}(\la, M)$ that is
	\eqals{
		{\cal C}(\la, M) = R\la^{1/2}\nor{\Ll^{-1/2}(L - \tL)}^{2r-1}\nor{\Ll^{-1/2}(L - \tL)\Ll^{-1/2}}^{2-2r}.
	}
	We are going to bound the two terms in probability, via Prop.~\ref{prop:Clahalf-C-control} and Prop.~\ref{prop:concentration-ClaC}. First of all, we recall that ${\cal F}_\infty(\la) := \sup_{\omega \in \Omega} \nor{\Ll^{-1/2}\psi_\omega}^2_\rhox$ and that ${\cal N}(\la) = \mathbb{E} \nor{\Ll^{-1/2}\psi_\omega}^2_\rhox$, indeed by Lemma.~\ref{lm:L-characterization}, characterizing $L$ in terms of $\psi_\omega$, the linearity and the ciclicity of the trace, we have,
	\eqal{\label{eq:Nla-characterization}
		{\cal N}(\la) &:= \tr(\Ll^{-1} L) = \tr\left(\Ll^{-1} \int \psi_\omega \otimes \psi_\omega d\pi(\omega)\right) = \int \tr(\Ll^{-1} (\psi_\omega \otimes \psi_\omega)) d\pi(\omega)\\
		& = \int \scal{\psi_\omega}{\Ll^{-1}\psi_\omega}_\rhox d\pi(\omega) = \int \nor{\Ll^{-1/2}\psi_\omega}^2_\rhox d\pi(\omega), 
	}  
	where the last steps are due to the identity $\tr(A(v\otimes v)) = \scal{v}{Av} = \nor{A^{1/2}v}^2$ valid for any vector $v$ and any bounded self-adjoint positive operator $A$ on a Hilbert space.
	
	Define $A \subseteq \Omega^M$ the event satisfying
	$$ \nor{\Ll^{-1/2}(L-\tL)} \leq \frac{4\sqrt{{\cal F}_\infty(\la)\tkappa^2}\log\frac{2}{\delta}}{M} + \sqrt{\frac{4\tkappa^2{\cal N}(\la)\log\frac{2}{\delta}}{M}}.$$
	By the fact that $\nor{\cdot} \leq \nor{\cdot}_{HS}$ and by Prop.~\ref{prop:Clahalf-C-control}, with $v_i = \Ll^{-1/2}\psi_{\omega_i}$ and $z_i = \psi_{\omega_i}$ for $i \in \{1,\dots,M\}$ and $Q = T = L,~ T_n = \tL$,  we know that the event $A$ has probability $\Pomega(A) \geq 1-\delta$. 
	
	Define $B \subseteq \Omega^M$ the event satisfying 
	\eqal{\label{eq:C-event-B}
		\nor{\Ll^{-1/2}(L-\tL)\Ll^{-1/2}} \leq \frac{2\eta(1 + {\cal F}_\infty(\la))}{3 M} + \sqrt{\frac{2\eta {\cal F}_\infty(\la)}{M}},
	}
	with $\eta := \log \frac{8 \tkappa^2}{\lambda\delta}$.
	By Prop.~\ref{prop:concentration-ClaC}, with $Q = L$ and $v_i = \psi_{\omega_i}$ for $i \in \{1,\dots,M\}$, we know that $B$ has probability $\Pomega(B) \geq 1-\delta$.
	
	Now set $E = A \cap B$. When $E$ holds and under the assumption that $M \geq (4 + 18 {\cal F}_\infty(\la))\log\frac{8 \tkappa^2}{\la \delta}$, we have that the right hand side of Eq.~\eqref{eq:C-event-B} is smaller than $\sqrt{\frac{4\eta {\cal F}_\infty(\la)}{M}},$ and $\eta = \log \frac{2}{\delta} + \log \frac{4\kappa^2}{\la}$, so 
	\eqal{\label{eq:proof-bound-ClaM}
		{\cal C}&(\la, M) \leq R\la^{1/2}\left(\frac{4\sqrt{{\cal F}_\infty(\la)\tkappa^2}\log\frac{2}{\delta}}{M} + \sqrt{\frac{4\tkappa^2{\cal N}(\la)\log\frac{2}{\delta}}{M}}\right)^{2r-1}\sqrt{\frac{4\eta {\cal F}_\infty(\la)}{M}}^{2-2r} \\
		& \leq 4R\kappa^{2r-1}\left(\frac{\sqrt{\la{\cal F}_\infty(\la)} (\log\frac{2}{\delta})^r}{M^{r}} + \sqrt{\frac{\la {\cal N}(\la)^{2r-1}{\cal F}_\infty(\la)^{2-2r}\log\frac{2}{\delta}}{M}}\right)\left(1 + \frac{\log \frac{4\kappa^2}{\lambda}}{\log \frac{2}{\delta}}\right)^{1-r}.
	} 
	The event $E$ holds with probability $ \Pomega(E) \geq \Pomega(A) + \Pomega(B) - 1 \geq 1 - 2\delta$.
	We recall that since $E$ does not depend on $\bzeta$, the probability of $E$ in $W$ is given by the canonical extension $E' = Z \times E$ whose probability is again $\PP(E') = \Pomega(E) \geq 1 - 2\delta$. Finally,  we further upper bound in Eq.\eqref{eq:proof-bound-ClaM} the term $(\log\frac{2}{\delta})^r$ with $\log\frac{2}{\delta}$ since $\log\frac{2}{\delta} > 1, r \in [1/2,1]$, and $1 + (\log \frac{4\kappa^2}{\lambda})/(\log \frac{2}{\delta})$ with $\log \frac{11\kappa^2}{\lambda}$, for the same reasons and the fact that $1 + \log 4 = \log 4e \leq \log 11$.
	\epr

	\subsubsection{Estimates for $\beta$}
	
	\blm[Bounding the norm of $\tC$]\label{lm:norm-tC}
	Let $\delta \in (0,1]$. Under Asm.~\ref{ass:kernel-bounded}, the following holds with probability at least $1 - \delta$,
	$$\nor{\tC} \geq \frac{3}{4}\nor{L},$$
	when $M \geq 32\left(\frac{\kappa^2}{\nor{L}} + \tkappa^2\right)\log\frac{2}{\delta}$.
	\elm
	\bpr
	Define the event $A \in \Omega^M$ as the one satisfying 
	\eqals{
		\nor{L - \tL}_{HS} \leq  \frac{4\kappa^2}{M}\log\frac{2}{\delta} - \sqrt{\frac{2\kappa^2}{M}\log\frac{2}{\delta}}.
	}
	
	Note that when $\bomega \in A$ and $M \geq 32\left(\frac{\kappa^2}{\nor{L}} + \tkappa^2\right)\log\frac{2}{\delta}$, we have $\nor{L - \tL}_{HS} \leq \frac{1}{4} \nor{L}$ and, by using the characterization of $\tC, \tL$ in Rem.~\ref{rem:S-C-L} and the fact that $\nor{\cdot}_{HS} \geq \nor{\cdot}$, we have 
	\eqals{
		\nor{\tC} &= \nor{\tS^*\tS} = \nor{\tS\tS^*} = \nor{\tL}  \geq |\nor{L} - \nor{L - \tL}| \\
		& \geq \nor{L} - \nor{L - \tL} \geq \nor{L} - \nor{L - \tL}_{HS}  \geq \nor{L} - \frac{1}{4}\nor{L} \geq \frac{3}{4} \nor{L}.
	}
	Now we find a lower bound for the probability of $A$. Let $\zeta_i = L - \psi_{\omega_i} \otimes \psi_{\omega_i}$ be a random operator with $\omega_i$ independently and identically distributed w.r.t $\pi$ and $i \in \{1,\dots,M\}$. We have that $L - \tL = \frac{1}{M}\sum_{i=1}^M \zeta_i$ and ${\mathbb E} \zeta_i = 0$, by the characterization of $L$ in Lemma~\ref{lm:L-characterization}. Denote with $\cal L$ the Hilbert space of Hilbert-Schmidt operators on $\Ltwo$. Now note that, since it is trace class, $\zeta_i$ is a random vector belonging to ${\cal L}$, so we can apply Prop.~\ref{prop:tail-vectors}, with $T = \sup_{\omega \in \Omega} \nor{L - \psi_{\omega} \otimes \psi_{\omega}}_{HS} \leq 2\tkappa^2$ and $S = {\mathbb E} \nor{\zeta_1}_{HS}^2 \leq \kappa^2$, obtaining that $\Pomega(A) \geq 1-\delta$.
	\epr
	
	\blm[Bounding $\beta$]\label{lm:conc-beta} 
	Let $\delta \in (0, 1/3]$, and $\beta$ be as in Thm.~\ref{thm:geo-dec}, point 3. Under Asm.~\ref{ass:kernel-bounded}, the following holds with probability at least $1 - 3\delta$,
	$$ \beta < 2,$$ 
	when $0 < \la \leq \frac{3}{4}\nor{L}$, and 
	$$n \geq 18\left(2 + \frac{\kappa}{\la} \right)\log \frac{4\tkappa^2}{\la \delta}, \qquad M \geq 18\left(2 + {\cal F}_\infty(\la)\right)\log \frac{4\tkappa^2}{\la \delta} ~\vee 32\left(\frac{\kappa^2}{\nor{L}} + \tkappa^2\right)\log\frac{2}{\delta}.$$
	\elm
	\bpr 
	Let $\beta, \beta_1, \beta_2$ be defined as in Thm.~\ref{thm:geo-dec}, point 3.
	To bound $\beta$ in probability, we first bound $\beta_1$ and $\beta_2$ in probability, and then, under the intersection of the events, we control $\beta$.
	First of all, denote with $(a)$ the condition on $\la$, with $(b)$ the condition on $n$ and with $(c.1)$ the condition $M \geq 32\left(\frac{\kappa^2}{\nor{L}} + \tkappa^2\right)\log\frac{2}{\delta}$, while with $(c.2)$ the condition $M \geq 18\left(2 + {\cal F}_\infty(\la)\right)\log \frac{4\tkappa^2}{\la \delta}$.
	Define the event $E \subseteq W$ as the one satisfying $\beta_1 \leq \frac{1}{3}$. To bound the probability of $E$ we need an auxiliary event $A \in \Omega^M$ that is the one satisfying $\frac{3}{4} \nor{L} \leq \nor{\tC}$.
	The specific choice of $A$ will be made clear later. We have
	$$ \PP(E) = \int_A \Pzeta(E|\bomega) d\Pomega(\bomega) + \int_{\Omega^M \setminus A} \Pzeta(E|\bomega) d\Pomega(\bomega) \geq \int_A \Pzeta(E|\bomega) d\Pomega(\bomega).$$
	By  Prop.~\ref{prop:concentration-ClaC} and Rem.~\ref{rem:ClC} point 3, we know that  $\Pzeta(E|\bomega) \geq 1-\delta$, for any $\bomega$, when $\la \leq \nor{\tC}$ and condition $(b)$ hold. Note that, when $\bomega$ is in $A$ then $\frac{3}{4} \nor{L} \leq \nor{\tC}$ and so the condition $\la \leq \nor{\tC}$ is always satisfied by assuming $(a)$. Then under $(a), (b)$, we have $\Pzeta(E|\bomega) \geq 1-\delta$ when $\bomega \in A$, and so under the same conditions
	$$ \PP(E) \geq \int_A \Pzeta(E|\bomega) d\Pomega(\bomega) \geq (1-\delta) \int_A  d\Pomega(\bomega) = (1-\delta)\Pomega(A).$$
	By Lemma~\ref{lm:norm-tC}, $\Pomega(A) \geq 1-\delta$, when $(c.1)$ holds, so
	$\PP(E) \geq (1-\delta)\Pomega(A) \geq (1-\delta)^2 \geq 1 - 2\delta$ when $(a), (b), (c.1)$ hold.
	
	Define $D_0 \subseteq \Omega^M$ as the event satisfying $\beta_2 \leq \frac{1}{3}$. By  Prop.~\ref{prop:concentration-ClaC} and Rem.~\ref{rem:ClC} point 3, we know that  $\Pomega(D_0) \geq 1-\delta$,  when the condition $(c.2)$ and $(a)$ hold. So the event $D:= Z \times D_0$ has probability $\PP(D) = \Pomega(D_0) = 1 - \delta$, when $(c.2)$ holds.
	Finally, note that under the conditions $(a), (b), (c.1), (c.2)$, when the event $D \cap E$ hold, we have $\beta \leq (2/3)^{-3/2} < 2$. The probability of $D \cap E$ under the conditions $(a), (b), (c.1), (c.2)$ has probability
	$$\PP(D \cap E)  = \PP(D) + \PP(E) - \PP(D \cup E) \geq \PP(D) + \PP(E) - 1 \geq 1 - 3\delta.$$
	\epr
	%
	%
	%
	%
	%
	
	
	
	\subsection{Proof of the Main Result}\label{sect:proof-main-res}
	Here we prove Thm.~\ref{thm:rf-simple-universal},~\ref{thm:rf-fast-universal},~\ref{thm:rf-fast-compatibility} that are the main results of the paper.
	In particular the following Thm.~\ref{thm:main-bound} is a general version of the three theorem above, without the need of Assumptions~\ref{ass:intrinsic},~\ref{ass:compatibility} and valid for a wide range of $\la, M$.
	In Thm.~\ref{thm:main-bound-M-wrt-la}, we specialize the result of Thm.~\ref{thm:main-bound}, selecting $M$ in terms of $\la$ such that the upper bound of the excess risk depends only on $\la$ and is
	proportional to the same upper bound for kernel ridge regression that leads to optimal generalization bounds. Note that Thm.~\ref{thm:main-bound-M-wrt-la} is again independent of Asm.~\ref{ass:intrinsic},~\ref{ass:compatibility}.
	Finally, Thm.~\ref{thm:optimal-rates-RF-RKLS-with-constants} is obtained by Thm.~\ref{thm:main-bound-M-wrt-la}, by adding Asm.~\ref{ass:intrinsic},~\ref{ass:compatibility} and has all the constant explicit. Then Thm.~\ref{thm:rf-simple-universal} is a specification of Thm.~\ref{thm:optimal-rates-RF-RKLS-with-constants}, for the simple scenario where it is only required the existence of $\fh$ (that is Asm.~\ref{ass:intrinsic} satisfied with $\gamma = 1$, Asm.~\ref{ass:source} satisfied with $r=1/2$ and Asm.~\ref{ass:compatibility} with $\alpha=1$, see discussion after the introduction of the assumptions). Thm.~\ref{thm:rf-fast-universal} is a specification of Thm.~\ref{thm:optimal-rates-RF-RKLS-with-constants} for the fast rates (Asm.~\ref{ass:compatibility} is satisfied with $\alpha = 1$), while Thm.~\ref{thm:rf-fast-compatibility} is a simplified version of Thm.~\ref{thm:optimal-rates-RF-RKLS-with-constants} where the constants have been hidden. 
	
	\bt[Generalization Bound for RF-KRLS]
	\label{thm:main-bound}
	Let $\delta \in (0,1]$. Let  $\widehat{f}_{\la,M}$ be as in Eq.~\eqref{eq:algo-rf}.
	Under Asm.~\ref{ass:kernel-bounded},~\ref{ass:source},~\ref{ass:noise} when $0 < \la \leq \frac{3}{4}\nor{L}$ and
	\eqals{
		& \;\; n \geq 18\left(2 + \frac{\tkappa^2}{\la}\right)\log \frac{36\tkappa^2}{\la \delta}, \\
		& \;\; M \geq 18\left(q_0 + {\cal F}_\infty(\la)\right)\log \frac{108\tkappa^2}{\la \delta},
	}
	with $q_0 = 2(2 + \frac{\kappa}{\nor{L}} + \tkappa^2)$, then the following holds with probability at least $1-\delta$,
	\eqal{\label{eq:main-bound}
		\sqrt{\EE(\widehat{f}_{\la, M})-\inf_{f \in \hh}\EE(f)} \leq 2
		\left(\frac{4\bar{B} \tkappa}{\sqrt{\la}n} + \sqrt{\frac{16\bar\sigma^2{\cal N}(\la)}{n}} ~+~ {\frak C}(\la, M) ~+~ R \la^{r}\right)\log \frac{18}{\delta}.
	}
	where $\bar{B} = B + 2 R\tkappa$, $\bar{\sigma} = \sigma + 2\sqrt{R} \tkappa$,
	\eqal{\label{eq:copy-conc-C}
		{\frak C}(\la, M) := R\kappa^{2r-1}\left(\frac{\sqrt{\la{\cal F}_\infty(\la)}}{M^{r}} + \sqrt{\frac{\la {\cal N}(\la)^{2r-1}{\cal F}_\infty(\la)^{2-2r}}{M}}\right)t^{1-r}.
	}
	and $t := \log \frac{11\tkappa^2}{\la}$.
	\et
	\bpr
	Under Asm.~\ref{ass:kernel-bounded}, the existence of $\fh$ in Asm.~\ref{ass:noise} and Asm.~\ref{ass:source}, plus Rem.~\ref{rem:excess-risk-to-Ltwo}, we have the following analytical decomposition of the excess risk, by Thm,~\ref{thm:geo-dec}
	\eqal{\label{eq:proof-main-bound-excess-risk-expansion}
		|{\cal E}(\widehat{f}_{\la, M}) - \inf_{f\in \hh}{\cal E}(f)|^{1/2} \leq \beta\left({\cal S}(\la, M, n) \;\;+\;\; {\cal C}(\la, M) \;\;+\;\;  R\la^{r}\right),
	}
	where the quantities $\beta$, ${\cal C}(\la, M)$ and ${\cal S}(\la, M, n)$ are defined in the statement of Thm.~\ref{thm:geo-dec}.
	Under the same assumptions, Lemma~\ref{lm:conc-S},~\ref{lm:conc-C} and~\ref{lm:conc-beta} are devoted to bound in probability the three quantities, with the help of the concentration inequalities recalled in Section~\ref{sect:conc-ineq}, plus some auxiliary results in Section~\ref{sect:aux-results}, of the appendixes.
	
	Let $\tau := \delta/9$.
	Define the event $D \subseteq W$ as the one satisfying $\beta < 2$ (see Subsection~\ref{sect:prob-estimates} for the definition of the sample space $W$ for learning with random features, and the associated probability measure $\PP$).
	By Lemma~\ref{lm:conc-beta} we know that the event $D$ has probability $\PP(D) \geq 1 - 3\tau$, when the following conditions hold
	\eqals{(d_1) &~~0 \leq \la \leq \frac{3}{4}\nor{L},\qquad (d_2) ~~ n \geq 18\left(2 + \kappa/\lambda \right)\log \frac{4\tkappa^2}{\la \tau}, \\
		(d_3) &~~ M \geq 18\left(2 + {\cal F}_\infty(\la)\right)\log \frac{4\tkappa^2}{\la \tau} ~\vee~ 32\left(\frac{\kappa^2}{\nor{L}} + \tkappa^2\right)\log\frac{2}{\tau}.
	}
	Define the event $E \subseteq W$ as the one satisfying
	\eqal{\label{eq:copy-conc-S}
		{\cal S}(\la,M,n) \leq 4 \left(\frac{\bar{B} \tkappa}{\sqrt{\la}n} + \sqrt{\frac{\bar{\sigma}^2{\cal N}(\la)}{n}} \right)\log \frac{2}{\tau}.
	}
	By Lemma~\ref{lm:conc-S} we know that the probability of $E$ is $\PP(E) \geq 1 - 4\tau$, when the following conditions hold
	$$ (e_1)~~0 < \la < \nor{L}, \qquad (e_2)~~M \geq (4 + 18{\cal F}_\infty(\la))\log \frac{12\tkappa^2}{\la \tau}.$$
	Define the event $G \subseteq W$ as the one satisfying 
	\eqals{
		{\cal C}(\la, M) \leq {\frak C}(\la, M),
	}
	By Lemma~\ref{lm:conc-C} we know that $G$ holds with probability $\PP(G) \geq 1 - 2\tau$, when the $(d_1)$ and the following condition holds
	$$(g_1)~~M \geq (4 + 18 {\cal F}_\infty(\la))\log\frac{8 \tkappa^2}{\la \tau}.$$
	Finally Eq.~\eqref{eq:main-bound} is obtained from Eq.~\eqref{eq:proof-main-bound-excess-risk-expansion}, by bounding $\beta$ with $2$, ${\cal S}(\la, M, n)$ with Eq.~\eqref{eq:copy-conc-S} and ${\cal C}(\la, M)$ by ${\frak C}(\la, M)$. So by definition, Eq.~\eqref{eq:main-bound} holds under the event $D \cap E \cap G$ and the conditions $(d_1), (e_1)$ on $\la$, $(d_2)$ on $n$ and $(d_3), (e_2), (g_1)$ on $M$.
	The event $D\cap E \cap G$ has probability
	\eqals{
		\PP(D\cap E \cap G) ~~~&=~~~ \PP(W \setminus ( (W \setminus D)\cup(W \setminus E)\cup(W \setminus G))) \\
		& \geq ~~~ 1 ~~-~~ [~(1 - \PP(D)) ~+~ (1 - \PP(E)) ~+~ (1 - \PP(G))~] \\
		& = ~~~ \PP(D) + \PP(E)  + \PP(G) -  2 ~~~ \geq ~~~ 1 - 9 \tau.
	}
	Finally note that the conditions on $\la, n, M$ in the statement of this theorem imply, respectively, conditions $(d_1), (e_1)$ on $\la$, $(d_2)$ on $n$, and $(d_3), (e_2), (g_1)$ on $M$.
	\epr

	\bt[Generalization Bound for RF-KRLS]
	\label{thm:main-bound-M-wrt-la}
	Let $\delta \in (0,1]$. Let  $\widehat{f}_{\la,M}$ be as in Eq.~\eqref{eq:algo-rf}.
	Under Asm.~\ref{ass:kernel-bounded},~\ref{ass:source},~\ref{ass:noise}, when $0 < \la \leq \frac{3}{4}\nor{L}$ and
	\eqals{
		& \;\; n \geq 18\left(2 + \frac{\tkappa^2}{\la}\right)\log \frac{36\tkappa^2}{\la \delta}, \\
		& \;\; M \geq 4\tkappa^2\left(\frac{{\cal N}(\la)}{\la}\right)^{2r - 1}\left({\cal F}_\infty(\la) \log \frac{11 \tkappa^2}{\la}\right)^{2 - 2r}  \;\; \vee \;\;18\left(q_0 + {\cal F}_\infty(\la)\right)\log \frac{108\tkappa^2}{\la \delta},
	}
	with $q_0 = 2(2 + \frac{\kappa}{\nor{L}} + \tkappa^2)$, then the following holds with probability at least $1-\delta$,
	\eqal{\label{eq:main-bound-M-wrt-la}
		\sqrt{\EE(\widehat{f}_{\la, M})-\inf_{f \in \hh}\EE(f)} \leq 8
		\left(\frac{\bar{B} \tkappa}{\sqrt{\la}n} + \sqrt{\frac{\bar\sigma^2{\cal N}(\la)}{n}} ~+~ R \la^{r}\right)\log \frac{18}{\delta}.
	}
	Here $\bar{B} = B + 2 R\tkappa$, $\bar{\sigma} = \sigma + 2\sqrt{R} \tkappa$.
	\et
	\bpr
	First we apply Thm.~\ref{thm:main-bound}, then we add a condition on $M$ with respect to $\la$ such that we can bound $\mathfrak{C}(\la, M)$ with $R \la^r$.
	The condition we will consider is the following
	$$ (g_2) ~~ M \geq 4\tkappa^2 \la^{1-2v} {\cal N}(\la)^{2v-1} {\cal F}_\infty(\la)^{2-2v} t^{2-2r}.$$
	Indeed lower bounding with $(g_2)$ the occurrences of $M$ in $\mathfrak{C}(\la, M)$, we have
	\eqals{
		\mathfrak{C}(\la, M) &\leq R\kappa^{2r-1}\left(\sqrt{\frac{\la^{1 + 4r^2 - 2r}{\cal F}_\infty(\la)^{1+4r^2 - 4r} }{4^{2r}\tkappa^{4r-2} {\cal N}(\la)^{4r^2 - 2r}t^{6r - 4r^2 - 2}}} + \sqrt{\frac{\la^{2r}}{4\kappa^2}}\right)\\
		& \leq R\left(\sqrt{\frac{\la^{2r}}{4^{2r} \kappa^{12 r - 8 r^2 - 4} {\cal N}(\la)^{4r^2 - 2r}t^{6r - 4r^2 - 2}}} + \sqrt{\frac{\la^{2r} }{4\kappa^{4-4r}}}\right) \leq R \la^r,
	}
	where the second step is due to ${\cal F}_\infty(\la) \leq \kappa^2/\la$ and $1+4r^2 - 4r \geq 0$ for $r \geq 1/2$, while the last step is due to the following three facts. First, that $4^{2r}{\cal N}(\la)^{4r^2-2r} \geq 4$, since $4r^2 - 2r \geq 0$ on $r \in [1/2,1]$ and, by denoting with $(\la_i(L))_{i\geq 1}$ the eigenvalues of $L$, with $\nor{L} := \la_1(L) \geq \la_{2}(L) \geq \dots \geq 0$, and recalling that $0 \leq \la \leq \frac{3}{4}\nor{L}$, we have
	$${\cal N}(\la) := \tr(L  \Ll^{-1}) = \sum_{i \geq 1} \frac{\la_i(L)}{\la_i(L) + \la} \geq \frac{\la_1(L)}{\la_1(L) + \la} := \frac{\nor{L}}{\nor{L} + \la} > 1/2.$$
	Second, that $t^{6r - 4r^2 - 2} \geq 1$, since $6r - 4r^2 - 2 \geq 0$ on $r \in [1/2,1]$ and  $t \geq 1$, since $0 \leq \la \leq \frac{3}{4}\nor{L} \leq \frac{3}{4}\kappa^2$. 
	Third, that $\kappa^{12 r - 8 r^2 - 4} \geq 1$ and $\kappa^{4-4r} \geq 1$, since $12 r - 8 r^2 - 4 \geq 0$ and $4-4r \geq 0$ on $r \in [1/2,1], \kappa \geq 1$.
	\epr
	
	The following theorem is a specialization of the previous one, under ~\ref{ass:intrinsic},~\ref{ass:compatibility} and an explicit relation of $\la$ with respect to $n$.
	\bt\label{thm:optimal-rates-RF-RKLS-with-constants}
	Let $\delta \in (0,1]$. 
	Under Asm.~\ref{ass:kernel-bounded} and \ref{ass:intrinsic},~\ref{ass:source},~\ref{ass:compatibility},~\ref{ass:noise}, let $p := (2r + \gamma -1)^{-1}$, and 
	\eqals{
		&\;\; n ~~ \geq ~~ \left(2/\nor{L}\right)^{\frac{p+1}{p}} ~ \vee ~ \left(264\kappa^2p~\log(556 \kappa^2\delta^{-1}\sqrt{p \kappa^2})\right)^{1+p}\\
		&\;\; \lambda_n ~~=~~ n^{-\frac{1}{2r + \gamma}},\\
		&\;\; M_n ~~\geq~~ c_0 ~ n^\frac{\alpha +(2r-1)(1+\gamma - \alpha)}{2r+\gamma} \log \frac{108\tkappa^2}{\la \delta},
	}
	with $c_0 = 9(3 +  4\tkappa^2 + \frac{4\kappa^2}{\nor{L}} + \frac{\tkappa^2}{4}Q^{2r-1}F^{2-2r})$, 
	then the following holds with probability at least $1-\delta$,
	\eqal{\label{eq:rf-learning-rate}
		\EE(\widehat{f}_{\la_n,M_n})- \inf_{f \in \hh} \EE(f)  \;\;\leq \;\; c_1 \log^2\frac{18}{\delta} \;\; n^{-\frac{2r}{2r + \gamma}},
	}
	and $c_1 = 64(\bar{B}\tkappa + \bar{\sigma} Q + R)^2$.
	\et
	\bpr
	Let $\la = n^{-\frac{1}{2r+\gamma}}$ in Thm.~\ref{thm:main-bound-M-wrt-la} and substitute ${\cal F}_\infty(\la)$ and ${\cal N}(\la)$ by their bounds given in Asm.~\ref{ass:intrinsic},~\ref{ass:compatibility}. Note that to guarantee that $n$ satisfies the associated constraint  with respect to $\la$, in Thm.~\ref{thm:main-bound-M-wrt-la}, and that $\la$ is in $(0, \frac{3}{4} \nor{L}]$ we need that $n \geq (\frac{4}{3\nor{L}})^{\frac{p+1}{p}}$ and 
	$$
	n \geq \left(264\kappa^2p\log\frac{556 \kappa^2\sqrt{p \kappa^2}}{\delta}\right)^{1+p},
	$$ 
	with $p = \frac{1}{2r + \gamma - 1}$.
	\epr

	Note that the theorems in Sect~\ref{sect:main-res} are corollaries of the theorem above. For the sake of readability, in contrast to Thm.~\ref{thm:optimal-rates-RF-RKLS-with-constants}, the results in Thm.~\ref{thm:rf-simple-universal},~\ref{thm:rf-fast-universal},~\ref{thm:rf-fast-compatibility} are expressed with respect to $\tau := \log \frac{1}{\delta}$. Moreover in the statement of Thm.~\ref{thm:rf-simple-universal},~\ref{thm:rf-fast-universal},~\ref{thm:rf-fast-compatibility} the constants and the logarithmic terms are omitted. They can be recovered by plugging the coefficients detailed in the following proofs in the statement of Thm.~\ref{thm:optimal-rates-RF-RKLS-with-constants}.  
	
	\bpr[\bf Proof of Theorem~\ref{thm:rf-simple-universal}]
	This is an application of Thm.~\ref{thm:optimal-rates-RF-RKLS-with-constants}, with minimum number of assumptions. 
	Indeed the existence of $\fh$ and the fact that $|y| \leq b$ a. s. satisfies Asm.~\ref{ass:noise} with $\sigma = B = 2b$. The fact that $X$ is a Polish space and that $\psi$ is bounded continuous satisfy Asm.~\ref{ass:kernel-bounded}, and so the kernel is bounded by $\kappa^2$. Since the kernel is bounded, we have that Asm.~\ref{ass:intrinsic} is always satisfied with $\gamma = 1, Q = \kappa$; Asm.~\ref{ass:source} is always satisfied with $r = 1/2, R = 1 \vee \nor{\fh}_\hh$; Asm.~\ref{ass:compatibility} is always satisfied with $\alpha = 1, F = \kappa^2$. In particular we have the following constants $n_0 := 4\nor{L}^{-2} ~ \vee ~ \left(264\kappa^2~\log \frac{556 \kappa^3}{\delta}\right)^2$,
	$$c_0 := 9\left(3 + 4\kappa^2 + \frac{4\kappa^2}{\nor{L}} + \kappa^4/4\right), \qquad c_1 := 8(\bar{B}\kappa + \bar{\sigma}\kappa + 1 \vee \nor{\fh}_\hh),$$
	with $\bar{B} := 2b + 2\kappa(1 \vee \nor{\fh}_\hh)$ and $\bar{\sigma} := 2b + 2 \kappa\sqrt{1 \vee \nor{\fh}_\hh}$.
	\epr

	\bcor\label{cor:simple-rates-expectation}
	Under the same assumptions of Thm.~\ref{thm:rf-simple-universal}, if $n \geq \nor{L}^{-2} \vee \left(1056 \log \frac{1056\sqrt{278\kappa^5 b}}{\sqrt{c_1}}\right)^2$ and $\la_n =  n^{-1/2}$, then a number of random features $M_n$ equal to
	$$M_n = 2c_0 ~ \sqrt{n} ~ \log \left(c_2 n\right),$$ 
	is enough to guarantee that
	$$\mathbb{E} ~~ {\cal E}(\widehat{f}_{\la_n,M_n}) - {\cal E}(f_\hh) \leq \frac{40c_1}{\sqrt{n}}.$$
	In particular the constants $c_0, c_1$ are as in Thm.~\ref{thm:rf-simple-universal} and $c_2 = \frac{8\kappa^2\sqrt{b}}{\sqrt{c_1}}$.
	\ecor
	\bpr
	In the rest we will denote ${\cal E}(\widehat{f}_{\la_n,M_n}) - {\cal E}(f_\hh)$, with ${\cal R}(\widehat{f}_{\la_n,M_n})$ and will use the notation of Sect.~\ref{sect:prob-estimates}.
	Fix $\delta_0 = \frac{2c_1}{\kappa^2 b} n^{-1}$. Denote with $E$, the event satisfying ${\cal R}(\widehat{f}_{\la_n,M_n}) > t_0$, with $t_0 := c_1 \log^2\frac{18}{\delta_0} ~ n^{-1/2}$.
	
	First, note that $M_n \geq 2c_0\sqrt{n}\log \left(\frac{8\kappa^2\sqrt{b}}{\sqrt{c_1}} n\right)$, satisfies $M_n \geq c_0 ~ \sqrt{n}\log \frac{108\kappa^2 \sqrt{n}}{\delta_0}$ and any $n \geq \nor{L}^{-2} \vee \left(1056 \log \frac{1056\sqrt{278\kappa^5 b}}{\sqrt{c_1}}\right)^2$ satisfies $n \geq n_0(\delta_0)$ with $n_0(\delta)$ as in Thm.~\ref{thm:rf-simple-universal}. So, we can apply Thm.~\ref{thm:rf-simple-universal}, from which we know that $E$ holds with probability smaller than $\delta_0$.
	
	Second, by Rem.~\ref{rem:excess-risk-to-Ltwo} and Rem.~\ref{rem:rf-in-ltwo}, we have that \eqal{
		{\cal R}(\widehat{f}_{\la_n,M_n}) &= \nor{\tS \tCnl^{-1} \tSn^* \yn - P\frho}_\rhox \\
		&\leq \nor{\tS} \nor{\tCnl^{-1}}\nor{\tSn^*}\nor{\yn}_{\R^n}  + \nor{P}\nor{\frho}_\rhox \\
		&\leq \frac{\kappa^2 b}{\la} + b \leq \frac{2\kappa^2 b}{\la} =: R_0,
	}
	where we used the fact that $\nor{\tS},\nor{\tSn^*} \leq \kappa$ (see Def.~\ref{def:all-ops-tilde}), that $\nor{\tCnl^{-1}} \leq \la^{-1}$, that $\nor{\yn}^2 = \frac{1}{n} \sum_{y_i}^2$, that $\frho(x) = \mathbb{E} [y|x]$, that the $y$'s are bounded in $[-b, b]$, and the fact that $\kappa^2/\la \geq 1$, by definition of $\kappa, \la$.
	
	Now, by denoting with ${\bf 1}_E$ the indicator function for $E$, we have
	$$ \mathbb{E}~{\cal R}(\widehat{f}_{\la_n,M_n}) =  \mathbb{E}~{\bf 1}_E {\cal R}(\widehat{f}_{\la_n,M_n}) ~~ + ~~ \mathbb{E}~ {\bf 1}_{Z \setminus E} {\cal R}(\widehat{f}_{\la_n,M_n}).$$
	In particular
	$$\mathbb{E}~{\bf 1}_E {\cal R}(\widehat{f}_{\la_n,M_n}) \leq R_0 ~\mathbb{E}~{\bf 1}_E = R_0 \PP(E) \leq R_0 \delta_0.$$
	For the second term we have
	\eqals{
		\mathbb{E}~~{\bf 1}_{Z \setminus E} {\cal R}(\widehat{f}_{\la_n,M_n})  ~~~&=~~~ \mathbb{E}~~{\bf 1}_{\{{\cal R}(\widehat{f}_{\la_n,M_n}) \leq t_0\}} ~ {\cal R}(\widehat{f}_{\la_n,M_n}) ~~~=~~~ \int_0^{t_0} \PP({\cal R}(\widehat{f}_{\la_n,M_n}) > t) dt.
	}
	By changing variable, in the integral above, via $t = \frac{c_1}{\sqrt{n}} \log^2\frac{18}{\delta}$, and using the fact that $\PP({\cal R}(\widehat{f}_{\la_n,M_n}) > \frac{c_1}{\sqrt{n}} \log^2\frac{18}{\delta}) \leq \delta$, we have
	\eqals{
		\int_0^{t_0} \PP({\cal R}(\widehat{f}_{\la_n,M_n}) > t) dt &= \frac{2c_1}{\sqrt{n}} \int_{\delta_0}^{18} \frac{\log \frac{18}{\delta}}{\delta} \PP\left({\cal R}(\widehat{f}_{\la_n,M_n}) > \frac{c_1}{\sqrt{n}} \log^2\frac{18}{\delta}\right) d\delta \\
		& \leq \frac{2c_1}{\sqrt{n}} \int_{\delta_0}^{18} \log \frac{18}{\delta} d\delta  \\
		& = ~~ \frac{2c_1}{\sqrt{n}}\left(18 - \delta_0\left(1 + \log\frac{18}{\delta_0}\right)\right) ~~~\leq ~~~ \frac{36c_1}{\sqrt{n}}. 
	}
	So finally we have
	$$ \mathbb{E}~{\cal R}(\widehat{f}_{\la_n,M_n}) \leq R_0 \delta_0 + \frac{36c_1}{\sqrt{n}} = \frac{40c_1}{\sqrt{n}}.$$
	\epr
	
	\bpr[\bf Proof of Theorem~\ref{thm:rf-fast-universal}]
	This is an application of Thm.~\ref{thm:optimal-rates-RF-RKLS-with-constants}, where assumption Asm.~\ref{ass:compatibility} is satisfied with $F = \kappa^2$ and $\alpha = 1$. 
	Indeed the existence of $\fh$ and the fact that $|y| \leq b$ a. s. satisfies Asm.~\ref{ass:noise} with $\sigma = B = 2b$. The fact that $X$ is a Polish space and that $\psi$ is bounded continuous satisfy, Asm.~\ref{ass:kernel-bounded}, and so the kernel is bounded by $\kappa^2$. Since the kernel is bounded, we have that Asm.~\ref{ass:compatibility} is always satisfied with $\alpha = 1, F = \kappa^2$. Asm.~\ref{ass:noise} and Asm.~\ref{ass:source} are directly satisfied by Asm.~\ref{ass:prior}.
	In particular we obtain $n_0 := \left(2/\nor{L}\right)^{\frac{p+1}{p}} ~ \vee ~ \left(264\kappa^2p~\log(556 \kappa^2\delta^{-1}\sqrt{p \kappa^2})\right)^{1+p}$,
	$$ c_0 := 9\left(3 +  4\tkappa^2 + \frac{4\kappa^2}{\nor{L}} + \frac{\tkappa^{4-2r}}{4}Q^{2r-1}\right), \quad 
	c_1 := 64\left(\bar{B}\tkappa + \bar{\sigma} Q + R\right)^2,$$
	with $\bar{B} := 2b + 2\kappa R$ and $\bar{\sigma} := 2b + 2 \kappa\sqrt{R}$.
	\epr
	
	\bpr[\bf Proof of Example~\ref{ex:lsrf}]
	By definition of $\psi_s, \pi_s$ we have
	\eqals{
		\int \psi_s(x, \omega)\psi_s(x',\omega) d\pi_s(\omega) &= \int \psi(x, \omega)\sqrt{C_s s(\omega)}\psi_s(x',\omega)\sqrt{C_s s(\omega)}  \frac{1}{C_s s(\omega)}d\pi(\omega)  \\
		& = \int \psi(x, \omega)\psi(x',\omega) d\pi(\omega) = K(x,x').
	}
	Now we show that $\psi_s, \pi_s$ achieves ${\cal F}_\infty(\la) = {\cal N}(\la)$. By recalling that $s(\omega) = \nor{(L+\la I)^{-1/2} \psi(\cdot, \omega)}_\rhox^{-2}$, we have
	\eqals{
		{\cal F}_\infty(\la) & = \sup_{\omega \in \Omega} \nor{(L+\la I)^{-1/2} \psi_s(\cdot, \omega)}_\rhox^2 = C_s\sup_{\omega \in \Omega} s(\omega) \nor{(L+\la I)^{-1/2} \psi(\cdot, \omega)}_\rhox^2 \\
		&= C_s\sup_{\omega \in \Omega} \nor{(L+\la I)^{-1/2} \psi(\cdot, \omega)}_\rhox^{-2}  \nor{(L+\la I)^{-1/2} \psi(\cdot, \omega)}_\rhox^2 
		= C_s.
	}
	We recall that $C_s = \int \frac{1}{s(\omega)} d\pi$. Denoting with $\psi_\omega$ the function $\psi(\cdot, \omega)$ and considering that $\nor{A x}_\rhox = \tr(A^{2} (x \otimes x))$ for any bounded symmetrix linear operator $A$ and vector $v$, and that the trace is linear,
	\eqals{
		{\cal F}_\infty(\la) &= C_s = \int \nor{(L+\la I)^{-1/2} \psi_\omega}_\rhox^2 d \pi(\omega) = \int \tr((L+\la I)^{-1} (\psi_\omega \otimes \psi_\omega)) d \pi(\omega) \\
		& = \tr\left((L+\la I)^{-1} \int(\psi_\omega \otimes \psi_\omega) d \pi(\omega) \right) = \tr((L+\la I)^{-1}L) = {\cal N}(\la).
	} 
	where the fact that $L = \int \psi_\omega \otimes \psi_\omega d\pi(\omega)$ is due to Lemma~\ref{lm:L-characterization}.
	\epr

	\bpr[\bf Proof of Theorem~\ref{thm:rf-fast-compatibility}]
	This is a version of Thm.~\ref{thm:optimal-rates-RF-RKLS-with-constants}, with simplified set of assumptions. 
	Indeed the existence of $\fh$ and the fact that $|y| \leq b$ a. s. satisfies Asm.~\ref{ass:noise} with $\sigma = B = 2b$. The fact that $X$ is a Polish space and that $\psi$ is bounded continuous, satisfy Asm.~\ref{ass:kernel-bounded}, and so the kernel is bounded by $\kappa^2$. Asm.~\ref{ass:noise} and Asm.~\ref{ass:source} are directly satisfied by Asm.~\ref{ass:prior}.In particular we obtain $n_0 := \left(2/\nor{L}\right)^{\frac{p+1}{p}} ~ \vee ~ \left(264\kappa^2p~\log(556 \kappa^2\delta^{-1}\sqrt{p \kappa^2})\right)^{1+p}$,
	$$ c_0 := 9\left(3 +  4\tkappa^2 + \frac{4\kappa^2}{\nor{L}} + \frac{\tkappa^2}{4}Q^{2r-1} F^{2-2r}\right), \quad 
	c_1 := 64\left(\bar{B}\tkappa + \bar{\sigma} Q + R\right)^2,$$
	with $\bar{B} := 2b + 2\kappa R$ and $\bar{\sigma} := 2b + 2 \kappa\sqrt{R}$.
	\epr
	
	
	\section{Concentration Inequalities}\label{sect:conc-ineq}
	Here we recall some standard concentration inequalities that will be used in Sect.~\ref{sect:prob-estimates}.
	The following inequality is from Thm.3 of \cite{boucheron2004concentration} and will be used in Lemma~\ref{prop:emp-eff-dim}, together with other inequalities, to concentrate the empirical effective dimension to the true effective dimension.
	\bp[Bernstein's inequality for sum of random variables]\label{prop:tail-bern}
	Let $x_1,\dots,x_n$ be a sequence of independent and identically distributed random variables on $\R$ with zero mean. If there exists an $T, S \in \R$ such that $x_i \leq T$ almost everywhere and $\mathbb{E} x_i^2 \leq S$, for $i \in \{1,\dots,n\}$. For any $\delta > 0$ the following holds with probability at least $1-\delta$:
	$$\frac{1}{n} \sum_{i=1}^n x_i \leq \frac{2T\log\frac{1}{\delta}}{3n} + \sqrt{\frac{2S\log\frac{1}{\delta}}{n}}.$$
	If there exists $T' \geq \max_i|x_i|$ almost everywhere, then the same bound, with $T'$ instead of $T$, holds for the for the absolute value of the left hand side, with probability at least  $1-2\delta$.
	\ep
	\bpr
	It is a restatement of Theorem 3 of \cite{boucheron2004concentration}.
	\epr
	The following inequality is and adaptation of Thm.~3.3.4 in \cite{yurinsky1995sums} and is a generalization of the previous one to random vectors. It is used primarily in Lemma~\ref{lm:sampling-error}, to control the sample error. Moreover it is used in Prop.~\ref{prop:emp-eff-dim}, Lemma~\ref{lm:conc-beta}, to control the empirical effective dimension and to bound the term $\beta$ of Thm.~\ref{thm:geo-dec}, in the main theorem. Finally it is used to prove the inequality in Prop.~\ref{prop:Clahalf-C-control}.
	\bp[Bernstein's inequality for sum of random vectors]\label{prop:tail-vectors}
	Let $z_1,\dots,z_n$ be a sequence of independent identically distributed random vectors on a separable Hilbert space $\hh$. Assume $\mu = \mathbb{E} z_i$ exists and let $\sigma, M \geq 0$ such that
	$$ \mathbb{E} \nor{z_i - \mu}_{\hh}^p \leq \frac{1}{2}p!\sigma^2 M^{p-2}, \quad \forall p \geq 2,$$
	for any $i \in \{1,\dots,n\}$. Then for any $\delta \in (0,1]$:
	$$\left\|\frac{1}{n}\sum_{i=1}^n z_i - \mu\right\|_{\hh} \leq \frac{2 M \log\frac{2}{\delta}}{n} + \sqrt{\frac{2 \sigma^2\log\frac{2}{\delta}}{n}}$$
	with probability at least $1 - \delta$.
	\ep
	\bpr
	restatement of Theorem 3.3.4 of \cite{yurinsky1995sums}.
	\epr
	The following inequality is essentially Thm.~7.3.1 in  \cite{tropp2012user} (generalized to separable Hilbert spaces by the technique in Section 4 of \cite{minsker2011some}). It is a generalization of the Bernstein inequality to random operators. It is mainly used to prove the inequality in Prop.~\ref{prop:concentration-ClaC}.
	\bp[Bernstein's inequality for sum of random operators]\label{prop:tail-operators}
	Let $\hh$ be a separable Hilbert space and let $X_1,\dots,X_n$ be a sequence of independent and identically distributed self-adjoint positive random operators on $\hh$. Assume that there exists $\mathbb{E} X_i = 0$ and $\la_{\max}(X_i) \leq T$ almost surely for some $T > 0$, for any $i \in \{1,\dots,n\}$. Let $S$ be a positive operator such that $\mathbb{E} (X_i)^2 \leq S$. Then for any $\delta \in (0,1]$ the following holds
	\begin{align*}
	\la_{\max}\left(\frac{1}{n}\sum_{i=1}^n X_i\right) \leq \frac{2T\beta}{3n} + \sqrt{\frac{2\nor{S}{}\beta}{n}}
	\end{align*}
	with probability at least $1-\delta$. Here $\beta = \log\frac{2\tr S}{\nor{S}{}\delta}$. 
	
	If there exists $L'$ such that $L' \geq \max_i \nor{X_i}{}$ almost everywhere, then the same bound holds with $L'$ instead of $L$ for the operator norm, with probability at least $1-2\delta$.
	\ep
	\bpr
	The theorem is a restatement of Theorem 7.3.1 of  \cite{tropp2012user} generalized to the separable Hilbert space case by means of the technique in Section 4 of \cite{minsker2011some}.
	\epr
	
	\section{Operator Inequalities}\label{sect:op-ineq}
	Let $\hh, \kk$ be separable Hilbert spaces and $A, B: \hh \to \hh$ bounded linear operators.
	
	The following inequality is needed to prove the interpolation inequality in Prop.~\ref{prop:interp}, that is needed to perform a fine split of the computational error.
	\bp[Cordes Inequality \cite{fujii1993norm}]\label{prop:cordes}
	If $A, B$ are self-adjoint and positive, then
	$$\nor{A^s B^s} \leq \nor{A B}^s \quad \textrm{when } 0 \leq s \leq 1$$ 
	\ep
	
	\section{Auxiliary Results}\label{sect:aux-results}
	The next proposition is used in Lemma~\ref{lm:conc-S} to control the sample error. It is based on the Bernstein inequality for random vectors, Prop.~\ref{prop:tail-vectors}.
	\bp\label{prop:Clahalf-C-control}
	Let  $\hh, \kk$ be two separable Hilbert spaces and $(v_1, z_1),\dots,(v_n,z_n) \in \hh \times \kk$, with $n \geq 1$, be independent and identically distributed random pairs of vectors, such that there exists a constant $\tkappa > 0$ for which $\nor{v}_{\hh} \leq \tkappa$ and $\nor{z}_{\hh} \leq \kappa$ almost everywhere. Let $Q = \mathbb{E} \, v \otimes v$, let $T = \mathbb{E} \, v \otimes z$ and $T_n = \frac{1}{n} \sum_{i=1}^{n} v_i \otimes z_i$. For any $0 < \lambda \leq \nor{Q}{}$ and any $\tau \geq 0$, the following holds
	$$
	\nor{(Q+\lambda I)^{-1/2}(T-T_n)}_{HS} \leq \frac{4\sqrt{\tilde{\cal F}_\infty(\la)}\kappa \log\frac{2}{\tau}}{n} + \sqrt{\frac{4\kappa^2 \tilde{\cal N}(\la)\log\frac{2}{\tau}}{n}}$$
	with probability at least $1-\tau$, where $\esssup$ denotes the essential supremum and
	$$\tilde{\cal F}_\infty(\la) := \esssup_{v \in \hh} ~\nor{(Q+\lambda I)^{-1/2}v}^2, \quad \tilde{\cal N}(\la) := \tr((Q + \la I)^{-1} Q).$$
	In particular, we recall that $\tilde{\cal N}(\la) \leq \tilde{\cal F}_\infty(\la) \leq \frac{\kappa^2}{\lambda}$.
	\ep 
	\bpr
	Define for any $i \in \{1,\dots,n\}$ the random operator $\zeta_i = (Q+\lambda I)^{-1/2} v_i \otimes z_i$. Note that $\mathbb{E} \zeta_i = (Q+\lambda I)^{-1/2} T$. Since $\zeta_i$ is a vector in the Hilbert space of Hilbert-Schmidt operators on $\hh$, we study the moments of $\nor{\zeta_i - \mathbb{E} \zeta_i}_{HS}$ in order to apply Prop.~\ref{prop:tail-vectors}.
	We recall that
	\eqals{
		\esssup \nor{\zeta_i - \mathbb{E} \zeta_i}_{HS} & \leq \esssup \nor{\zeta_i}_{HS} + \mathbb{E} \nor{\zeta_i}_{HS} \leq 2 \esssup \nor{\zeta_i}_{HS} \\
		& = 2 \esssup \nor{(Q+\lambda I)^{-1/2}v_i \otimes z_i} _{HS} \leq \esssup \nor{(Q+\lambda I)^{-1/2}v_i}_\hh \nor{z_i}_\kk\\
		& \leq 2 \esssup  \nor{(Q+\lambda I)^{-1/2}v_i}_\hh ~ \esssup \nor{z_i}_\kk = 2\tilde{F}_\infty(\la)^{1/2} \kappa.
	}
	
	For any $p \geq 2$ we have
	\eqals{
		\mathbb{E}\, \nor{\zeta_i - \mathbb{E} \zeta_i}_{HS}^p & \leq (\esssup_{z} \nor{\zeta_i - \mathbb{E} \zeta_i}^{p-2}) (\mathbb{E}\, \nor{\zeta_i - \mathbb{E} \zeta_i}_{HS}^2) \\
		& \leq (2\tilde{F}_\infty(\la)^{1/2} \kappa)^{p-2} \mathbb{E} \nor{\zeta_i - \mathbb{E} \zeta_i}_{HS}^2.
	}
	Now we study $\mathbb{E} \nor{\zeta_i - \mathbb{E} \zeta_i}_{HS}^2$,
	\eqals{
		\mathbb{E} \nor{\zeta_i - \mathbb{E} \zeta_i}_{HS}^2 & =\tr(\mathbb{E}\zeta_i \otimes \zeta_i - (\mathbb{E} \zeta_i)^2) \leq \tr(\mathbb{E}\zeta_i \otimes \zeta_i)\\
		& = \mathbb{E} ~~ \nor{z_i}^2 \tr((Q + \la I)^{-1/2} (v_i \otimes v_i)(Q + \la I)^{-1/2} ) \\
		& \leq \esssup \nor{z_i}_\kk^2 ~~ \mathbb{E} \tr((Q + \la I)^{-1/2}(v_i \otimes v_i)(Q + \la I)^{-1/2} ) \\
		& \leq \esssup \nor{z_i}_\kk^2 \tr((Q + \la I)^{-1/2} \mathbb{E}(v_i \otimes v_i)(Q + \la I)^{-1/2} ) \\
		& \leq \kappa^2 \tr((Q + \la I)^{-1} Q) = \kappa^2 \tilde{\cal N}(\la),
	}
	for any $1 \leq i \leq n$. Therefore for any $p \geq 2$ we have
	$$\mathbb{E}\, \nor{\zeta_i - \mathbb{E} \zeta_i}_{HS}^p \leq \frac{1}{2}p!\sqrt{2\kappa^2 {\cal N}(\la)}^2 (2\tilde{F}_\infty(\la)^{1/2} \kappa)^{p-2}.$$
	Finally we apply Prop.~\ref{prop:tail-vectors}.
	\epr
	
	The following inequality, together with Prop.~\ref{prop:invAtoinvB}, is used in Prop.~\ref{prop:emp-eff-dim}, Lemmas~\ref{lm:conc-beta},~\ref{lm:conc-C}. A similar technique can be found in \cite{caponnetto2006adaptation}.
	\bp\label{prop:concentration-ClaC}
	Let $v_1,\dots,v_n$ with $n \geq 1$, be independent and identically distributed random vectors on a separable Hilbert spaces $\hh$ such that $Q = \mathbb{E} \, v \otimes v$ is trace class, and for any $\la > 0$ there exists a constant ${\cal F}_\infty(\la) < \infty$ such that $\scal{v}{(Q+\la I)^{-1}v} \leq {\cal F}_\infty(\la)$ almost everywhere. Let $Q_n = \frac{1}{n} \sum_{i=1}^{n} v_i \otimes v_i$ and take $0 < \lambda \leq \nor{Q}{}$.  Then for any $\delta \geq 0$, the following holds with probability at least $1-2\delta$
	$$
	\nor{(Q+\lambda I)^{-1/2}(Q-Q_n)(Q+\lambda I)^{-1/2}}{} \leq \frac{2\beta(1 + {\cal F}_\infty(\la))}{3 n} + \sqrt{\frac{2\beta {\cal F}_\infty(\la)}{n}},$$
	where $\beta = \log \frac{4 \tr Q}{\lambda\delta}$. Moreover, with the same probability
	$$
	\la_{\max}\left((Q+\lambda I)^{-1/2}(Q-Q_n)(Q+\lambda I)^{-1/2}\right) \leq \frac{2\beta}{3 n} + \sqrt{\frac{2\beta {\cal F}_\infty(\la)}{n}}.$$
	\ep
	\bpr
	Let $Q_\la = Q + \la I$.
	Here we apply Prop.~\ref{prop:tail-operators} on the random variables $Z_i = M - Q_\la^{-1/2}v_i \otimes Q_\la^{-1/2}v_i$ with $M = Q_\la^{-1/2}QQ_\la^{-1/2}$ for $1 \leq i \leq n$. Note that the expectation of $Z_i$ is $0$.
	The random vectors are bounded by
	$$\nor{Q_\la^{-1/2}QQ_\la^{-1/2} - Q_\la^{-1/2}v_i \otimes Q_\la^{-1/2}v_i}{} \leq \scal{v_i}{Q_\la^{-1}v_i} + \nor{Q_\la^{-1/2}QQ_\la^{-1/2}}{} \leq {\cal F}_\infty(\la) + 1,$$
	almost everywhere, for any $1 \leq i \leq n$. The second order moment is
	\begin{align*}
	\mathbb{E} (Z_i)^2 &= \mathbb{E} \;\;\scal{v_i}{Q_\la^{-1} v_i} \;Q_\la^{-1/2}v_i \otimes Q_\la^{-1/2}v_i \;\;\;-\;\;\; Q_\la^{-2}Q^2 \\
	& \leq {\cal F}_\infty(\la) \mathbb{E} Q_\la^{-1/2}v_i \otimes Q_\la^{-1/2}v_i = {\cal F}_\infty(\la) Q = S,
	\end{align*}
	for $1 \leq i \leq n$.
	Now we can apply Prop.~\ref{prop:tail-operators}. Now some considerations on $\beta$.
	It is $\beta = \log \frac{2\tr S}{\nor{S}{}\delta} = \frac{2\tr Q_\la^{-1}Q}{\nor{Q_\la^{-1}Q}{}\delta}$, now $\tr Q_\la^{-1}Q \leq \frac{1}{\la} \tr Q$. We need a lower bound for $\nor{Q_\la^{-1}Q}{} = \frac{\sigma_1}{\sigma_1 + \la}$ where $\sigma_1 = \nor{Q}$ is the biggest eigenvalue of $Q$, now $\la \leq \sigma_1$ thus $\frac{\sigma_1}{\sigma_1 + \la} \geq 1/2$ and so $\beta \leq \log\frac{2\tr Q_\la^{-1}Q}{\nor{Q_\la^{-1}Q}{}\delta} \leq \log\frac{4\tr Q}{\la\delta}$.
	
	For the second bound of this proposition we use the second bound of Prop.~\ref{prop:tail-operators}, the analysis remains the same except for uniform bound on $Z_1$, that now is
	$$ \sup_{f \in \hh} \scal{f}{Z_1 f} = \sup_{f \in \hh} \scal{f}{Q_\la^{-1}Qf} - \scal{f}{Q_\la^{-1/2}v_i}^2 \leq \sup_{f \in \hh} \scal{f}{Q_\la^{-1}Qf} \leq 1.$$
	\epr
	In the following remark, we start from the result of the previous proposition, expressing the conditions on $n$ and $\la$ with respect to a given value for the bound. 
	\br\label{rem:ClC}
	With the same notation of Prop.~\ref{prop:concentration-ClaC}, assume that $\nor{v} \leq \tkappa$ almost everywhere\footnote{Or equivalently define $\tkappa^2$ with respect to ${\cal F}_\infty(\la)$ as $\tkappa^2 = \inf_{\la>0} {\cal F}_\infty(\la)(\nor{Q} + \la)$.}, then we have that
	\begin{enumerate}
		\item for any $t \in (0,1]$, when $n \geq \frac{2}{t^2}\left(\frac{2t}{3} + {\cal F}_\infty(\la)\right)\log \frac{4 \tkappa^2}{\lambda\delta}$ and $\la \leq \nor{Q}$, we have
		$$\la_{\max}\left((Q+\lambda I)^{-1/2}(Q-Q_n)(Q+\lambda I)^{-1/2}\right) \leq t,$$
		with probability at least $1-\delta$.
		\item The equation above holds with the same probability with $t = 1/2$, when $\frac{9\tkappa^2}{n}\log\frac{n}{2\delta}\leq \la \leq \nor{Q}{}$ and $n \geq 405\tkappa^2 \vee 67\tkappa^2 \log \frac{\tkappa^2}{2\delta}$.
		\item The equation above holds with the same probability with $t = 1/3$, when $\frac{19\tkappa^2}{n}\log\frac{n}{4\delta}\leq \la \leq \nor{Q}{}$ and $n \geq 405\tkappa^2 \vee 67\tkappa^2 \log \frac{\tkappa^2}{2\delta}$.
	\end{enumerate}
	\er
	
	The next proposition, together with Prop.~\ref{prop:emp-eff-dim} are a restatement of Prop.~1 of \cite{rudi2015less}. In particular the next proposition performs the analytic decomposition of the difference between the empirical and the true effective dimension, while Prop.~\ref{prop:emp-eff-dim}, bounds the decomposition in probability. 
	\bp[Geometry of Empirical Effective Dimension]\label{prop:geo-emp-eff-dim} 
	Let $\cal{L}$ be an Hilbert space. Let $L, \tL: {\cal L} \to {\cal L}$ be two bounded positive linear operators, that are trace class. Given $\la > 0$, let \eqals{
		{\cal N}(\la) = \tr (L\Ll^{-1}) \quad\textrm{and}\quad \tilde{\cal N}(\la) = \tr (\tL\tLl^{-1}),
	}
	with $\Ll = L + \la I$. Then
	$$ 
	|\tilde{\cal N}(\la) - {\cal N}(\la)| \leq \la e(\la) + \frac{d(\la)^2}{1-c(\la)}
	$$
	where $c(\la) = \la_{\max}(\tilde{B})$, $d(\la) = \nor{\tilde{B}}_{\rm HS}$ with $\tilde{B} = \Ll^{-1/2}(L-\tL)\Ll^{-1/2}$ and $e(\la) = \tr({\Ll^{-1/2} \tilde{B} \Ll^{-1/2}})$.
	\ep
	\bpr
	First of all note that $\la_{\max}(\tilde{B})$, the biggest eigenvalue of $\la_{\max}$, is smaller than $1$ since $\tilde{B}$ is the difference of two positive operators and the biggest eigenvalue of the minuend operator is $\nor{\Ll^{-1/2}\L\Ll^{-1/2}} = \frac{\la_{\max}(L)}{\la_{\max}(L) + \la} < 1$. Then we can use the fact that $A(A+\la I)^{-1} = I - \la (A+\la I)^{-1}$ for any bounded linear operator $A$ and that $\tLl^{-1} = \Ll^{-1/2}(I-\tilde{B})^{-1}\Ll^{-1/2}$ (see the proof of Prop.~\ref{prop:invAtoinvB}), since $\la_{\max}(\tilde{B}) < 1$, to obtain
	\begin{align*}
	|\tilde{\cal N}(\la) - {\cal N}(\la)|  & =  |\tr (\tLl^{-1}\tL  - \L\Ll^{-1})| = \la |\tr (\tLl^{-1}  - \Ll^{-1})| = |\la \tr( \tLl^{-1}(\tL - \L)\Ll^{-1})| \\
	{} & = |\la \tr (\Ll^{-1/2}(I - \tilde{B})^{-1}\Ll^{-1/2} (\tL - \L)\Ll^{-1/2}\Ll^{-1/2})|\\
	{} & =  |\la \tr(\Ll^{-1/2}(I - \tilde{B})^{-1}\tilde{B}\Ll^{-1/2})|.
	\end{align*}
	Considering that for any bounded symmetric linear operator $X$ the following identity holds $$(I - X)^{-1}X = X + X(I-X)^{-1}X,$$ when $\la_{\max}(X) < 1$, we have 
	\begin{align*}
	\la |\tr(\Ll^{-1/2}(I - \tilde{B})^{-1}\tilde{B}\Ll^{-1/2})|  & \leq  \underbrace{\la |\tr (\Ll^{-1/2}\tilde{B}\Ll^{-1/2})|}_{A} + \underbrace{\la |\tr (\Ll^{-1/2}\tilde{B}(I - \tilde{B})^{-1}\tilde{B}\Ll^{-1/2})|}_{B}.
	\end{align*} 
	The term $A$ is just equal to $\la e(\la)$.
	Now, by definition of Hilbert-Schmidt norm, the term $B$ can be written as $B = \nor{\la^{1/2}\Ll^{-1/2}\tilde{B}(I - \tilde{B})^{-1/2}}_{\rm HS}^2$, thus we have
	$$B = \nor{\la^{1/2}\Ll^{-1/2}\tilde{B}(I - \tilde{B})^{-1/2}}_{\rm HS}^2 \leq \nor{\la^{1/2}\Ll^{-1/2}}{}^2\nor{\tilde{B}}_{\rm HS}^2\nor{(I - \tilde{B})^{-1/2}}{}^2 \leq (1-c(\la))^{-1}d(\la)^2,$$
	since $\nor{(I - \tilde{B})^{-1/2}}{}^2 = (1 - \la_{\max}(\tilde{B}))^{-1}$ because the spectral function $(1-\sigma)^{-1}$ is increasing and positive on $[-\infty,1)$.
	\epr
	The next result is essentially Prop. 7 of \cite{rudi2015less}, while a similar technique can be found in \cite{caponnetto2006adaptation}. It is used, mainly together with Prop.~\ref{prop:concentration-ClaC}, to give multiplicative bounds to empirical operators. It is used in the analytic decomposition of the excess risk, in Prop.~\ref{prop:geo-emp-eff-dim}, Thm.~\ref{thm:geo-dec}. 
	\bp\label{prop:invAtoinvB}
	Let $\hh$ be a separable Hilbert space, let $A, B$ two bounded self-adjoint positive linear operators on $\hh$ and $\la > 0$. Then
	$$ \nor{(A + \lambda I)^{-1/2}B^{1/2}}{} \leq \nor{(A + \lambda I)^{-1/2}(B+\la I)^{1/2}}{} \leq (1-\beta)^{-1/2}$$
	with
	$$ \beta = \la_{\max}\left[(B+\lambda I)^{-1/2}(B-A)(B+\lambda I)^{-1/2}\right].$$
	Note that $\beta \leq \frac{\la_{\max}(B)}{\la_{\max}(B) + \la} < 1$ by definition.
	\ep
	\bpr 
	Let $B_\la = B + \la I$. First of all note that $\beta < 1$ for any $\la > 0$. Indeed, by exploiting the variational formulation of the biggest eigenvalue, we have
	\eqals{
		\beta &= \la_{\max}(B_\la^{-1/2}(B-A)B_\la^{-1/2}) = \sup_{f \in \hh, \nor{f}_\hh \leq 1} \scal{f}{B_\la^{-1/2}(B-A)B_\la^{-1/2}f}\\
		& = \sup_{f \in \hh, \nor{f}_\hh \leq 1} \scal{f}{B_\la^{-1/2}BB_\la^{-1/2}f} - \scal{f}{B_\la^{-1/2}AB_\la^{-1/2}f}\\
		& \leq \sup_{f \in \hh, \nor{f}_\hh \leq 1} \scal{f}{B_\la^{-1/2}BB_\la^{-1/2}f} = \la_{\max}(B_\la^{-1/2}BB_\la^{-1/2}) \\
		&\leq \frac{\la_{\max}(B)}{\la_{\max}(B) + \la} < 1,
	}
	since $A$ is a positive operator and thus $\scal{f}{B_\la^{-1/2}AB_\la^{-1/2}f} \geq 0$ for any $f \in \hh$. Now note that
	\begin{align*}
	(A+\lambda I)^{-1} & = \left[(B + \lambda I) - (B - A)\right]^{-1} \\
	& = \left[B_\la^{1/2}\left(I - B_\la^{-1/2}(B - A)B_\la^{-1/2}\right)B_\la^{1/2}\right]^{-1}\\
	& = B_\la^{-1/2}\left[I - B_\la^{-1/2}(B - A)B_\la^{-1/2}\right]^{-1}B_\la^{-1/2}.
	\end{align*}
	Now let $X = (I - B_\la^{-1/2}(B - A)B_\la^{-1/2})^{-1}$. We have that, 
	\begin{align*}
	\nor{(A + \lambda I)^{-1/2}B_\la^{1/2}}{} &= \nor{B_\la^{1/2}(A + \lambda I)^{-1}B_\la^{1/2}}{}^{1/2} = \nor{X}{}^{1/2}
	\end{align*}
	because $\nor{Z}{} = \nor{Z^*Z}{}^{1/2}$ for any bounded operator $Z$. Note that 
	\eqals{
		\nor{(A + \lambda I)^{-1/2}B^{1/2}}{} &= \nor{(A + \lambda I)^{-1/2}B_\la^{1/2}B_\la^{-1/2} B^{1/2}}{} \leq \nor{(A + \lambda I)^{-1/2}B_\la^{1/2}}\nor{B_\la^{-1/2} B^{1/2}}{}  \\
		&\leq \nor{X}{}^{1/2}\nor{B_\la^{-1/2}B^{1/2}}{} \leq \nor{X}{}^{1/2}.
	}
	Finally let $Y = B_\la^{-1/2}(B - A)B_\la^{-1/2}$, we have seen that $\beta = \la_{\max}(Y) < 1$, then
	$$\nor{X}{} = \nor{(I - Y)^{-1}}{} = (1 - \lambda_{\max}(Y))^{-1},$$
	since $X = w(Y)$ with $w(\sigma) = (1-\sigma)^{-1}$ for $-\infty \leq \sigma < 1$, and $w$ is positive and monotonically increasing on the domain.  
	\epr
	
	The following proposition is used to give a fine analytical decomposition of the excess risk in Prop.~\ref{prop:geo-emp-eff-dim}, Thm.~\ref{thm:geo-dec}. A similar interpolation inequality for finite dimensional matrices, can be found in \cite{bhatia2013matrix}. Here we prove it for bounded linear operators on separable Hilbert spaces.
	\begin{proposition}
		\label{prop:interp}
		Let $\hh,\kk$ be two separable Hilbert spaces and $X,A$ be bounded linear operators, with $X:\hh\to \kk$ and $A:\hh \to \hh$ be positive semidefinite. 
		\begin{equation}
		\nor{XA^\sigma} \leq \nor{X}^{1-\sigma}\nor{XA}^\sigma, \qquad \forall \sigma \in [0,1].
		\end{equation}
	\end{proposition}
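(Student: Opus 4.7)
The plan is to reduce the inequality to the Cordes inequality (Prop.~\ref{prop:cordes}) by passing through the positive modulus $Q := (X^*X)^{1/2}$, a bounded positive self-adjoint operator on $\hh$ satisfying $Q^2 = X^*X$. Using $\nor{T}^2 = \nor{T^*T}$ for any bounded $T$, together with $Q^2 = X^*X$, I would first establish the norm identities $\nor{X}^2 = \nor{X^*X} = \nor{Q}^2$ and
$$\nor{XA^\sigma}^2 = \nor{A^\sigma X^*X A^\sigma} = \nor{A^\sigma Q^2 A^\sigma} = \nor{QA^\sigma}^2, \qquad \nor{XA}^2 = \nor{AQ^2A} = \nor{QA}^2.$$
Since $(QA^\sigma)^* = A^\sigma Q$, adjoint-invariance of the operator norm also gives $\nor{QA^\sigma} = \nor{A^\sigma Q}$. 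Hence the statement reduces to proving
$$\nor{A^\sigma Q} \leq \nor{Q}^{1-\sigma}\nor{AQ}^\sigma$$
for the two positive self-adjoint operators $A, Q$ and $\sigma \in [0,1]$.

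Next I would split $A^\sigma Q = (A^\sigma Q^\sigma)\,Q^{1-\sigma}$ and apply submultiplicativity of the operator norm. The second factor satisfies $\nor{Q^{1-\sigma}} = \nor{Q}^{1-\sigma}$ by the spectral mapping theorem (valid since $1-\sigma \geq 0$ and $Q$ is positive), while the first factor is controlled by Prop.~\ref{prop:cordes}: $\nor{A^\sigma Q^\sigma} \leq \nor{AQ}^\sigma$. Multiplying gives $\nor{A^\sigma Q} \leq \nor{AQ}^\sigma \nor{Q}^{1-\sigma}$; unwinding the identities from the previous paragraph yields the desired bound $\nor{XA^\sigma} \leq \nor{X}^{1-\sigma}\nor{XA}^\sigma$.

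No substantial obstacle is anticipated: the whole argument is a short reduction to Cordes once one replaces $X$ by its modulus $Q$. The only points requiring care are the passage to $Q$ (which works for arbitrary bounded $X$, including the case where $X$ maps between distinct Hilbert spaces $\hh$ and $\kk$) and the norm identity $\nor{QA^\sigma} = \nor{A^\sigma Q}$; both follow from standard $C^*$-algebraic identities. An alternative route via Hadamard's three-lines theorem applied to $z \mapsto XA^z$ (exploiting that $A^{is}$ is unitary for real $s$ after regularizing $A$ to $A+\varepsilon I$) also works, but the Cordes-based reduction above is more direct given the tools already collected in the appendix.
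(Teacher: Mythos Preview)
Your argument is correct. Both your proof and the paper's reduce the inequality to the Cordes inequality (Prop.~\ref{prop:cordes}), but the decompositions differ. You introduce the modulus $Q=(X^*X)^{1/2}$, split $A^\sigma Q = (A^\sigma Q^\sigma)\,Q^{1-\sigma}$, and apply Cordes to the first factor together with submultiplicativity and the spectral identity $\nor{Q^{1-\sigma}}=\nor{Q}^{1-\sigma}$. The paper instead works directly with $X^*X$: from $\nor{XA^\sigma}^2=\nor{A^\sigma X^*X A^\sigma}$ it applies Cordes (in the three-term form $\nor{A^\sigma P^\sigma A^\sigma}\le\nor{APA}^\sigma$ with $P=(X^*X)^{1/\sigma}$) and then uses the L\"owner ordering $(X^*X)^{1/\sigma}\preccurlyeq \nor{X}^{2(1-\sigma)/\sigma}X^*X$ to extract the factor $\nor{X}^{1-\sigma}$. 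Your route is arguably a touch more elementary since it avoids the L\"owner-ordering step, while the paper's version avoids introducing the square root $Q$; both are short and equally valid.
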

	\begin{proof}
		\begin{equation}
		\nor{XA^\sigma}  =  \nor{A^\sigma (X^*X)^{\frac{1}{\sigma}\sigma} A^\sigma}^{\frac{1}{2}} \leq \nor{A (X^*X)^{\frac{1}{\sigma}}A}^{\frac{\sigma}{2}} 
		\end{equation}
		where the last inequality is due to Cordes (see Proposition~\ref{prop:cordes}). Then we have that $(X^*X)^\frac{1}{\sigma} \preccurlyeq \nor{X}^\frac{2(1-\sigma)}{\sigma} X^*X$ (where $\preccurlyeq$ is the L\"owner partial ordering on positive operators) and so
		$$A (X^*X)^{\frac{1}{\sigma}}A \preccurlyeq \nor{X}^\frac{2(1-\sigma)}{\sigma} A X^*X A$$
		that implies
		\begin{equation}
		\nor{A (X^*X)^{\frac{1}{\sigma}}A}^{\frac{\sigma}{2}} \leq
		\nor{X}^{1-\sigma} \nor{A X^*X A}^\frac{\sigma}{2} = \nor{X}^{1-\sigma}\nor{XA}^\sigma
		\end{equation}
	\end{proof}

	In the next proposition, we bound ${\cal N}_M(\la)$ in terms of the effective dimension ${\cal N}(\la)$ that is the one associated to the kernel $K$. The proof of Prop.~\ref{prop:emp-eff-dim} analogous to the one of Prop.~1 of \cite{rudi2015less}, with simpler proof and slightly improved constants. 
	
	\bp[Bounds on the Effective Dimension]\label{prop:emp-eff-dim} Let $\tilde{\cal N}(\la) = \tr \tL\tLl^{-1}$. Under Assumption~\ref{ass:kernel-bounded}, for any $\delta > 0$, $\la \leq \nor{L}$ and $m \geq (4 + 18{\cal F}_\infty(\la))\log \frac{12\tkappa^2}{\la \delta}$, then the following holds with probability at least $1-\delta$,
	$$
	|\tilde{\cal N}(\la) - {\cal N}(\la)| \leq c(\la,m, \delta){\cal N}(\la) \leq 1.55 {\cal N}(\la),
	$$
	with $c(\la, m, \delta) = \frac{q}{{\cal N}(\la)} + \sqrt{\frac{3 q}{2{\cal N}(\la)}} + \frac{3}{2}\left(\frac{3q}{\sqrt{{\cal N}(\la)}} + \sqrt{3 q}\right)^2$ and $q= \frac{4{\cal F}_\infty(\la) \log \frac{6}{\delta}}{3 m}$.
	\ep
	\bpr
	First of all we recall that $\tilde{\cal N}(\la) = \tr \tL \tLl^{-1}$ and that 
	${\cal N}(\la) = \tr \L \Ll^{-1}$.
	Let $\tau = \delta/3$. By Prop.~\ref{prop:geo-emp-eff-dim} we know that 
	$$ |\tilde{\cal N}(\la) - {\cal N}(\la)| \leq \left(\frac{d(\la)^2}{(1-c(\la)){\cal N}(\la)} + \frac{\la e(\la)}{{\cal N}(\la)}\right) {\cal N}(\la)
	$$
	where $c(\la) = \la_{\max}(\tilde{B})$, $d(\la) = \nor{\tilde{B}}_\textrm{HS}$ with $\tilde{B} = \Ll^{-1/2}(L-\tL)\Ll^{-1/2}$ and $e(\la) = |\tr(\Ll^{-1/2}\tilde{B}\Ll^{-1/2})|$.
	Thus, now we control $c(\la)$, $d(\la)$ and $e(\la)$ in probability.
	Choosing $m$ such that $m \geq (4 + 18{\cal F}_\infty(\la))\log \frac{4\tkappa^2}{\la \tau}$, Prop.~\ref{prop:concentration-ClaC} guarantees that $c(\la) = \la_{\max}(\tilde{B}) \leq 1/3$ with probability at least $1-\tau$.
	
	To find an upper bound for $\la e(\la)$ we define the i.i.d. random variables $\eta_i = \scal{\psi_{\omega_i}}{\la\Ll^{-2} \psi_{\omega_i}}_{\rhox} \in \R$ with $i \in \{1,\dots,m\}$. By linearity of the trace and the expectation, we have $M = \mathbb{E} \eta_1  = \mathbb{E} \scal{\psi_{\omega_1}}{\la\Ll^{-2} \psi_{\omega_1}}_\rhox = \mathbb{E} \tr (\la \Ll^{-2} \psi_{\omega_1}\otimes \psi_{\omega_1}) = \la \tr (\Ll^{-2}\L)$. Therefore,
	\eqals{
		\la e(\la, m) & = \left|\la\tr(\Ll^{-1/2}\tilde{B}\Ll^{-1/2})\right| = \left|\la\tr\left(\Ll^{-1} L \Ll^{-1} - \frac{1}{m}\sum_{i=1}^m (\Ll^{-1}\psi_{\omega_i})\otimes (\Ll^{-1} \psi_{\omega_i})\right)\right|\\
		& = \left|\tr\left(\la\Ll^{-1} L \Ll^{-1}\right) - \frac{1}{m}\sum_{i=1}^m \scal{\psi_{\omega_i}}{\la\Ll^{-2} \psi_{\omega_i}}_{\rhox}\right|
		= \left|M - \frac{1}{m}\sum_{i=1}^m \eta_i\right|.
	}
	By noting that $M$ is upper bounded by $M = \tr (\la \Ll^{-2}L) = \tr((I - \Ll^{-1}L)\Ll^{-1}L) \leq {\cal N}(\la)$, we can apply the Bernstein inequality (Prop.~\ref{prop:tail-bern}) where $T$ and $S$ are
	$$\sup_{\omega \in \Omega}|M - \eta_1| \leq \la\nor{\Ll^{-1/2}}^2\nor{\Ll^{-1/2}\psi_{\omega_1}}^2 + M \leq {\cal F}_\infty(\la) + {\cal N}(\la) \leq  2{\cal F}_\infty(\la) = T,$$
	$$\mathbb{E}(\eta_1 - M)^2 = \mathbb{E}\eta_1^2 - M^2 \leq \mathbb{E}\eta_1^2 \leq (\sup_{\omega \in \Omega}|\eta_1|)(\mathbb{E}\eta_1) \leq  {\cal F}_\infty(\la) {\cal N}(\la) = S.$$
	Thus, we have
	$$ \la |\tr \Ll^{-1/2}\tilde{B}\Ll^{-1/2}| \leq \frac{4{\cal F}_\infty(\la) \log \frac{2}{\tau}}{3 m} + \sqrt{\frac{2 {\cal F}_\infty(\la) \mathcal{N}(\la) \log \frac{2}{\tau}}{m}},$$
	with probability at least $1-\tau$.
	
	To find a bound for $d(\la)$ consider that $\tilde{B} = W - \frac{1}{m}\sum_{i=1}^m \zeta_i$ where $\zeta_i$ are i.i.d. random operators defined as $\zeta_i = \Ll^{-1/2} (\psi_{\omega_i} \otimes  \psi_{\omega_i}) \Ll^{-1/2} \in \cal L$ for all $i \in \{1,\dots, m\}$, and $W = \mathbb{E} \zeta_1 = \Ll^{-1}L \in \cal L$.
	Then, by noting that $\nor{W}_{HS} \leq \mathbb{E}\tr(\zeta_1)  = \mathcal{N}(\la)$, we can apply the Bernstein's inequality for random vectors on a Hilbert space (Prop.~\ref{prop:tail-vectors}) where $T$ and $S$ are: 
	$$\nor{W - \zeta_1}_{HS} \leq \nor{\Ll^{-1/2}\psi_{\omega_1}}_{\rhox}^2 + \nor{W}_{HS} \leq {\cal F}_\infty(\la) + \nor{W}_{HS} \leq 2{\cal F}_\infty(\la) = T,$$
	$$\mathbb{E} \nor{\zeta_1 - W}{}^2 = \mathbb{E} \tr(\zeta_1^2 - W^2) \leq \mathbb{E} \tr(\zeta_1^2) \leq (\sup_{\omega \in \Omega} \nor{\zeta_1}) (\mathbb{E} \tr(\zeta_1)) = {\cal F}_\infty(\la) {\cal N}(\la) = S,$$
	obtaining
	$$d(\la) = \nor{\tilde{B}}_{HS} \leq \frac{4{\cal F}_\infty(\la) \log \frac{2}{\tau}}{m} +  \sqrt{\frac{4{\cal F}_\infty(\la)\mathcal{N}(\la) \log \frac{2}{\tau}}{m}},$$
	with probability at least $1-\tau$.
	Then, by taking a union bound for the three events we have
	$$
	|\tilde{\cal N}(\la) - {\cal N}(\la)| \leq \left(\frac{q}{{\cal N}(\la)} + \sqrt{\frac{3 q}{2{\cal N}(\la)}} + \frac{3}{2}\left(\frac{3q}{\sqrt{{\cal N}(\la)}} + \sqrt{3 q}\right)^2\right){\cal N}(\la),
	$$
	with probability at least $1-\delta$, where $q = \frac{4{\cal F}_\infty(\la) \log \frac{6}{\delta}}{3m}$.
	Finally, if $m \geq (4 + 18 {\cal F}_\infty(\la))\log \frac{12 \tkappa^2}{\la \delta}$, then we have $q \leq 2/27$. Noting that ${\cal N}(\la) \geq \nor{L\Ll^{-1}}{} = \frac{\nor{L}}{\nor{L}+\la} \geq 1/2$, we have that
	$$|\tilde{\cal N}(\la) - {\cal N}(\la)| \leq 1.55 {\cal N}(\la).$$
	\epr
	
	\section{Examples of Random feature maps}\label{sect:rf-examples}
	A lot of works have been devoted to develop random feature maps in the the setting introduced above, or slight variations (see for example \cite{conf/nips/RahimiR07, raginsky2009locality,rahimi2009weighted,cho2009,kar2012random,pham2013fast,conf/icml/LeSS13,yang2014random,hamid2014compact,conf/icml/YangSAM14} and references therein). In the rest of the section, we give several examples. 
	
	\paragraph{Random Features for Translation Invariant Kernels and extensions \cite{conf/nips/RahimiR07, raginsky2009locality,conf/icml/LeSS13,conf/icml/YangSAM14}}
	This approximation method is defined in \cite{conf/nips/RahimiR07} for the translation invariant kernels when $\X = \R^d$. A kernel $k:\X\times\X\to\R$ is {\em translation invariant}, when there exists a function $v:\X\to\R$ such that $k(x,z) = v(x-z)$ for all $x, z \in \X$. Now, let $\hat{v}: \Omega \to \R$ be the Fourier transform of $v$, with $\Omega = \X \times [0,2\pi]$. As shown in \cite{conf/nips/RahimiR07}, by using the Fourier transform, we can express $k$ as
	$$ k(x,z) = v(x-z) = \int_{\Omega} \psi(\omega, x)\psi(\omega,z)\hat{v}(\omega) d\pi, \quad \forall x,z \in \X,$$
	with $\pi$ proportional to $\hat{v}$, $\psi(\omega, x) = \cos(w^\top x + b)$ and $\omega := (w, b) \in \Omega, ~x \in \X$. Note that $X, \Omega, \pi, \psi$ satisfy Assumption~\ref{ass:kernel-bounded}.
	\\
	In \cite{raginsky2009locality}, they further randomize the construction above, by using results from locally sensitive hashing, to obtain a feature map which is a binary code. It can be shown that their methods satisfy Assumption~\ref{ass:kernel-bounded} for an appropriate choice of $\Omega$ and the probability distribution $\pi$.
	\\
	\cite{conf/icml/LeSS13,conf/icml/YangSAM14} consider the setting of \cite{conf/nips/RahimiR07}, but \cite{conf/icml/LeSS13} selects $\omega_1, \dots, \omega_m$ by means of the fast Welsh-Hadamard transform in order to improve the computational complexity for the algorithm computing $\tK$, while \cite{conf/icml/YangSAM14} selects them by using low-discrepancy sequences for quasi-random sampling to improve the statistical accuracy of $\tK$ with respect to $K$.

	\paragraph{Random Features Maps for the Gaussian Kernel, which are functions in $\hh$.}
	This set of random features  is related to the deterministic polynomial features in \cite{cotter2011explicit}. Let $\X = [0,1]^d$ and $\Omega = \N^d$. Let $\sigma > 0$. We have
	$$\psi(\omega, x) = C^{1/2} e^{-\frac{\sigma^2}{2}\|x\|^2} \prod_{j=1}^d x_j^{\omega_j}, \qquad \pi(\omega) =  \frac{\sigma^{-2 \sum_{j} \omega_j} }{C ~ \omega_1! \dots \omega_d!},$$
	where $C$ is the normalization factor $C = e^{\sigma^2 d}$. Indeed by Taylor expansion of the exponential function and of the power of a multinomial, we have
	\eqals{
		e^{-\frac{\sigma^2}{2} \nor{x-x'}^2} &= e^{-\frac{\sigma^2}{2} (\nor{x}^2 + \nor{x'}^2)} e^{\sigma^2 x^\top x}  = e^{-\frac{\sigma^2}{2} (\nor{x}^2 + \nor{x'}^2)}\sum_{t\geq 0} \frac{\sigma^{2t}}{t!} (x^\top x')^t\\
		& = e^{-\frac{\sigma^2}{2} (\nor{x}^2 + \nor{x'}^2)}\sum_{t\geq 0} \frac{\sigma^{2t}}{t!} (\sum_{j=1}^d x_j x_j')^t \\
		& = e^{-\frac{\sigma^2}{2} (\nor{x}^2 + \nor{x'}^2)}\sum_{t\geq 0} \frac{\sigma^{2t}}{t!} \sum_{\omega_1 + \dots + \omega_d = t} \frac{t!}{\omega_1! \dots \omega_d!} \prod_{j=1}^d (x_j x_j')^{\omega_j}\\
		& = \sum_{t\geq 0} \frac{\sigma^{2t}}{C t!} \sum_{\omega_1 + \dots + \omega_d = t} \frac{t!}{\omega_1! \dots \omega_d!} \left(C^{1/2} e^{-\frac{\sigma^2}{2}\nor{x}^2}\prod_{j=1}^d x_j^{\omega_j}\right) \left(C^{1/2} e^{-\frac{\sigma^2}{2} \nor{x'}^2}\prod_{j=1}^d x_j'^{\omega_j}\right) \\
		& = \sum_{\omega \in \N^d} \psi(x, \omega) \psi(x', \omega) \pi(\omega).
	}
	Note that it is possible to sample from $\pi$ in the following way. By the steps above it is clear that a sample from $\pi$ can be obtained by first sampling $t$ from a Poisson distribution with parameter $\sigma^2 d$ and then sampling $\omega_1, \dots, \omega_d$ from a multinomial distribution with probability $p_1 = \dots = p_d = 1/d$ and number of trials $t$.
	
	Finally note that $\psi(\omega, \cdot)$ is in $\hh$, the RKHS induced by the Gaussian kernel (to prove it apply Prop.~3.6 of \cite{steinwart2006explicit} with $b_\nu = \delta_{\nu = (\omega_1,\dots, \omega_d)}$ and note that $e_\nu$ is exactly a multiple of $\psi(\omega,\cdot)$). Additionally note that $\sup_{\omega, x} |\psi(\omega, x) | < \infty$ and moreover $\nor{\psi(\omega,\cdot)}_\hh < \infty$ for any $\omega \in \N^d$. However $\lim_{\|\omega\| \to \infty} \nor{\psi(\omega,\cdot)}_\hh = \infty$.

	\paragraph{Random Features Maps for Dot Product Kernels \cite{kar2012random,pham2013fast,hamid2014compact}}
	This approximation method is defined for the dot product kernels when $\X$ is the ball of $\R^d$ of radius $R$, with $R > 0$. The considered kernels are of the form $k(x,z) = v(x^\top z)$ for a $v:\R \to \R$ such that 
	$$ v(t) = \sum_{p = 0}^\infty c_p t^p, \quad \textrm{with } \quad c_p \geq 0\; \forall p \geq 0.$$ 
	\cite{kar2012random}, start from the consideration that when $w \in \{-1,1\}^d$ is a vector of $d$ independent random variables with probability at least $1/2$, then $\mathbb{E} \; ww^\top = I$. Thus
	$$\mathbb{E}_w \; (x^\top w)(z^\top w) = \mathbb{E} \; x^\top(ww^\top) z = x^\top \mathbb{E}(ww^\top) z = x^\top z,$$
	and $(x^\top z)^p$ can be approximated by $g(W_p, x)^\top g(W_p, z)$ with  $g(W_p,x) = \prod_{i=1}^p x^\top w_i$ and $W_p = (w_1,\dots,w_p) \in \{-1,1\}^{p \times d}$ a matrix of independent random variables with probability at least $1/2$. Indeed it holds,
	\eqals{
		{\mathbb E}_{W_p}\,g(W_p, x) g(W_p, z) &={\mathbb E}_{W_p}\,\prod_{i=1}^p \,(x^\top w_i)\prod_{i=1}^p(z^\top w_i) = {\mathbb E}_{W_p}\,\prod_{i=1}^p \,(x^\top w_i)(z^\top w_i) \\
		& = \prod_{i=1}^p \, {\mathbb E}_{w_i}\,(x^\top w_i)(z^\top w_i) = \prod_{i=1}^p \, x^\top z = (x^\top z )^p.
	}
	Therefore the idea is to define the following sample space $\Omega = \N_0 \times ( \bigcup_{p=1}^\infty T^p)$ with $T^p = \{-1,1\}^{p \times d}$, the probability $\pi$ on $\Omega$ as $\pi(p, W_q) = \pi_\N(p) \pi(W_q|p)$ for any $p,q \in \N_0,\, W_q \in T^q$ with $\pi_\N(p) = \frac{\tau^{-p-1}}{\tau - 1}$ for a $\tau > 1$ and $\pi(W_q|p) = 2^{-pd} \delta_{pq}$ and the function
	$$\psi(\omega, x) = \sqrt{c_p \tau^{p+1}(\tau-1)} g(W_p,x),\quad \forall \omega = (p,W_p) \in \Omega$$ where $w_i$ is the $i$-th row of $W_p$ with $1\leq i \leq p$. 
	Now note that Eq.~\eqref{eq:def-RF-integral} is satisfied, indeed for any $x,z \in \X$
	\eqals{
		\int_{\Omega} \psi(\omega, x)\psi(\omega, z) d\pi &= \int_{\N_0\times (\bigcup_{p=1}^\infty T^p)} c_p g(W_p,x)g(W_p,z) d p d\pi(W_q|p) \\
		& = \sum_{p=0}^\infty c_p \; {\mathbb E}_{W_p} g(W_p, x) g(W_p, z) \\
		& = \sum_{p=0}^\infty c_p \; (x^\top z)^p = v(x^\top z) = K(x,z).
	}
	Now note that Assumption~\ref{ass:kernel-bounded} is satisfied when $v(\tau R^2 d) < \infty$. Indeed, considering that $(x^\top w)^2 \leq \nor{x}^2\nor{w}^2 \leq R^2 d$ for any $x \in \X$ and $w \in \{-1,1\}^d$, we have
	\eqals{
		\sup_{\omega \in \Omega, x \in \X} |\psi(\omega,x)|^2 & = \sup_{p \in \N_0, W_p \in T^p, x \in \X} c_p \tau^{p+1}(\tau-1)  g_p(W_p,x)g_p(W_p,z) \\ 
		& = \sup_{p \in \N_0} c_p \tau^{p+1}(\tau-1)  \sup_{W_p \in T^p, x \in \X} \prod_{i=1}^p (x^\top w_i)^2 \\
		& \leq \sup_{p \in \N_0} c_p \tau^{p+1}(\tau-1)  (R^2 d)^p \leq \frac{\tau}{\tau - 1} \sum_{p = 0}^\infty c_p \tau^{p}  (R^2 d)^p \\
		& = \frac{\tau}{\tau - 1} v(\tau R^2 d).
	}
	\cite{pham2013fast,hamid2014compact} approximate the construction above by using randomized tensor sketching and Johnson-Lindenstrauss random projections. It can be shown that even their methods satisfy Assumption~\ref{ass:kernel-bounded} for an appropriate choice of $\Omega$ and the probability distribution $\pi$.
	
	\paragraph{Random Laplace Feature Maps for Semigroup invariant Kernels \cite{yang2014random}}
	The considered input space is $X = [0,\infty)^d$ and the considered kernels are of the form
	$$K(x,z) = v(x + z), \quad \forall x, z \in \X,$$
	and $v: \X \to \R$ is a function that is positive semidefinite. By Berg's theorem, it is equivalent to the fact that $\breve{v}$, the Laplace transform of $v$ is such that $\breve{v}(\omega) \geq 0$, for all $\omega \in \X$ and that $\int_\X \breve{v}(\omega) d\omega = V < \infty$.
	It means that we can express $K$ by Eq.~\eqref{eq:def-RF-integral}, where $\Omega = \X$, the feature map is $\psi(\omega,x) = \sqrt{V}e^{-\omega^\top x}$, for all $\omega \in \Omega, x \in \X$ and the probability density is $\pi(\omega) = \frac{\breve{v}(\omega)}{V}$. 
	Note that Assumption~\ref{ass:kernel-bounded} is satisfied. 
	
	\paragraph{Homogeneous additive Kernels \cite{vedaldi2012efficient}}
	In this work they focus on $\X = [0,1]^d$ and on additive homogeneous kernels, that are of the form 
	$$ K(x,z) = \sum_{i=1}^d k(x_i,z_i),\quad\forall x,z \in (\R^+)^d,$$
	where $k$ is an $\gamma$-homogeneous kernel, that is a kernel $k$ on $\R^+$ such that \;\; $$k(cs,ct) = c^\gamma k(s,t), \; \forall c,s,t \in \R^+.$$
	As pointed out in \cite{vedaldi2012efficient}, this family of kernels is particularly useful on histograms. Exploiting the homogeneous property, the kernel $k$ is rewritten as follows
	$$k(s,t) = (s t)^\frac{\gamma}{2} v(\log s - \log t),\;\;\forall s,t \in \R^+\quad \textrm{with} \quad v(r) = k(e^{r/2},e^{-r/2}),\;\; \forall r \in \R.$$ 
	Let $\hat{v}$ be the Fourier transform of $v$. In \cite{vedaldi2012efficient}, Lemma~1, they prove that $v$ is a positive definite function, that is equivalent, by the Bochner theorem, to the fact that $\hat{v}(\omega) \geq 0$, for $\omega \in \R$, and $\int \hat{v}(\omega)d\omega \leq V < \infty$. Therefore $k$ satisfies Eq.~\ref{eq:def-RF-integral} with $\Omega = \R \times [0,2\pi]$, a feature map $\psi_0((w,b),s) = \sqrt{V} s^\frac{\gamma}{2} \cos(b + w\omega \log s)$, with $(w, b) \in \Omega$ and a probability density $\pi_0$ defined as $\pi_0((w,b)) = \frac{\hat{v}(w)}{V}U(b)$ for all $\omega \in \Omega$, where $U$ is the uniform distribution on $[0, 2\pi]$. Now note that $K$ is expressed by Eq.~\ref{eq:def-RF-integral}, with the feature map $\psi(\omega, x) = (\psi_0(\omega, x_1),\dots,\psi_0(\omega, x_d))$ and probability density $\pi(\omega) = \prod_{i=1}^d \pi_0(\omega_i)$. In contrast with the previous examples, \cite{vedaldi2012efficient} suggest to select $\omega_1,\dots,\omega_m$ by using a deterministic approach, in order to achieve a better accuracy, with respect to the random sampling with respect to $\pi$.

	\paragraph{Infinite one-layer Neural Nets and Group Invariant Kernels \cite{cho2009}}
	In this work a generalization of the ReLU activation function for one layer neural networks is considered, that is
	$$\psi(\omega, x) = (\omega^\top x)^p {\bf 1}(\omega^\top x),\quad \forall \omega \in \Omega, x \in \X$$
	for a given $p \geq 0$, where ${\bf 1}(\cdot)$ is the step function and $\Omega = \X = \R^d$. In the paper is studied the kernel $K$, given by Eq.~\ref{eq:def-RF-integral} when the distribution $\pi$ is a zero mean, unit variance Gaussian. The kernel has the following form 
	$$K(x,z) := \frac{1}{\pi} \nor{x}^p \nor{z}^p J_p(\theta(x,z)),\quad \forall x,z \in \X.$$
	where $J_p$ is defined in Eq.~4 of \cite{cho2009} and $\theta(x,z)$ is the angle between the two vectors $x$ and $z$. Examples of $J_p$ are the following
	\eqals{
		J_0(\theta) & = \pi - \theta\\
		J_1(\theta) & = \sin \theta + (\pi-\theta) \cos \theta\\
		J_2(\theta) & = 3 \sin\theta \cos \theta + (\pi-\theta)(1+2\cos^2 \theta).
	}
	Note that when $\X$ is a bounded subset of $\R^d$ then Assumption~\ref{ass:kernel-bounded} is satisfied.
	Moreover in \cite{cho2009} it is shown that an infinite one-layer neural network with the ReLU activation function is equivalent to a kernel machine with kernel $K$. 
	The units of a finite one-layer NN are obviously a subset of the infinite one-layer NN. While in the context of Deep Learning the subset is chosen by optimization, in the paper it is proposed to find it by randomization, in particular by sampling the distribution $\pi$.

\end{document}